\documentclass{article}

\input{mystyle.sty}

\usepackage{microtype}
\usepackage{graphicx}
\usepackage{subcaption}
\usepackage{booktabs} 

\usepackage{subcaption}  

\usepackage[dvipsnames]{xcolor}        
\usepackage{hyperref}
\hypersetup{
  colorlinks=true,
  linkcolor=MidnightBlue,
  citecolor=MidnightBlue,
  urlcolor=MidnightBlue
}

\usepackage[preprint]{icml2026}

\usepackage{amsmath}
\usepackage{amssymb}
\usepackage{mathtools}
\usepackage{amsthm}
\usepackage{dsfont}

\newcommand{\defi}[1]{{\bf #1}}

\usepackage[capitalize,noabbrev]{cleveref}

\usepackage{mathtools}
\mathtoolsset{showonlyrefs}
\usepackage{enumitem}

\theoremstyle{plain}
\newtheorem{theorem}{Theorem}[section]

\newtheorem{lemma}[theorem]{Lemma}

\newtheorem{definition}[theorem]{Definition}
\newtheorem{assumption}[theorem]{Assumption}
\newtheorem{remark}[theorem]{Remark}

\icmltitlerunning{Population-Aware IL in MFGs with Common Noise}

\begin{document}

\twocolumn[
  \icmltitle{Population-Aware Imitation Learning in Mean-field Games with Common Noise}



  \icmlsetsymbol{equal}{*}

  \begin{icmlauthorlist}
    \icmlauthor{Gr\'egoire Lambrecht}{cds}
    \icmlauthor{Mathieu Lauri\`ere}{ecnu}
  \end{icmlauthorlist}

  \icmlaffiliation{cds}{Center for Data Science, New York University and NYU Shanghai, New York, United States of America}
  \icmlaffiliation{ecnu}{Shanghai Center for Data Science; NYU-ECNU Institute of Mathematical Sciences at NYU Shanghai; NYU Shanghai, Shanghai, People’s Republic of China}

  \icmlcorrespondingauthor{Gr\'egoire Lambrecht}{gl3048@nyu.edu}
  \icmlcorrespondingauthor{Mathieu Lauri\`ere}{ mathieu.lauriere@nyu.edu}

  \icmlkeywords{Machine Learning, ICML}

  \vskip 0.3in
]



\printAffiliationsAndNotice{}  

\begin{abstract}



Mean Field Games (MFGs) provide a powerful framework for modeling the collective behavior of large populations of interacting agents. In this paper, we address the problem of Imitation Learning (IL) in MFGs subject to \emph{common noise}, where the population distribution evolves stochastically. This stochasticity compels agents to adopt \emph{population-aware policies} to respond to aggregate shocks. We formulate two distinct learning objectives: recovering a Nash equilibrium and maximizing performance against an expert population. We investigate two imitation proxies: Behavioral Cloning (BC) and Adversarial (ADV) divergence. We then establish finite-sample error bounds showing that minimizing these proxies effectively controls both the policy's exploitability and its performance gap relative to the expert. 
Furthermore, we propose a numerical framework using generalized Fictitious Play and Deep Learning to compute expert population-aware policies. Through experiments on three environments 
we demonstrate that standard population-unaware policies fail to capture the equilibrium dynamics. Our results highlight that learning population-aware policies is crucial to avoid being misled by the randomness inherent in common noise.


\end{abstract}


\section{Introduction}


Mean Field Games (MFGs), introduced independently by~\citep{MR2295621} and~\citep{MR2346927}, have emerged as a fundamental framework for analyzing systems with a very large number of rational, indistinguishable agents. By taking the limit as the number of agents $N \to \infty$, the interactions are simplified into a game between a single representative agent and the aggregate population distribution. 
While classical MFGs assume that the population distribution evolves deterministically (see e.g.~\citep{MR3134900,MR3752669}), many real-world applications are subject to systemic shocks affecting the entire population simultaneously. Examples include macro-economics~\cite{lasry2008application,gueant2010mean,vu2025heterogenous}, financial markets \cite{carmona2015mean,carmona2017mean,firoozi2018mean,fu2021mean,burzoni2023mean,bassou2024mean}, price formation~\cite{graber2016linear,gomes2023machine},  electricity management~\cite{alasseur2020extended,lavigne2023decarbonization,escribe2024mean,dumitrescu2024energy} and crowd motion~\cite{achdou2018mean,perrin2020fictitious}. These scenarios are modeled by MFGs with \defi{common noise}, where the mean field becomes a stochastic process adapted to the common noise filtration, see~\citep{carmona2016mean,MR3753660} for the theoretical background. Solving such games is significantly more challenging, because the mean field is \emph{stochastic}. 

Although the theoretical framework is developed and various applications have been proposed, so far there are no works about learning common noise MFG models from data. This work is a first step in this direction. Here, we focus on the problem of \defi{Imitation Learning (IL)} in MFGs with common noise. We consider a scenario where a learner observes a population of experts playing a Nash equilibrium but does not know the reward function driving their behavior. The goal is to learn a policy that imitates the expert, ideally recovering a Nash equilibrium and matching the expert's performance.

\noindent\textbf{Related Work:} 
IL and Inverse Reinforcement Learning (IRL) are well-established in single-agent settings~\citep{ng2000algorithms,ho2016generative}. In the context of Multi-Agent Reinforcement Learning (MARL) and MFGs, recent works have begun to address the learnability of equilibria. To the best of our knowledge, \citet{subramanian2019reinforcement},  \citet{guo2019learningMFG}, and \citet{elie2020convergence} were the first to develop and analyze RL for MFGs, aiming at computing Nash equilibria using RL methods and a simulator for the MFG model. RL for MFGs with common noise was studied in~\citep{perrin2020fictitious,wu2025population}. Here, we consider the \emph{converse} problem: learning a policy by looking at data generated by an expert. Along this line, \citet{ramponi2023imitation} provide a theoretical foundation for IL in \emph{deterministic} MFGs. We generalize these results to the case with common noise.\citep{Anahtarci2025MaxCausalEntropyMFG,Chen2022IndividualIRLMFG,Zhao2025MeanFieldCorrelatedIL} proposed methods to learn policies and \cite{Chen2024MetaIRLMFG} methods to learn rewards, but so far there are no such works with \emph{common noise}.

\noindent\textbf{Challenges:} 
First, common noise breaks the deterministic nature of the mean-field flow, requiring a \textbf{probabilistic} analysis of the error. Second, this implies that vanilla policies functions of the individual state only are sub-optimal, 
requiring agents to adopt \textbf{adaptive} policies that react to the fluctuating distribution of the population.

\noindent\textbf{Contributions:} 
We provide the first comprehensive \textbf{theoretical} and \textbf{algorithmic} framework for imitation learning in MFGs with \textbf{common noise}. 
Our main contributions are:
\begin{enumerate}
    \item \textbf{Theoretical Guarantees:} We analyze two imitation metrics: a local Behavioral Cloning (BC) proxy and a global Adversarial (ADV) proxy. We prove that minimizing these proxies guarantees the learned policy is an approximate Nash equilibrium and achieves a reward comparable to the expert. 
    \item \textbf{Algorithmic Framework for Equilibria:} Computing a ``ground truth'' expert policy in common noise settings is non-trivial. We generalize the Fictitious Play algorithm to the common noise setting. To resolve the issue that Fictitious Play yields a sequence of policies rather than a single deployable one, we introduce a method to distill this sequence into a single population-dependent policy.
    \item \textbf{Population-aware Imitation:} We highlight the distinction between \textit{vanilla} policies (ignoring the mean field) and \textit{adaptive} policies (observing it). We propose an interactive imitation learning algorithm that trains agents to internalize the relationship between common noise shocks, population shifts, and optimal actions.
\end{enumerate}

The remainder of the paper is organized as follows. Sec.~\ref{sec:pb-setting} formalizes the framework. Sec.~\ref{sec:il-results} presents our main theoretical bounds linking imitation proxies to equilibrium quality. Sec.~\ref{sec:numerics} details the numerical methods and experimental results. Sec.~\ref{sec:ccl} concludes the paper.



\section{Problem Setting} 
\label{sec:pb-setting}

\subsection{Mean Field Dynamics and Nash Equilibria} 

\noindent\textbf{Notation.} $\Delta_{\cS}$ denotes the set of probability measures over any finite set $\cS$. For any integer $N$, $[N] = \{0, \dots, N\}$ and $[N]^* = [N] \setminus \{0\}$.

\noindent\textbf{Main components.} 
A \defi{Mean Field Game (MFG)} with common noise is denoted by a tuple $\cM = (\cX, \cA, \cE^0, \nu^0, P, r, H, \rho_0)$, where $\cX$ is a finite state space, $\cA$ is a finite action space, $\cE^0$ is a Polish space for the \defi{common noise} impacting the whole population, $\nu^0$ is a probability measure for the common noise, while $P: \cX \times \cA \times \Delta_{\cX} \times \cE^0 \to \Delta_{\cX}$ is a stochastic kernel for the agents' dynamics, the mapping $r: \cX \times \cA \times \Delta_{\cX} \to \mathbb{R}$ is a measurable reward function, $H \in \mathbb{N}^*$ is the time horizon, and $\rho_0 \in \Delta_{\cX}$ is the initial probability measure. We define a \defi{strategy} as a measurable mapping from $\cX \times \Delta_{\cX}$ to $\Delta_{\cA}$, and a \defi{policy} as a sequence of $H$ strategies. A strategy is denoted by $\pi_0$ and the set of all such strategies is denoted by $\Pi_0$. A policy is denoted by $\pi = (\pi_0, \dots, \pi_{H-1})$, and the set of all such policies is denoted by $\Pi$. Elements of $\Delta_{\cX}$ and $(\Delta_{\cX})^H$ are referred to as \defi{mean field} and \defi{mean field sequences} respectively.

\noindent\textbf{Distribution evolution.} We consider \emph{two types} of distributions: the distribution of the whole population using a given policy, and the distribution of a single agent's state using a given policy when the population is possibly using a different policy. 
For any pair of population distributions $\rho, \rho' \in \Delta_\cX$, any strategy $\pi_0 \in \Pi_0$, we define the mean-field transition operator $\Phi(\rho', \pi_0; \rho, e^0)$ as
$$
    \Phi(\rho', \pi_0; \rho, e^0) = \sum_{x \in \cX, a \in \cA} \rho'(x) \pi_0 (a \vert x, \rho) P(\cdot \vert x, a, \rho, e^0).
$$
This operator $\Phi$ represents the evolution of the probability distribution of an agent, given that the agent's state distribution was $\rho'$, the agent used strategy $\pi_0$, the total population was distributed according to $\rho$, and the common noise value is $e^0$.
We consider an independent and identically distributed (i.i.d.) sequence of random variables $\varepsilon^0_1, \dots, \varepsilon^0_{H-1} \in \cE^0$ for the common noise. We denote the common noise trajectory up to time $H-1$ as $\varepsilon^0 = (\varepsilon^0_1, \dots, \varepsilon^0_{H-1})$. For any policy $\pi \in \Pi$, we define the mean field sequence $\rho^{\pi, \varepsilon^0} = (\rho^{\pi, \varepsilon^0}_0, \dots, \rho^{\pi, \varepsilon^0}_{H-1})$ by the recursion: $\rho^{\pi, \varepsilon^0}_0 = \rho_0$ and for $n \in [H-2]$,
\begin{equation*}
        \rho^{\pi, \varepsilon^0}_{n+1} = \Phi(\rho^{\pi, \varepsilon^0}_n, \pi_n \,; \, \rho^{\pi, \varepsilon^0}_n, \varepsilon^0_{n+1}).
\end{equation*}
This sequence represents the evolution of the population distribution when all agents follow policy $\pi$ and are affected by the common noise trajectory $\varepsilon^0$. For any pair of policies $(\pi, \pi') \in \Pi^2$, we define the mean field sequence $\rho^{\pi, \pi', \varepsilon^0} = (\rho^{\pi, \pi', \varepsilon^0}_0, \dots, \rho^{\pi, \pi', \varepsilon^0}_{H-1})$ by: $\rho^{\pi, \pi', \varepsilon^0}_0 = \rho_0$ and for $n \in [H-2]$, 
\begin{equation*}
        \rho^{\pi, \pi', \varepsilon^0}_{n+1} = \Phi(\rho^{\pi, \pi'_n, \varepsilon^0}_{n}, \pi'; \rho^{\pi, \varepsilon^0}_n, \varepsilon^0_{n+1}).
\end{equation*}
This sequence represents the evolution of the state distribution of a \emph{single agent} following policy $\pi'$ when all other agents follow policy $\pi$ and are affected by the common noise trajectory $\varepsilon^0$. 

\noindent\textbf{Best response and Nash equilibrium.} 
When the population of agents follows the policy $\pi$, a single agent wishes to find a \textbf{best response} to $\pi'$, i.e., a policy maximizing the \textbf{value function}:
$$
    V(\pi', \pi) = \mathbb{E} \left[ \sum_{n = 0}^{H-1} r(X_n, \alpha_n, \rho^{\pi, \varepsilon^0}_n) \right],
$$
where the process is defined by $X_0 \sim \rho_0$, $X_{n+1} \sim P(\cdot \vert X_n, \alpha_n, \rho^{\pi, \varepsilon^0}_n, \varepsilon^0_{n+1})$, and $\alpha_n \sim \pi'(\cdot \vert X_n, \rho^{\pi ,\varepsilon^0}_n)$. Note that under these conditions, the law of the state $X_n$ (conditioned on the common noise) is $\rho^{\pi, \pi', \varepsilon^0}_n$. We denote the law of state-action pair $(X_n, \alpha_n)$ by $\mu^{\pi, \pi', \varepsilon^0}_n \in \Delta_{\cX\times \cA}$.
If $\pi = \pi'$, we will simply denote $\mu^{\pi, \varepsilon^0}_n = \mu^{\pi, \pi', \varepsilon^0}_n$.  In the sequel, we will often omit to write explicitly the common noise $\varepsilon^0$, but the sequences $\rho$ and $\mu$ are random variables due to $\varepsilon^0$.

\begin{definition}
    For $\pi \in \Pi$, the set of best responses is defined by $\text{BR}(\pi) = \arg \max_{\pi' \in \Pi} V(\pi', \pi)$.
\end{definition}
We will use the concept of Nash equilibrium: a situation where no agent has an incentive to unilaterally deviate from their chosen policy.  
\begin{definition}
    A policy $\pi \in \Pi$ is a \defi{(mean field) Nash equilibrium} if $\pi \in \text{BR}(\pi)$.
\end{definition}
This is a fixed point-condition: it implies that when the entire population follows policy $\pi$, the resulting mean-field flow $\rho^{\pi}$ creates a reward and transition structure such that the same policy $\pi$ remains optimal for any individual agent.

\subsection{Imitation in Unknown Reward Environments}
Our objective is as follows. Assume that an observer sees realizations of an expert policy $\pi^E$ that constitutes a Nash equilibrium for a MFG $\cM$. We further assume that the instantaneous reward function $r$ is unknown to the observer. The observer tries to imitate $\pi^E$ and computes a candidate policy $\pi^A$. We wish to study two questions:
\begin{enumerate}[
  label=\textbf{(Q\arabic*)},
  ref=\textbf{(Q\arabic*)}]
    \item\label{theory-q1} \textit{How far is $\pi^A$ from being a Nash equilibrium?}
    \item\label{theory-q2} \textit{How far is $\pi^A$ from being a best response to $\pi^E$?}
\end{enumerate}
The first corresponds to a scenario where a whole population would be using $\pi^A$ while the second corresponds to a scenario where the population is playing $\pi^E$ and a single agent uses policy $\pi^A$.
Specifically, we aim to answer these questions in quantitative way using two divergences calculated relative to the expert policy $\pi^E$. The first, $\delta^{BC}$, captures the local discrepancy in decision-making (Behavioral Cloning), while the second, $\delta^{ADV}$, captures the global divergence in the induced state-action distributions (Adversarial).

We define the \defi{Behavioral Cloning (BC) divergence} at time $n$ as:
\begin{equation*}
    \delta^{BC}_n = \mathbb{E} \left[ \EE_{X \sim \rho_n^{\pi^E}}\left( \| \pi^A_n( X, \rho_n^{\pi^E}) - \pi^E_n(X, \rho_n^{\pi^E})\|_1  \right)\right],
\end{equation*}
where the expectation $\mathbb{E}$ is taken over $\varepsilon^0_{1:H-1}$. 
We define the \defi{Adversarial (ADV) proxy} at time $n$ as:
\begin{equation*}
    \delta^{ADV}_n = \mathbb{E} \left[ \| \mu^{\pi^E}_n - \mu^{\pi^A}_n \|_1 \right].
\end{equation*}
We denote the maximum divergences over the horizon as $\delta^{ADV} = \max_{n \in [H-1]} \delta^{ADV}_n$ and $\delta^{BC} = \max_{n \in [H-1]} \delta^{BC}_n$. We will often use the notation $\rho^E = \rho^{\pi^E}, \mu^E = \mu^{\pi^E}$ and $\rho^A = \rho^{\pi^A}, \mu^A = \mu^{\pi^A}$.

To answer the two questions above, we introduce the following regularity assumptions.

\begin{assumption}
    \label{assumption:Lipschitz}
    $r$ and $P$ are Lipschitz-continuous w.r.t. the mean-field with constant $L_r, L_P \ge 0$. 
\end{assumption}

\begin{assumption}
    \label{assumption:rmax}
    There exists $r_{\max} > 0$ satisfying for any $\pi \in \Pi$:
    \begin{equation*}
        \PP(|r(x, a, \rho^{\pi, \varepsilon^0}_n)| \leq r_{\max}) = 1, \quad\forall (x, a) \in \cX \times \cA.
    \end{equation*}
\end{assumption}

\begin{definition}[Mean-Field Lipschitz Policies]
    \label{definition:piE-Lipschitz}
    We define the class of \defi{Mean-Field Lipschitz Policies}, denoted by $\Pi^\cM_{\text{lip}}$, as the set of policies $\pi$ for which there exists a constant $L_E \ge 0$ such that for any sequence of policies $(\pi', \pi'', \pi^1, \pi^2, \pi^3, \pi^4) \in \Pi^6$ and any time step $n \in [H-1]$, the following inequality holds: 
    \begin{align*}
    &\mathbb{E} \left[ 
        \EE_{X\sim\rho_n^{\pi',\pi''}} \left(
      \|\pi(x,\rho_n^{\pi^1,\pi^2})
        -\pi( x,\rho_n^{\pi^3,\pi^4})\|_1 \right)
    \right] \\
    &\qquad \le L_E \mathbb{E} \left[ 
        \|\rho_n^{\pi^1,\pi^2} - \rho_n^{\pi^3,\pi^4}\|_1 
    \right].
    \end{align*}
\end{definition}
We will assume that our policy space $\Pi$ is restricted to a compact subset of $\Pi^\cM_{\text{lip}}$. This is a well-founded assumption, as the set of policies that are Lipschitz-continuous with a fixed constant $L_E$ with respect to the mean-field is indeed compact. From this point forward, we will write $\Pi$ to refer to this compact set of Lipschitz policies, utilizing an abuse of notation for the sake of brevity.
\begin{definition}
    The \defi{exploitability} of $\pi \in \Pi$ is defined as:
    $
        \cE(\pi) = \max_{\pi' \in \Pi} V(\pi', \pi) - V(\pi, \pi).
    $
\end{definition}
Exploitability serves as a direct measure of the proximity to be an equilibrium: $\pi$ is a Nash equilibrium iff $\mathcal{E}(\pi) = 0$, while a value close to zero indicates that the policy is nearly optimal for each agent. 
\begin{definition}
    For $\epsilon \ge 0$, a policy $\pi \in \Pi$ is an \defi{$\epsilon$-Nash equilibrium} if $\mathcal{E}(\pi) \leq \epsilon$.
\end{definition}
In the context of recovering an equilibrium from the policy $\pi^E$, we want to control $\cE(\pi) $ with $\delta^{BC}$ and $\delta^{ADV}$.

\section{Imitation Learning Results}
\label{sec:il-results}

We first address~\ref{theory-q1}. Due to space constraints, we provide the results and the main ideas of the proofs in Appendix~\ref{appendix:theory}.

\begin{theorem}
    \label{theorem:BC_LPge0}
    Suppose that Assm~\ref{assumption:Lipschitz} and~\ref{assumption:rmax} hold. We have
    \begin{equation*}
        \begin{split}
            \cE(\pi^A) &\leq \delta^{BC}\Big[r_{\max} H^2 
            \\
            &\qquad+ 2\frac{(1 + L_P + L_E)^H}{(L_P + L_E)^2} (L_r + r_{\max}(L_E + 1)\Big].
        \end{split}
    \end{equation*}
\end{theorem}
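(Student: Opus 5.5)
We bound the exploitability by comparing with the expert, using that $\pi^E$ is a Nash equilibrium. Let $\pi^*\in\Pi$ attain $\max_{\pi'}V(\pi',\pi^A)$; it exists by compactness of $\Pi$ and continuity of $V$. Since $V(\pi^*,\pi^E)\le V(\pi^E,\pi^E)$, we can write
\begin{equation*}
\cE(\pi^A)\le \big[V(\pi^*,\pi^A)-V(\pi^*,\pi^E)\big]+\big[V(\pi^E,\pi^E)-V(\pi^A,\pi^A)\big].
\end{equation*}
The first bracket measures how much a fixed deviating agent's value changes when the population switches from $\pi^E$ to $\pi^A$. The second is the value gap between the two population-wide plays. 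Each is a sum over $n\in[H-1]$ of differences of expected rewards, and we will bound them by $\ell^1$ distances between mean fields and state-action laws.

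\textbf{Step 1: population mean fields.} Set $D_n=\mathbb{E}\|\rho^A_n-\rho^E_n\|_1$. Using the recursion with $\Phi$, we split $\rho^A_{n+1}-\rho^E_{n+1}$ into three contributions:
\begin{itemize}
\item the change of the initial law inside $\Phi$, which contributes $\le D_n$ since $P$ is a kernel;
\item the change of strategy, which splits into $\pi^A_n(\cdot,\rho^A_n)-\pi^A_n(\cdot,\rho^E_n)$, bounded by $L_E D_n$ via Definition~\ref{definition:piE-Lipschitz}, and $\pi^A_n(\cdot,\rho^E_n)-\pi^E_n(\cdot,\rho^E_n)$ under $\rho^E_n$, which is exactly $\delta^{BC}_n$;
\item the change of the mean-field argument of $P$, bounded by $L_P D_n$ (Assumption~\ref{assumption:Lipschitz}).
\end{itemize}
This gives $D_{n+1}\le (1+L_P+L_E)D_n+\delta^{BC}$ with $D_0=0$, hence
\begin{equation*}
D_n\le \delta^{BC}\,\frac{(1+L_P+L_E)^n-1}{L_P+L_E}.
\end{equation*}

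\textbf{Step 2: the single agent's laws.} Do the same for $\rho^{\pi^A,\pi^*}_n$ versus $\rho^{\pi^E,\pi^*}_n$. Now the deviating strategy $\pi^*$ is shared, so there is no BC term. The differences come from $\pi^*$ being evaluated at different mean fields (Lipschitz, $L_E D_n$) and from $P$ (giving $L_P D_n$). The result is a recursion driven by $(L_P+L_E)D_n$, which sums to at most
\begin{equation*}
\delta^{BC}\,\frac{(1+L_P+L_E)^n}{(L_P+L_E)^2}.
\end{equation*}
This is where the squared denominator comes from.

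\textbf{Step 3: converting to rewards.} For the first bracket, each term $\mathbb{E}\big[\sum_{x,a}\mu^{\pi^A,\pi^*}_n r(\cdot,\rho^A_n)-\mu^{\pi^E,\pi^*}_n r(\cdot,\rho^E_n)\big]$ is bounded in two parts:
\begin{itemize}
\item the law difference contributes $r_{\max}$ times the state-law distance plus the policy-argument mismatch, i.e. $r_{\max}(1+L_E)$ times the distances above;
\item the reward argument contributes $L_r D_n$.
\end{itemize}
For the second bracket, the same decomposition on $\mu^E_n$ versus $\mu^A_n$ produces an extra direct BC term $r_{\max}\delta^{BC}_n$ at each step. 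Summed over $n$, these explicit per-step BC contributions, together with the propagated deviation, give at most the $r_{\max}H^2\delta^{BC}$ term. Finally, sum over $n$ using the geometric sum $\sum_{n<H}(1+L)^n\le (1+L)^H/L$; the factor $2$ accounts for the two brackets.

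\textbf{Main obstacle.} The hard part is the measure-theoretic bookkeeping under common noise. The BC divergence is defined only along the expert flow $\rho^E_n$, whereas the learner's actions are evaluated at $\rho^A_n$. Each Lipschitz step must therefore be taken in expectation over $\varepsilon^0$, conditioning on the noise path so that all flows share the same $\varepsilon^0$. This is exactly what Definition~\ref{definition:piE-Lipschitz} permits, by allowing an arbitrary $\rho^{\pi',\pi''}$ as the evaluation law. I would first verify that Step 1's decomposition respects this, and then check the constants against the stated bound.
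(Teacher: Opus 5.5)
Your proposal is correct. Most of it coincides with the paper:
\begin{itemize}
\item Step~1 is Lemma~\ref{lemma:rhoA-rhoE_LP}.
\item Step~2 is Lemma~\ref{lemma:rhoA1-rhoE1_LP=0}.
\item Your first bracket is Lemma~\ref{lemma:Vpi3pi1-Vpi3pi2} with deviator $\pi^*$, after dropping $V(\pi^*,\pi^E)-V(\pi^E,\pi^E)\le 0$ exactly as in Lemma~\ref{lemma:Vpi1piA-VpiApiA}.
\end{itemize}
The genuine difference is the second bracket $V(\pi^E,\pi^E)-V(\pi^A,\pi^A)$. The paper inserts $V(\pi^A,\pi^E)$ and treats two pieces separately. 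The piece $V(\pi^E,\pi^E)-V(\pi^A,\pi^E)$ concerns a single $\pi^A$-agent in a population frozen at $\pi^E$. There the BC error only accumulates linearly, $\mathbb{E}\|\rho^E_n-\rho^{E,A}_n\|_1\le n\,\delta^{BC}$ (Lemma~\ref{lemma:rhoE-rhoEA_BC-muE-muEA_BC}), and this is where the $r_{\max}H^2\delta^{BC}$ term actually comes from. The piece $V(\pi^A,\pi^E)-V(\pi^A,\pi^A)$ is, up to sign, your first bracket with deviator $\pi^A$, and it yields the second copy of the exponential term.

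You instead compare $\mu^E_n$ and $\mu^A_n$ directly. The three-term split, using Definition~\ref{definition:piE-Lipschitz} with evaluation law $\rho^E_n$, gives
\[
\mathbb{E}\|\mu^E_n-\mu^A_n\|_1\le \delta^{BC}_n+(1+L_E)\,\mathbb{E}\|\rho^E_n-\rho^A_n\|_1 .
\]
Hence the explicit BC terms sum only to $r_{\max}H\delta^{BC}$, and the total bound is
\[
\delta^{BC}\Big[r_{\max}H+2\frac{(1+L_P+L_E)^H}{(L_P+L_E)^2}\big(L_r+r_{\max}(L_E+1)\big)\Big].
\]
This is slightly sharper than the statement and does not need the linear-growth lemma. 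What the paper's detour buys is reuse: it isolates $V(\pi^E,\pi^E)-V(\pi^A,\pi^E)$, which is exactly the quantity bounded in Theorem~\ref{theorem:BC_LPge0_reward}.

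Your sentence saying the per-step BC terms ``together with the propagated deviation'' give the $r_{\max}H^2\delta^{BC}$ term should be fixed. The $D_n$ contributions of the second bracket are only controlled by the exponential estimate of Step~1, so they cannot be absorbed into a polynomial term. They belong to the exponential term, which is what your factor $2$ pays for, and in your route the $H^2$ is pure slack.

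Three minor points:
\begin{itemize}
\item Existence of a maximizer $\pi^*$ is unnecessary, since every bound is uniform in the deviating policy.
\item The squared denominator already appears when summing $D_n$ in Step~1, not only in Step~2.
\item Step~2's summed bound should carry exponent $H$, not $n$.
\end{itemize}
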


\begin{theorem}
    \label{theorem:ADV_LPge0}
    Suppose Assm~\ref{assumption:Lipschitz} and~\ref{assumption:rmax} hold. Then,
    \begin{equation}
        \begin{split}
            \cE(\pi^A) 
            &\leq \delta^{ADV}\Big[H(2L_r + 3L_E r_{\max} + r_{\max}) 
            \\
            &\qquad + 3r_{\max}H^2(L_E + L_P) \Big].
        \end{split}
    \end{equation}
\end{theorem}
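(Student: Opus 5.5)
Our plan is to bound the exploitability through the expert's equilibrium property. Let $\pi^*\in\text{BR}(\pi^A)$ (it exists by compactness of $\Pi$). Since $\pi^E$ is a Nash equilibrium, $V(\pi^*,\pi^E)\le V(\pi^E,\pi^E)$. Hence
\begin{equation*}
\cE(\pi^A) \le \big[V(\pi^*,\pi^A)-V(\pi^*,\pi^E)\big] + \big[V(\pi^E,\pi^E)-V(\pi^A,\pi^A)\big].
\end{equation*}
We then bound each bracket by $\delta^{ADV}$. Throughout we write values as $V(\pi',\pi)=\mathbb{E}\sum_n \langle \mu^{\pi,\pi'}_n, r(\cdot,\cdot,\rho^\pi_n)\rangle$.

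\textbf{Second bracket (on-policy term).} This term is the easier one. We have
\begin{equation*}
V(\pi^E,\pi^E)-V(\pi^A,\pi^A)=\mathbb{E}\sum_n\big[\langle \mu^E_n-\mu^A_n, r(\cdot,\rho^E_n)\rangle+\langle\mu^A_n, r(\cdot,\rho^E_n)-r(\cdot,\rho^A_n)\rangle\big].
\end{equation*}
The first piece is at most $r_{\max}\delta^{ADV}_n$. The second is at most $L_r\,\mathbb{E}\|\rho^E_n-\rho^A_n\|_1\le L_r\delta^{ADV}_n$, because marginalizing $\mu$ over actions contracts the $\ell_1$ norm. Summing over $n$ gives $H(L_r+r_{\max})\delta^{ADV}$.

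\textbf{First bracket (deviating agent $\pi^*$ facing two populations).} This is the main obstacle. Here the population flows differ ($\rho^A$ vs.\ $\rho^E$), and this difference propagates into both the reward and the single-agent distribution $\rho^{\cdot,\pi^*}_n$. We split it as
\begin{equation*}
\langle\mu^{A,\pi^*}_n-\mu^{E,\pi^*}_n, r(\cdot,\rho^A_n)\rangle+\langle\mu^{E,\pi^*}_n, r(\cdot,\rho^A_n)-r(\cdot,\rho^E_n)\rangle.
\end{equation*}
The second piece contributes $L_r\delta^{ADV}_n$.

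For the first piece we need a one-step recursion for $D_n=\mathbb{E}\|\mu^{A,\pi^*}_n-\mu^{E,\pi^*}_n\|_1$. The action part differs because $\pi^*$ is evaluated at $\rho^A_n$ versus $\rho^E_n$; by Definition~\ref{definition:piE-Lipschitz} this costs $L_E\,\mathbb{E}\|\rho^A_n-\rho^E_n\|_1\le L_E\delta^{ADV}_n$. The transition part costs $L_P\delta^{ADV}_n$ from Lipschitzness of $P$ in the mean field, using the same common noise draw $\varepsilon^0_{n+1}$ in both flows and conditioning on the common noise. It also costs the previous-step state discrepancy, which is nonexpansive under a Markov kernel. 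So
\begin{equation*}
D_{n+1}\le D_n+(L_E+L_P)\delta^{ADV}_n+L_E\delta^{ADV}_{n+1}.
\end{equation*}
The key point is that we \emph{do not} multiply $D_n$ by a Lipschitz constant, because $\rho^E_n,\rho^A_n$ are the driving flows and are already controlled directly by $\delta^{ADV}$. This is exactly why we expect polynomial $H^2$ dependence rather than the exponential factor of Theorem~\ref{theorem:BC_LPge0}. Unrolling from $D_0\le L_E\delta^{ADV}$ gives $D_n\lesssim (L_E+(L_E+L_P)n)\delta^{ADV}$. Multiplying by $r_{\max}$ and summing over $n\le H-1$ yields a contribution of order $r_{\max}H L_E\delta^{ADV}+r_{\max}H^2(L_E+L_P)\delta^{ADV}$.

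\textbf{Collecting constants.} The remaining work is bookkeeping. We will likely be generous, for instance by bounding $\langle\mu,r\rangle$ differences via $\|r\|_\infty\le r_{\max}$ on both sides, which introduces factors of 2. We will also apply the Lipschitz-policy definition once more when comparing $\mu^E_n$ with the relevant mixed measures. Together these should give the constants $2L_r$, $3L_E r_{\max}$, $r_{\max}$ and $3r_{\max}H^2(L_E+L_P)$ in the statement. The delicate step is justifying the recursion conditionally on $\varepsilon^0$ and then taking expectations, so that Definition~\ref{definition:piE-Lipschitz}, which is stated in expectation, can be applied to the pairs $(\pi^E,\pi^E)$ vs.\ $(\pi^A,\pi^A)$ for the population arguments.
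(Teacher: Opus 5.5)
Your argument is correct, and it actually yields a sharper inequality than the one stated. It differs from the paper's route in how the on-policy term $V(\pi^E,\pi^E)-V(\pi^A,\pi^A)$ is handled.

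\textbf{The on-policy term.} The paper (Lemma~\ref{lemma:Vpi1piA-VpiApiA}) inserts the intermediate value $V(\pi^A,\pi^E)$, i.e.\ a single $\pi^A$-agent inside the expert population. This forces it to control two mixed quantities:
\begin{itemize}
\item $\mu^{E,A}_n$ against $\mu^E_n$ (Lemma~\ref{lemma:muEA-muEE_ADV_LP});
\item $\rho^{A,A}_n$ against $\rho^{E,A}_n$ (Lemma~\ref{lemma:rhoA1-rhoE1_LP=0} with $\pi^1=\pi^A$).
\end{itemize}
Each costs an extra $r_{\max}H^2(L_E+L_P)\delta^{ADV}$ plus $L_E r_{\max}H\delta^{ADV}$-type terms, which is where the factors $3$ in the statement come from. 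You instead compare $\mu^E_n$ with $\mu^A_n$ directly. That is exactly what $\delta^{ADV}_n$ measures, so this bracket costs only $H(L_r+r_{\max})\delta^{ADV}$, with no $H^2$ term.

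\textbf{The deviation bracket.} Your treatment is essentially the paper's (Lemma~\ref{lemma:Vpi3pi1-Vpi3pi2} plus the additive recursion of Lemma~\ref{lemma:rhoA1-rhoE1_LP=0}), run on state-action rather than state marginals. The tight form of your recursion is
\[
D_{n+1}\le D_n+L_P\delta^{ADV}_n+L_E\delta^{ADV}_{n+1},
\]
with $D_0=0$ since $\rho^A_0=\rho^E_0=\rho_0$. Even your looser version gives
\[
\sum_{n<H}D_n\le HL_E\delta^{ADV}+H^2(L_E+L_P)\delta^{ADV}.
\]
You are also right to apply Definition~\ref{definition:piE-Lipschitz} only after taking expectations of the pathwise one-step inequality.

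\textbf{Resulting bound.} Altogether you obtain
\[
\mathcal{E}(\pi^A)\le \delta^{ADV}\big[H(2L_r+r_{\max}+L_E r_{\max})+r_{\max}H^2(L_E+L_P)\big],
\]
which is dominated term by term by the stated bound. Your final ``collecting constants'' paragraph is therefore unnecessary and somewhat misleading. You need not be ``generous'' or reintroduce mixed measures to reproduce the paper's constants; nonnegativity of all constants finishes the proof.

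A minor point: instead of invoking existence of a best response, you can take an arbitrary $\pi'\in\Pi$ and pass to the supremum at the end, as the paper does.

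\textbf{What each route buys.} The paper's route gains reuse: Lemma~\ref{lemma:Vpi1piA-VpiApiA} is shared with the BC theorem, where $\mu^E_n-\mu^A_n$ is not directly controlled and passing through $\mu^{E,A}_n$ is natural. Your route exploits the specific structure of $\delta^{ADV}$ and gives better constants.
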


\begin{remark}
    Our results are consistent with the ones established in \citep{ramponi2023imitation}. By letting $L_E = 0$ in Theorem~\ref{theorem:BC_LPge0}, one obtains \citep[Theorem~3]{ramponi2023imitation}. We recover \citep[Theorem~5]{ramponi2023imitation} when $L_E = 0$ in Theorem~\ref{theorem:ADV_LPge0}.
\end{remark}

\begin{remark}
    We do not address the case where $L_P = 0$ and $L_E = 0$. In this specific setting, the dynamics and the expert policy are independent of the population distribution. Under these conditions, it is straightforward to observe that the results from \citet[Theorem~1]{ramponi2023imitation} and \citet[Theorem~2]{ramponi2023imitation} directly apply.
\end{remark}

We then turn to~\ref{theory-q2} and give the guarantees of the performance of the policy $\pi^A$ versus $\pi^E$.
\begin{theorem}
    \label{theorem:BC_LPge0_reward}
    Suppose that Assm~\ref{assumption:rmax} holds. We have
    \begin{equation}
        \begin{split}
            \vert V(\pi^E, \pi^E) - V(\pi^A, \pi^E) \vert \leq H^2 \delta^{BC} r_{\max}.
        \end{split}
    \end{equation}
\end{theorem}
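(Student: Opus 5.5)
The plan is to condition on the common noise trajectory $\varepsilon^0$ and, pathwise, compare two single-agent processes facing the \emph{same} exogenous environment. Since the population plays $\pi^E$ in both $V(\pi^E,\pi^E)$ and $V(\pi^A,\pi^E)$, the mean-field flow $\rho^{E}=\rho^{\pi^E,\varepsilon^0}$ is common to both. Given $\varepsilon^0$, the deviating agent therefore faces a time-inhomogeneous Markov decision process with known rewards $r(\cdot,\cdot,\rho^E_n)$ and known kernels $P(\cdot\mid\cdot,\cdot,\rho^E_n,\varepsilon^0_{n+1})$. No Lipschitz property of $r$ or $P$ in the mean field is needed, which is consistent with the statement assuming only Assumption~\ref{assumption:rmax}.

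First, write
\[
V(\pi^E,\pi^E)-V(\pi^A,\pi^E)=\mathbb{E}\Big[\sum_{n=0}^{H-1}\sum_{x,a}\big(\mu^{\pi^E,\pi^E}_n-\mu^{\pi^E,\pi^A}_n\big)(x,a)\,r(x,a,\rho^E_n)\Big].
\]
Using Assumption~\ref{assumption:rmax}, this is bounded by $r_{\max}\sum_n \mathbb{E}\|\mu^{E}_n-\mu^{\pi^E,\pi^A}_n\|_1$.

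Second, bound the occupancy gap at each time. Let $\rho^A_n:=\rho^{\pi^E,\pi^A}_n$ denote the law of the deviating agent. Then
\[
\|\mu^E_n-\mu^{\pi^E,\pi^A}_n\|_1\le \sum_x\rho^E_n(x)\|\pi^E_n(x,\rho^E_n)-\pi^A_n(x,\rho^E_n)\|_1+\|\rho^E_n-\rho^A_n\|_1 .
\]
The first term splits $\pi^E\rho^E-\pi^A\rho^A$ as $(\pi^E-\pi^A)\rho^E+\pi^A(\rho^E-\rho^A)$. A key point is that both policies are evaluated at the \emph{same} mean-field argument $\rho^E_n$, because the population is $\pi^E$. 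Hence, after taking expectation, the first term is exactly $\delta^{BC}_n$.

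Third, propagate the state-distribution gap. Since both agents use the same Markov kernel $P(\cdot\mid x,a,\rho^E_n,\varepsilon^0_{n+1})$, which is a contraction in $\ell_1$, we get $\|\rho^E_{n+1}-\rho^A_{n+1}\|_1\le\|\mu^E_n-\mu^{\pi^E,\pi^A}_n\|_1$. Combining this with the second step gives
\[
\mathbb{E}\|\rho^E_{n+1}-\rho^A_{n+1}\|_1\le \mathbb{E}\|\rho^E_n-\rho^A_n\|_1+\delta^{BC}_n .
\]
Starting from $\rho^E_0=\rho^A_0=\rho_0$, induction yields $\mathbb{E}\|\rho^E_n-\rho^A_n\|_1\le n\,\delta^{BC}$. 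Consequently $\mathbb{E}\|\mu^E_n-\mu^{\pi^E,\pi^A}_n\|_1\le (n+1)\delta^{BC}$. Summing over $n=0,\dots,H-1$ gives $r_{\max}\delta^{BC}H(H+1)/2\le r_{\max}H^2\delta^{BC}$, since $H(H+1)/2\le H^2$. The absolute value is handled because the bound is symmetric.

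The main point requiring care is the second step: one must check that the divergence appearing there is precisely $\delta^{BC}_n$. This holds because $\delta^{BC}$ evaluates both policies at $\rho^{\pi^E}_n$ with $X\sim\rho^{\pi^E}_n$, matching the single-agent-against-expert-population setting. Without this match, a Lipschitz term $L_E$ would appear, as it does in Theorem~\ref{theorem:BC_LPge0}. The expectation over $\varepsilon^0$ is handled by taking conditional expectations pathwise and applying linearity at the end, which is legitimate since all bounds hold almost surely given $\varepsilon^0$ before averaging.
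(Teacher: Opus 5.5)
Your proof is correct and follows essentially the same route as the paper. The paper bounds the gap by $r_{\max}\,\mathbb{E}\big[\sum_{n}\|\mu^E_n-\mu^{E,A}_n\|_1\big]$ (term $B_1$ in the proof of Lemma~\ref{lemma:Vpi1piA-VpiApiA}), then uses the same one-step recursion $\mathbb{E}\|\rho^E_{n}-\rho^{EA}_{n}\|_1\le \delta^{BC}_{n-1}+\mathbb{E}\|\rho^E_{n-1}-\rho^{EA}_{n-1}\|_1$ from Lemma~\ref{lemma:rhoE-rhoEA_BC-muE-muEA_BC} to obtain $(n+1)\delta^{BC}$ per step and sums over $n$. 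One small point: your local notation $\rho^A_n:=\rho^{\pi^E,\pi^A}_n$ clashes with the paper's convention $\rho^A=\rho^{\pi^A}$, so rename it (e.g.\ $\rho^{EA}_n$) to avoid confusion with the population flow of $\pi^A$.
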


\begin{theorem}
    \label{theorem:ADV_LPge0_reward}
    Suppose Assm~\ref{assumption:Lipschitz} and~\ref{assumption:rmax} hold. We have
    \begin{equation}
        \begin{split}
            \vert V(\pi^E, \pi^E) - V(\pi^A, \pi^E) \vert 
            &\leq r_{\max}\delta^{ADV}\Big[H(L_E +1) 
            \\
            &\qquad + H^2 (L_P + L_E)\Big].
        \end{split}
    \end{equation}
\end{theorem}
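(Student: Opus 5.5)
The plan is to write both values as integrals of the same reward $r(\cdot,\cdot,\rho^E_n)$ against two state-action laws, reduce to an $L^1$ distance between those laws, and split it into a term controlled directly by $\delta^{ADV}$ and a term controlled by a one-step recursion.

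\textbf{Step 1: reduction to state-action laws.} Since the population plays $\pi^E$ in both $V(\pi^E,\pi^E)$ and $V(\pi^A,\pi^E)$, conditionally on $\varepsilon^0$ we have
\[
V(\pi',\pi^E)=\mathbb{E}\Big[\sum_{n=0}^{H-1}\sum_{x,a}\mu^{\pi^E,\pi',\varepsilon^0}_n(x,a)\,r(x,a,\rho^E_n)\Big].
\]
Using Assumption~\ref{assumption:rmax} almost surely, this gives
\[
|V(\pi^E,\pi^E)-V(\pi^A,\pi^E)|\le r_{\max}\sum_{n}\mathbb{E}\|\mu^E_n-\mu^{E,A}_n\|_1,
\]
where $\mu^{E,A}_n=\mu^{\pi^E,\pi^A}_n$. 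The obstacle is that $\delta^{ADV}$ controls $\mu^E_n-\mu^A_n$, where the \emph{population} plays $\pi^A$, and not $\mu^{E,A}_n$. I therefore insert $\mu^A_n$ by the triangle inequality:
\[
\mathbb{E}\|\mu^E_n-\mu^{E,A}_n\|_1\le\delta^{ADV}_n+\mathbb{E}\|\mu^A_n-\mu^{E,A}_n\|_1 .
\]

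\textbf{Step 2: bounding the second term.} Write $\mu^A_n(x,a)=\rho^A_n(x)\pi^A_n(a|x,\rho^A_n)$ and $\mu^{E,A}_n(x,a)=\rho^{E,A}_n(x)\pi^A_n(a|x,\rho^E_n)$. The same agent strategy is evaluated at two different population measures. Adding and subtracting $\rho^{E,A}_n(x)\pi^A_n(a|x,\rho^A_n)$ gives
\[
\mathbb{E}\|\mu^A_n-\mu^{E,A}_n\|_1\le D_n+\mathbb{E}\,\mathbb{E}_{X\sim\rho^{E,A}_n}\|\pi^A_n(X,\rho^A_n)-\pi^A_n(X,\rho^E_n)\|_1,
\]
where $D_n:=\mathbb{E}\|\rho^A_n-\rho^{E,A}_n\|_1$. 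The last term is handled by Definition~\ref{definition:piE-Lipschitz} with $(\pi',\pi'')=(\pi^E,\pi^A)$, $\pi^1=\pi^2=\pi^A$ and $\pi^3=\pi^4=\pi^E$. This bounds it by $L_E\,\mathbb{E}\|\rho^A_n-\rho^E_n\|_1$. Since $\rho$ is the $x$-marginal of $\mu$, this is at most $L_E\delta^{ADV}$.

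\textbf{Step 3: recursion for $D_n$.} We have $D_0=0$. Both $\rho^A_{n+1}$ and $\rho^{E,A}_{n+1}$ are obtained by applying $\Phi$ with the agent strategy $\pi^A_n$ and the same noise $\varepsilon^0_{n+1}$, but with population arguments $\rho^A_n$ and $\rho^E_n$ respectively. Telescoping inside $\Phi$ produces three contributions:
\begin{itemize}
\item the change in the agent's law, contributing $\|\rho^A_n-\rho^{E,A}_n\|_1$ because the kernels are stochastic;
\item the change of the population argument in $\pi^A_n$, contributing at most $L_E\delta^{ADV}$ after expectation, by Definition~\ref{definition:piE-Lipschitz} as in Step 2;
\item the change of the population argument in $P$, contributing $L_P\|\rho^A_n-\rho^E_n\|_1$ by Assumption~\ref{assumption:Lipschitz} (Lipschitz in $\ell^1$/TV).
\end{itemize}
Hence $D_{n+1}\le D_n+(L_E+L_P)\delta^{ADV}$, and so $D_n\le n(L_E+L_P)\delta^{ADV}$.

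\textbf{Step 4: summation.} Summing over $n\in[H-1]$ gives
\[
r_{\max}\Big[H\delta^{ADV}+HL_E\delta^{ADV}+\sum_n n(L_E+L_P)\delta^{ADV}\Big]\le r_{\max}\delta^{ADV}\big[H(L_E+1)+H^2(L_P+L_E)\big],
\]
which is the claimed bound.

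The main care point is Step 3. There one must check that the Lipschitz property of Definition~\ref{definition:piE-Lipschitz} applies with the agent law $\rho^{E,A}_n$ as the integrating measure. One must also check that conditioning on the shared noise trajectory lets the expectations pass through the one-step $\Phi$ bounds.
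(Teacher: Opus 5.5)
Your proof is correct and follows essentially the paper's route. Your Step~1 is the reduction to $r_{\max}\sum_n \mathbb{E}\|\mu^E_n-\mu^{E,A}_n\|_1$ used in the proof of Theorem~\ref{theorem:BC_LPge0_reward}; your Step~2 is the decomposition of Lemma~\ref{lemma:muEA-muEE_ADV_LP}; and your recursion $D_{n+1}\le D_n+(L_E+L_P)\delta^{ADV}$ is the ADV bound of Lemma~\ref{lemma:rhoA1-rhoE1_LP=0} with $\pi^1=\pi^A$, combined with $\mathbb{E}\|\rho^A_n-\rho^E_n\|_1\le\delta^{ADV}$ from Lemma~\ref{lemma:rhoA-rhoE_LP}.
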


\begin{remark}
    In Theorem~\ref{theorem:ADV_LPge0_reward}, Assumption~\ref{assumption:Lipschitz} is required only for the Lipschitz continuity of the transition kernel $P$, but not for the reward $r$. 
\end{remark}
\noindent\textbf{Results Interpretation and Analysis:}
Our results demonstrate that answering Question \ref{theory-q1} (Equilibrium Recovery) is significantly more subtle than answering Question \ref{theory-q2} (Performance Matching). 

Robustness of Performance~\ref{theory-q2}: As shown in Theorem \ref{theorem:BC_LPge0_reward}, the performance gap is bounded by $\delta^{BC}$ without requiring Lipschitz continuity for the reward ($L_r$) or transition kernel ($L_P$). Even in complex or discontinuous environments, local action mimicry is sufficient to achieve expert-level utility.

Fragility of Equilibrium~\ref{theory-q1}: This stability does not hold for equilibrium recovery. An agent may achieve high rewards while remaining far from a Nash equilibrium. Theorem \ref{theorem:BC_LPge0} shows that exploitability $\cE(\pi^A)$ is sensitive to social reactivity ($L_E$) and environmental coupling ($L_P$), resulting in exponential error growth.

Finally, our results suggest that the ADV divergence is more robust for solving \ref{theory-q1} (Equilibrium Recovery) than BC divergence, because it transforms the error propagation from exponential (Theorem~\ref{theorem:BC_LPge0}) to polynomial (Theorem~\ref{theorem:ADV_LPge0}).

\section{Numerical Experiments}
\label{sec:numerics}

\subsection{Objectives and Method}
We aim to illustrate the control of the Behavioral Cloning (BC) and Adversarial (ADV) proxies on the performance relative to the expert and the equilibrium imitation learning problem. To this end, we first compute a Nash equilibrium, designated as the expert policy $\pi^E$. 
The objective of the learner is twofold: to obtain a high reward by matching the expert's performance and to learn a Nash equilibrium policy. We compare the case where the learner explicitly accounts for common noise or not.
Thus, we derive two policies from agent trajectories generated by this expert: {\bf (1)}
a population-\textit{un}aware \defi{vanilla policy} $\hat{\pi}^{\text{vanilla}}$, and {\bf (2)} a population-\textit{aware} \defi{adaptive policy} $\hat{\pi}^{\text{adaptive}}$; so $\hat{\pi}^{\text{vanilla}}_n:\cX \to \Delta_{\cA}$ while $\hat{\pi}^{\text{adaptive}}_n: \cX \times \Delta_{\cX} \to \Delta_{\cA}$. We propose two methods to obtain these policies. First, a method well-suited for small spaces and based on Nadaraya-Watson kernel regression. Second, an interactive imitation approach, where a single learning agent is placed within an environment populated by other agents following the expert policy. In all the examples, the proxies will be computed via a Monte Carlo procedure described in Algorithm~\ref{algo:proxy} in Appx.~\ref{appendix:algo}.

\subsection{Challenges}

\noindent\textbf{Nash Equilibrium:}
Computing Nash equilibrium for MFGs without common noise is a well-understood problem in the literature~\cite{achdou2020mean}. However, a gap remains concerning MFGs with common noise: Most existing works consider common noise path-dependent policies, e.g. \cite{achdou2018mean,perrin_master_2022}. However, this becomes intractable as the dimension of the common noise and the time horizon increase.



\noindent\textbf{Imitation Learning:} \citep{Chen2022IndividualIRLMFG,Anahtarci2025MaxCausalEntropyMFG,Zhao2025MeanFieldCorrelatedIL} proposed methods to learn policies for MFGs without common noise, but there is no such work with common noise. Here, we need to deal with stochastic mean fields and population-dependent policies. 

\subsection{Environment and Metrics}
We consider three different examples with different effects of common noise. The first example is a simple two-state, two-action space. The second example is the Beach Bar environment, and the last example is the Night Club environment inspired by a social choice model~\citep{salhab2019collective}. 

The motivation for testing on three different environments is to demonstrate that our approach is efficient in a variety of structures of common noise:
\begin{itemize}

    \item \textbf{Exp. 1, Concentration-Dependent:}
    The common noise is correlated with the population density
    (e.g., if the population is concentrated at a state, agents are more likely to be affected).

    \item \textbf{Exp. 2, Local Effects:}
    The noise affects agent positions heterogeneously, leading to more chaotic local shifts.

    \item \textbf{Exp. 3, Block-Shift:}
    The common noise shifts specific clusters or ``blocks'' of agents while leaving others unaffected, representing a segmented environmental shock.

\end{itemize}
In real-world applications, common noise is complex and difficult to model explicitly. By showing consistent results across these three distinct scenarios, we validate that our population-aware imitation learning method remains superior regardless of the specific common noise manifestation.

After computing an expert policy $\pi^E$, a candidate policy $\pi^A$ (which can be either the vanilla policy $\hat \pi^{\text{vanilla}}$ or the adaptive policy $\hat \pi^{\text{adaptive}}$), we report six metrics:
\begin{itemize}
    \item \textbf{Imitation Proxies:} Behavioral Cloning (BC) and Adversarial (ADV) errors.
    \item \textbf{Performance vs. Expert:} Reward vs. Expert, and Relative Reward vs. Expert: $\cV(\pi^A, \pi^E) = [V(\pi^A, \pi^E) - V(\pi^E, \pi^E)] /\vert V(\pi^E, \pi^E)\vert$.
    \item \textbf{Equilibrium Quality:} Exploitability, and Relative Exploitability: $\cE(\pi^A, \pi^E) = [\cE(\pi^A)]/ \vert V(\pi^E, \pi^E)\vert$.
\end{itemize}
We normalize reward and exploitability metrics to obtain quantities that can be compared across environments. 

To ensure statistical stability, for each parameter configuration, we perform $5$ independent runs of the full experimental pipeline: computing the expert equilibrium, estimating the vanilla and adaptive policies, and calculating the performance metrics.

\subsection{Fictitious Play for MFGs with Common Noise}
\label{subsection:nash_equilibrium_cn}

As noted earlier, the literature on algorithms for computing Nash equilibria with population-dependent policies for MFGs with common noise is rather scarce. Here we adapt to situations with common noise the Fictitious Play algorithm proposed in~\cite{perrin_master_2022} to obtain a population-dependent Nash equilibrium policy referred to as a \defi{Master policy} (without common noise). 
%
At each iteration, a best response is trained against an aggregated population distribution generated by the sequence of policies obtained in previous iterations. 

Specifically, given the policies $(\pi^0, \dots, \pi^k) \in \Pi^{k+1}$ up to iteration $k$, 
a policy $\pi^{k+1}$ is defined as a best response to the sequence $(\pi^0, \dots, \pi^k)$ if it maximizes:
$$
    \pi \mapsto \mathbb{E} \Big[ \sum_{n=0}^{H-1} r(X_n, \alpha_n, \bar{\rho}^{k, \varepsilon^0}_n) \Big],
$$
where $(X_n, \alpha_n)$ is induced by $\pi$ and the \defi{aggregate mean-field} sequence $\bar{\rho}^k$ is defined as
\begin{equation}
    \label{eq:aggregate-mf}
    \bar{\rho}^{k, \varepsilon^0}_{n} = \frac{1}{k+1}\sum_{i=0}^k \rho^{\pi^i, \varepsilon^0}_n.
\end{equation}
The generalized Fictitious Play procedure is summarized in Alg.~\ref{algo:fictitious}.
\begin{algorithm}[t]
\caption{Fictitious Play for MFG with Common Noise}
\begin{algorithmic}[1]
\label{algo:fictitious}
\REQUIRE MFG $\cM = (\cX, \cA, \cE^0, \nu^0, P, r, H, \rho_0)$; Initial policy $\pi^0 \in \Pi$; Number of iterations $K$.
\FOR{$k = 1, \dots, K$}
    \STATE Train $\pi^k$ as the best response to $(\pi^0, \dots, \pi^{k-1})$.
\ENDFOR
\STATE \textbf{Return} policy tuple $(\pi^0, \dots, \pi^K)$.
\end{algorithmic}
\end{algorithm}

To obtain the best response at each iteration, we parametrize the policy by a neural network (NN), and train the NN using Alg.~\ref{algo:nn_best_response} 
by directly maximizing the value function, see Appx.~\ref{appendix:algo}. Note that Alg.~\ref{algo:fictitious} returns a list of policies instead of a single, equilibrium policy. To compute $\bar\rho^K$, one can average results obtained with each policy $\pi^k$ of the list, but this is not sufficient for our purpose since we want to have in hand an expert policy $\pi^E$. 
To address this, we propose an IL algorithm that trains a NN to reproduce the aggregated mean-field trajectories, as detailed in Alg.~\ref{algo:il_agg_meanfield}.
\begin{algorithm}[t]
\caption{Mean Field Imitation Learning (MF-IL)}
\begin{algorithmic}[1]
\label{algo:il_agg_meanfield}
\REQUIRE MFG $\cM$; Policy sequence $(\pi^0, \dots, \pi^K)$; 
Initial parameters $\theta_0$ for parametric policy $\pi_\theta$; 
Iteration number $J$; Batch size $B$.
\FOR{$j = 1, \dots, J-1$}
    \STATE Sample common noises $(\varepsilon^{0, s})_{1\leq s\leq B}$. 
    \FOR{$s = 1, \dots, B$}
        \STATE Compute $\bar{\rho}^{K,\varepsilon^{0, s}}$ from $(\pi^0, \dots, \pi^K)$, see~\eqref{eq:aggregate-mf}
        \STATE Generate $\rho^{\pi_{\theta_{j-1}},\varepsilon^{0, s}}$ induced by $\pi_{\theta_{j-1}}$.
    \ENDFOR
    \STATE Compute 
        $L(\theta_{j-1}) = \frac{1}{B} \sum_{s,t} \|\rho^{\pi_{\theta_{j-1}},\varepsilon^{0, s}}_t - \bar{\rho}^{K,\varepsilon^{0, s}}_t \|_1.$
    \STATE Obtain $\theta_j$ by one step of gradient of $L(\theta_{j-1})$.
\ENDFOR
\STATE \textbf{Return} Trained parametric policy $\pi_{\theta_J}$.
\end{algorithmic}
\end{algorithm}

\subsection{Learning Algorithms}
\label{section:learning_algorithm}
To demonstrate the relevance of BC and ADV divergences in distinguishing between population-aware and population-unaware policies, we propose a framework capable of learning both policy types. 

Since our primary objective is to illustrate the efficacy of BC and ADV metrics, we introduce a simple IL algorithm, see Alg.~\ref{algo:interactive_imitation}. This setup mirrors a scenario where a single agent interacts with an environment populated by expert agents who are already following a Nash equilibrium policy $\pi^E$. We assume the learner can observe the states and actions of these expert agents, allowing it to internalize the relationship between the common-noise-induced mean-field $\rho$ and the resulting expert actions.
For population-unaware policies, the same algorithm can be used but removing the mean-field input in $\pi^{\theta_j}$. 

\begin{algorithm}[t]
\caption{Interactive Imitation Learning for Population-aware Policies}
\begin{algorithmic}[1]
\label{algo:interactive_imitation}
\REQUIRE MFG $\cM = (\cX, \cA, \cE^0, \nu^0, P, r, H, \rho_0)$; 
Expert policy $\pi^E$; Initial parameters $\theta_0$; 
Iteration number $J$; Batch size $B$; Population size $M$. \\ 
\FOR{$j = 1, \dots, J$}
    \STATE Sample common noise $(\varepsilon^{0, s}_t)_{0\leq t \leq H-1}^{1\leq s\leq B}$.
    \FOR{$s = 1, \dots, B$}
        \STATE Generate $M$ state-action trajectories $\{(x^{s,i}_t, a^{s,i}_t)\}_{i=1}^M$ using $\pi^E$ under noise $\varepsilon^{0,s}_{1:H-1}$.
        \STATE Set empirical MF: $\hat{\rho}^{s, M}_t(x) = \frac{1}{M}\sum_{i=1}^M \delta_{\{x = x^{s,i}_t\}}$.
        \STATE Set empirical strategy: $\hat{\pi}^{s, M}_t(a \vert x) = \frac{1}{M \hat{\rho}^{s, M}_t(x)} \sum_{i=1}^M \delta_{\{(x, a) = (x^{s,i}_t, a^{s,i}_t)\}}$.
    \ENDFOR
    \STATE Define loss: 
        $
            L(\theta_{j-1}) =
            \frac{1}{BM} 
            \sum_{s,t,i}\| \pi^{\theta_{j-1}}_t(x^{s,i}_t, \hat{\rho}^{s, M}_t) - \hat{\pi}_t^{s,M}(x^{s,i}_t) \|_1.
        $
    \STATE Obtain $\theta_j$ by one step of gradient of $L(\theta_{j-1})$.
\ENDFOR
\STATE \textbf{Return} Trained adaptive policy $\pi_{\theta_J}$.
\end{algorithmic}
\end{algorithm}
\subsection{First Experiment: 2 States, 2 Actions Environment}
\label{subsection:example1}
\noindent\textbf{Model:} 
Let $\cX = \{0, 1\}$, $\cA = \{0 ,1\}$, $\cE^0 = [0, 1]$, and initial distribution $\rho_0 = (\frac{1}{2}, \frac{1}{2})$. The dynamics are as follows:
For any $e^0\in [0, 1]$ and $\rho \in \Delta_\cX$, we introduce the \emph{perturbed} probability measure $[e^0\rho]$ defined by:
$
    [e^0\rho](1) = \frac{e^0\rho(1)}{e^0\rho(1) + (1 - e^0)\rho(0)}.
$
For any $\eta \in [0, 1]$ and for all $(x,a, \rho, e^0) \in \cX \times \cA \times \Delta_\cX\times [0, 1]$,
$
        P_\eta(\cdot\vert x, a,\rho, e^0) = (1 - \eta)\delta_a + \eta [e^0\rho].
$
The reward is:
$
r(x,a, \rho) = -\mathds{1}_{\{x = 0\}}\rho(0)  -\mathds{1}_{\{x = 1\}}\rho(1).
$
For the common noise distribution, 
we consider $\nu^0_{\alpha} = \texttt{Beta}(\alpha, \alpha)$. 
This model represents agents who choose between two alternatives under congestion effects. Individual decisions are occasionally ignored and replaced by a population-level force. When this force is active, its impact is governed by $\alpha$. When $\alpha < 1$, $\texttt{Beta}(\alpha, \alpha)$ is U-shaped and $ [\varepsilon^0_1\rho]$ concentrates at $0$ or $1$, with equal probability. When $\alpha =1$, $\texttt{Beta}(1, 1)$ is the uniform law, and the behavior of  $ [\varepsilon_1^0\rho]$ is erratic.  When $\alpha > 1$, $\texttt{Beta}(\alpha, \alpha)$ is concentrated around $\frac{1}{2}$ and the perturbation $ [\varepsilon^0_1\rho]$ stays close to $\rho$. In short, $\eta$ controls the frequency at which the common noise impacts the agents, and $\alpha$ controls its effect.

\noindent\textbf{Expert Policy:} 
To compute a Nash equilibrium on this simple environment, we use an ad-hoc method inspired by Mann iterations 
of best responses. The best response is computed via dynamic programming following \citep{gu2024global}, adapted to incorporate common noise.  
This method is made possible by the simplicity of the environment; details of the algorithm are presented in Appx.~\ref{appendix:example1}. The relevance of this method is justified by the low exploitability achieved by the expert policy. For each tuple of parameters $(\alpha, \eta) \in \mathbb{R}_+ \times [0, 1]$, we denote the expert policy as $\pi^E_{\alpha, \eta}$.

\noindent\textbf{Vanilla and Adaptive Policies:}
The simplicity of the environment allows us to consider classical statistical estimators for the vanilla and adaptive policies. For this example, we propose a method based on Nadaraya-Watson kernel regression. To this end, we sample synthetic state-action trajectories of agents following the expert policy under the same common noise realizations. This allows us to regress the expert policy by observing the actions alongside the empirical population distribution. The adaptive policy thus interpolates the expert's behavior across the continuous space of mean-field realizations, while the vanilla policy serves as a baseline that ignores common noise fluctuations. Complete technical details are provided in Appx.~\ref{appendix:example1}.

\noindent\textbf{Results:} We consider $\alpha \in \{0.75, 1, 1.25, 1.5, 1.75\}$ and $\eta \in \{0, 0.25, 0.5, 0.75, 1\}$. 

The complete grid of results is provided in Appx.~\ref{appendix:example1}. We highlight specific findings for $\eta = 0.75$ in Figure~\ref{fig:example1_eta=0.75} and for $\alpha = 1.75$ in Figure~\ref{fig:example1_sub_alpha} in Appx.~\ref{appendix:example1}. The results indicate that the problem exhibits significant parameter sensitivity. Notably, the adaptive policy consistently outperforms the vanilla policy across all tested regimes. 

This performance gap can be explained by a qualitative analysis of the learned policies. Figure~\ref{fig:example1_behavior} illustrates the behavior of $\pi^E, \hat{\pi}^{\text{vanilla}}$, and $\hat{\pi}^{\text{adaptive}}$ as a function of the mean-field variable $\rho$. While the expert dynamically adapts its actions to the fluctuations of the mean-field, the adaptive policy successfully reconstructs this dependency. In contrast, the vanilla policy only captures the average action of the expert.

\begin{figure}[htbp] 
    \centering
    \includegraphics[width=0.9\linewidth]{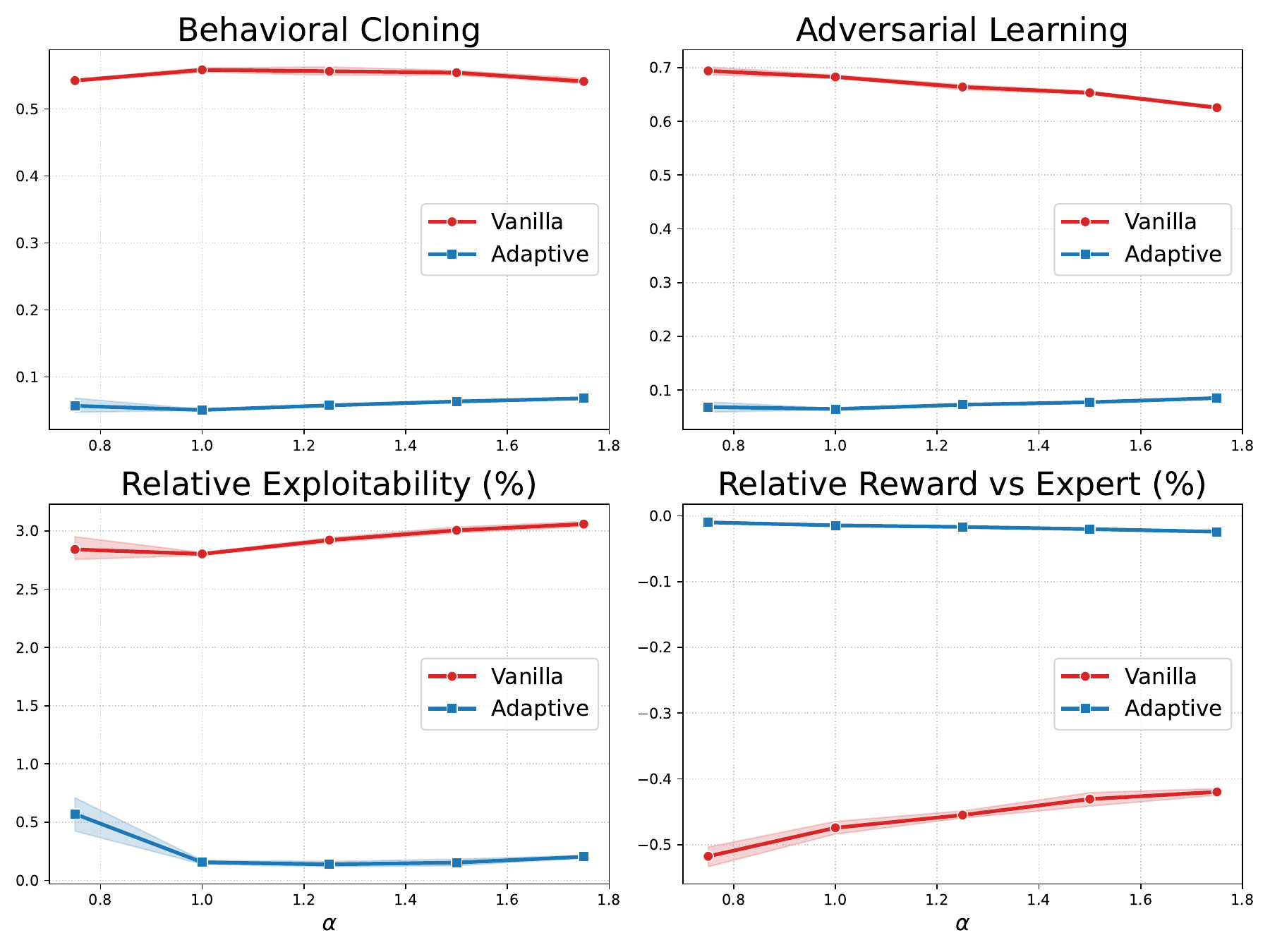}
    \caption{Performance metrics for $5$ different runs, with $\eta = 0.75$.}
    \label{fig:example1_eta=0.75}

    \vspace{0.3cm} 

    \includegraphics[width=0.9\linewidth]{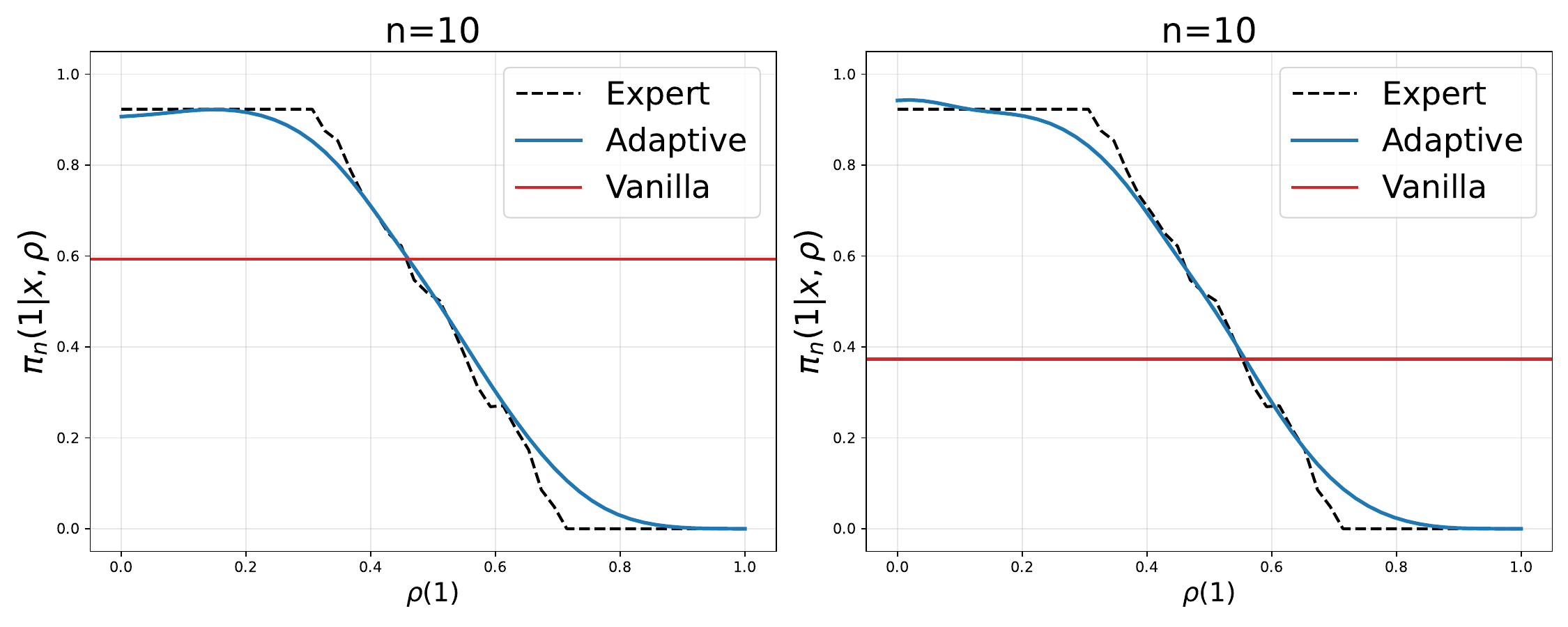}
    \caption{Policy comparison for $\alpha = 1.75$, $\eta = 0.75$.}
    \label{fig:example1_behavior}
\end{figure}

\subsection{Second Experiment: The Beach Bar Environment }
We consider the \textit{Beach Bar} environment as presented in \cite{perrin_master_2022}. In this setting, the bar is permanently open and the agents are subject to a common noise. 

\noindent\textbf{Model:}
Let $\cX = \{0, \dots, 4X\}$, $\cA = \{-1, 0, 1\}$, and $\cE^0 = \cX^{|\cX|}$.
For parameters $\alpha, \beta, \eta$, and common noise $e^0 \in \cE^0$, the dynamics are governed by a transition kernel $P$ that incorporates both \emph{common noise} $e^0(x)$ and \emph{idiosyncratic noise} (represented by a uniform shift $s \in \{-1, 0, 1\}$). Specifically,
$P(x, a, \rho, e^0) = \frac{1}{3} \sum_{s \in \{-1, 0, 1\}} \delta_{x + a + e^0(x) + s \pmod{|\cX|}}$.
The common noise law is defined as $\varepsilon^0_1 \sim (\nu^0_\eta)^{\otimes |\cX|}$, where $\nu^0_\eta$ is the distribution of $\varepsilon^{0,1} \varepsilon^{0,2}$ with $\varepsilon^{0, 1} \sim \text{Bernoulli}(\eta)$ and $\varepsilon^{0, 2} \sim \text{Unif}\{-X, \dots, X\}$.
The reward function, reflecting a preference for a central bar at $x_{\text{bar}} = 2X$ and a penalty for overcrowding, is:
$r(x, a, \rho) = -|x - x_{\text{bar}}| \mathds{1}_{\{\rho(x_\text{bar}) \leq \beta\}} - \alpha \log(\rho(x)) - |a|$.
This model represents a beach environment where holidaymakers balance two competing objectives: avoiding crowds while remaining close to the bar. However, the agents are subject to a threshold effect; when the bar becomes too crowded ($\rho(x_{\text{bar}}) > \beta$), it loses its appeal, and agents lose all interest to it. The environment is disrupted by external events that simultaneously displace all agents at specific locations. These may represent physical changes, such as shifting shadows making certain spots undesirable, or social phenomena like spontaneous group movements. These stochastic events are governed by the common noise, where the parameter $\eta$ controls the frequency of such perturbations.

For the following, we fix $\beta = 0.1$. We study the effect of the parameters $(\alpha, \eta)$ on our learning problems. 

\noindent\textbf{Expert Policy:}
For this large scale environment, we use our new algorithmic procedure described in Subsec.~\ref{subsection:nash_equilibrium_cn}. Figure~\ref{fig:beachbar_meanfield} shows one realization of the mean-field trajectory induced by the expert, for $(\alpha, \eta) = (1, 0.3)$.
\begin{figure}[htbp]
    \centering
    \includegraphics[width=0.9\linewidth]{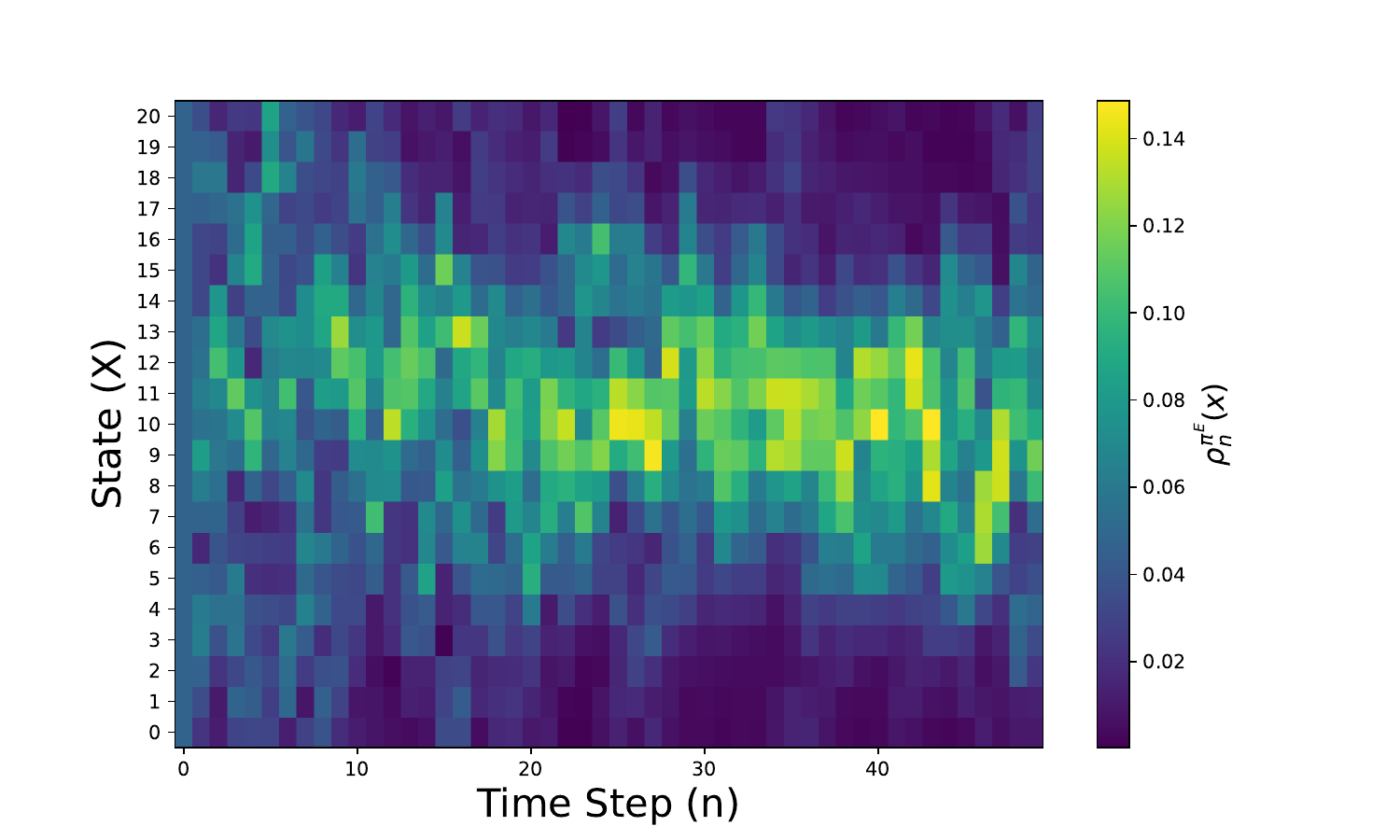}
    \caption{One realization of the mean-field trajectory generated by the expert, for $(\alpha, \eta) = (1, 0.3)$.}
    \label{fig:beachbar_meanfield}
\end{figure}

\noindent\textbf{Vanilla and Adaptive Policies:}
For each value of $(\alpha, \eta)$, we consider learning algorithms presented in Subsec.~\ref{section:learning_algorithm}. We report all the learning parameters used in Appx.~\ref{appendix:beachbar}.

\noindent\textbf{Convergence of the Algorithms:}
Figure~\ref{fig:beachbar_losses_eta=0.3_alpha=1} reports, for $(\alpha, \eta) = (1, 0.3)$, the convergence of the FP procedure, the expert loss from Algorithm~\ref{algo:il_agg_meanfield}, and the losses of the adaptive and vanilla policies from Algorithm~\ref{algo:interactive_imitation}. The FP procedure is confirmed to converge toward a Nash equilibrium as the number of iterations grows. Moreover, the adaptive policy achieves lower loss than the vanilla policy, indicating superior imitation of the expert. Additional results for $\eta = 0.3$ across multiple values of $\alpha$ are provided in Appx.~\ref{appendix:beachbar}, Figure~\ref{fig:beachbar_losses}.

\vspace{1pt}
\begin{figure}[htbp]
    \centering
    \includegraphics[width=1\linewidth]{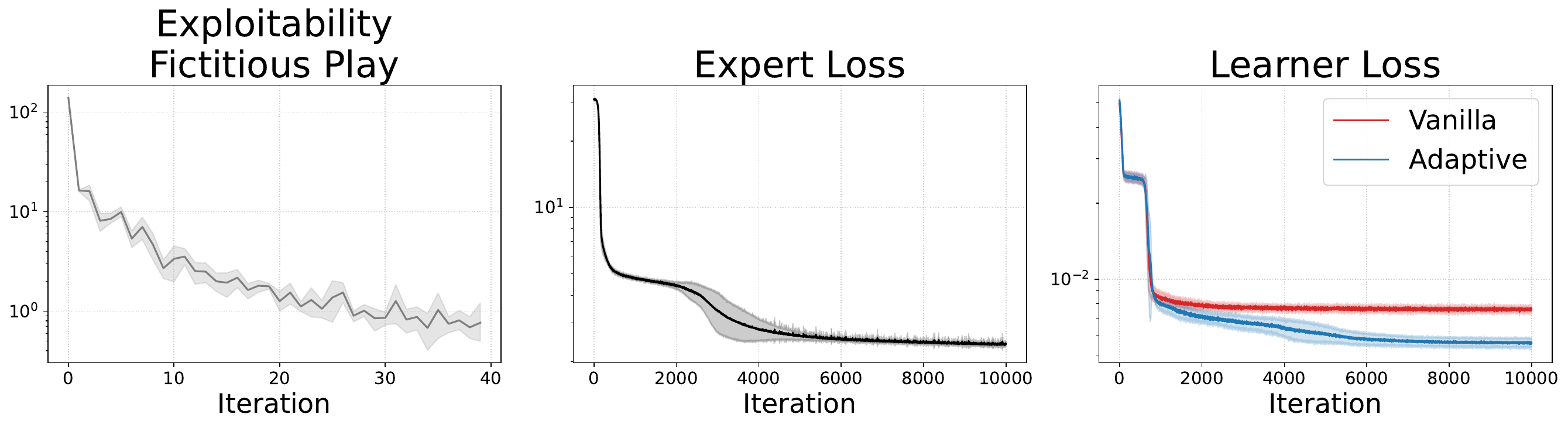}
    \caption{Convergence curves for $(\alpha, \eta) = (1, 0.3)$: FP exploitability (left), expert loss (center), and adaptive vs.\ vanilla losses (right).}
    \label{fig:beachbar_losses_eta=0.3_alpha=1}
\end{figure}
\noindent\textbf{Results:}
We take $\eta \in \{0.1, 0.2, 0.3, 0.4, 0.5\}$ and $\alpha \in \{0.1,0.5,1,1.5,2\}$, $X = 5$ ($\vert \cX\vert = 20)$ and $H = 50$. 

The complete grid of results is provided in Appx.~\ref{appendix:beachbar}. We highlight specific findings for  $\eta = 0.3$ in Figure~\ref{fig:beachbar_eta=0.3} and $\alpha = 1$ in Figure~\ref{fig:beachbar_sub_alpha} in Appx.~\ref{appendix:beachbar}. The results indicate that the problem exhibits significant parameter sensitivity. We again observe that the adaptive policy consistently outperforms the vanilla policy across all tested regimes. 

The qualitative analysis for this environment mirrors that of Example 1. The adaptive policy successfully reconstructs the expert’s dependency on $\rho$, whereas the vanilla policy fails to react to mean-field fluctuations, as we can see in Figure~\ref{fig:beachbar_policies}. This explains the observed performance disparity.

\begin{figure}[!t] 
    \centering
    \includegraphics[width=0.99\linewidth]{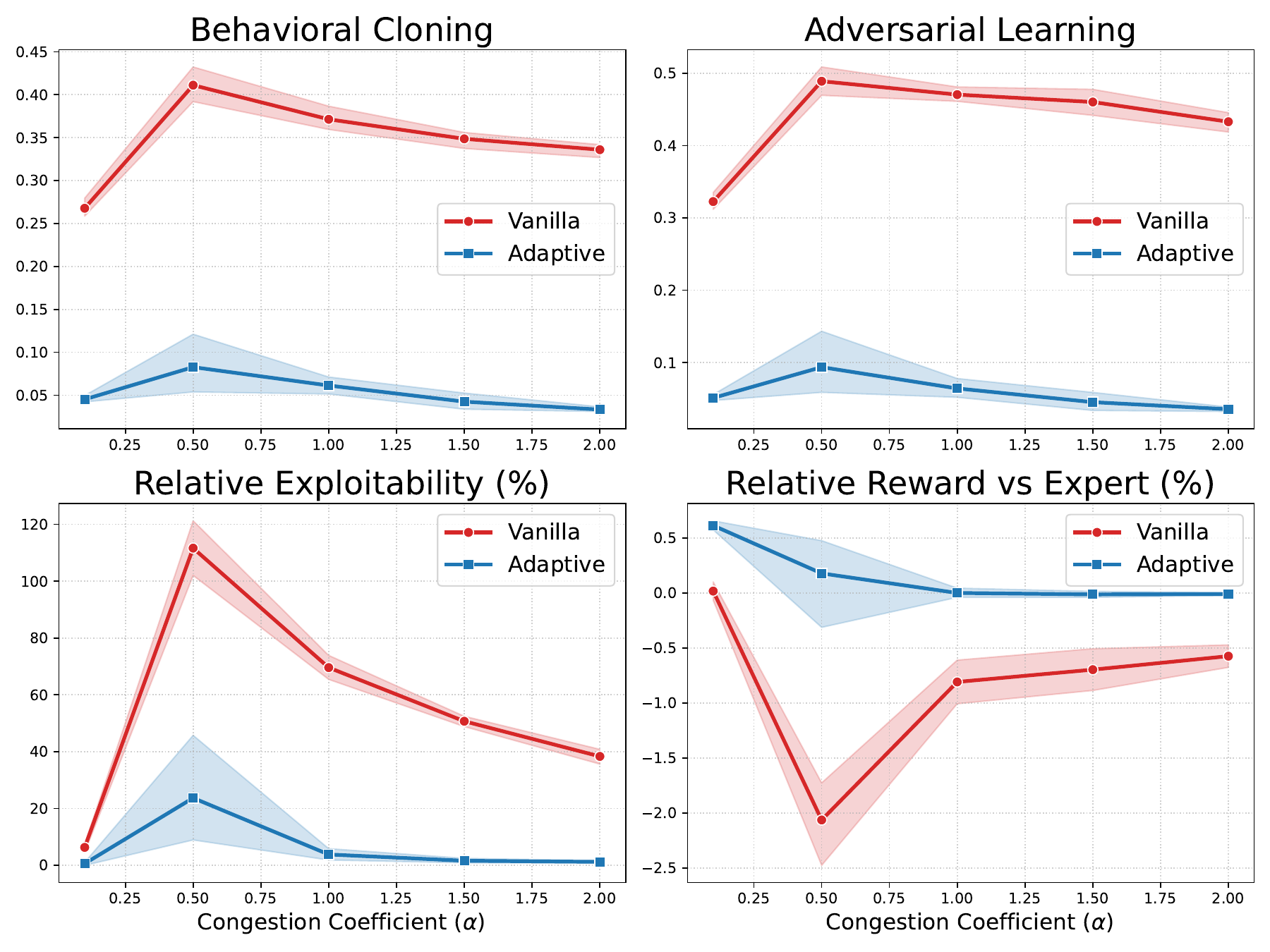}
    \caption{Performance metrics for 5 different runs, with $\eta = 0.3$.}
    \label{fig:beachbar_eta=0.3}

    \vspace{0.3cm} 

    \includegraphics[width=0.99\linewidth]{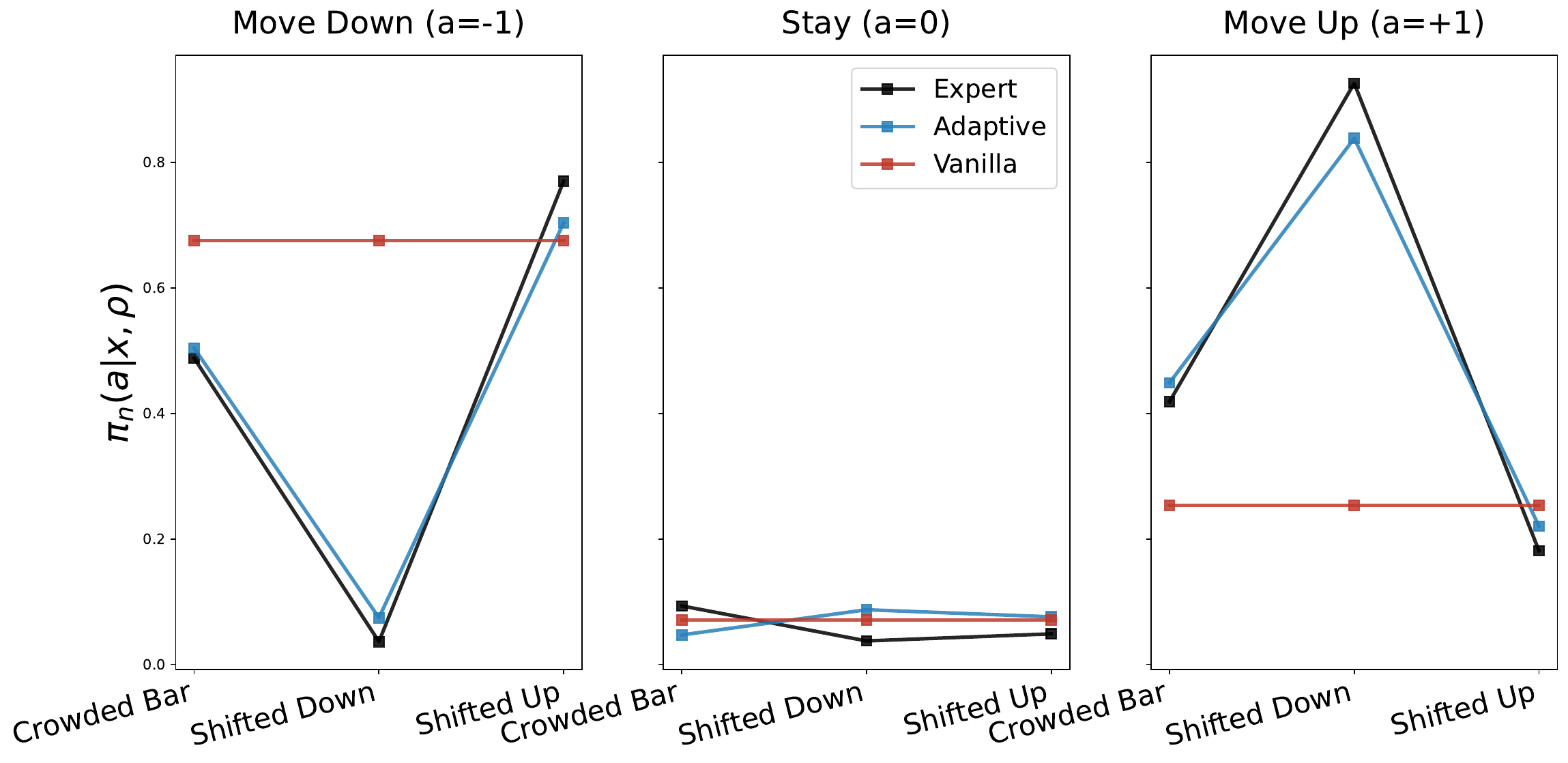}
    \caption{Policy sensitivity analysis for the Beach Bar environment with parameters $(\alpha, \eta) = (1, 0.3)$ at time $n = 10$ and state $x = x_{\text{bar}}$. The comparison illustrates how the Expert (Nash), Adaptive, and Vanilla policies respond to different population distributions:\textit{Crowded Bar} represents high density at the bar location, testing the agent's avoidance behavior; \textit{Shifted Down} and \textit{Shifted Up} reflect scenarios where the mass is concentrated immediately below or above the bar, respectively. This demonstrates the Adaptive policy's ability to react to mean-field fluctuations, whereas the Vanilla policy remains static due to its lack of population awareness.}
    \label{fig:beachbar_policies}
\end{figure}
\subsection{Third Experiment: The Night Clubs Environment}
The following model is inspired by the mean-field linear-quadratic-Gaussian (LQG) model for social choice developed by~\citet{salhab2019collective}. This environment shares structural similarities with the \textit{Beach Bar} problem. However, the reward structure is different: the environment features two distinct points of interest (the "night clubs"), and agents are motivated by a gathering incentive, a desire to congregate rather than avoid congestion. Furthermore, the common noise in this setting is designed to affect agents in a more highly correlated manner than in previous experiments, simulating global trends that shift the popularity of one club over another.

\noindent\textbf{Model:}
Let $\cX, \cA, \cE^0$, and dynamics $P$ be as defined in the Beach Bar model, with clubs at $x^1_{\text{club}}=X$ and $x^2_{\text{club}}=3X$. The reward function $r(x, a, \rho) = -D_{\beta}^k(x, \rho) - \alpha |x - \bar{x}(\rho)|$ prioritizes a target club based on crowding and penalized distance from the circular mean $\bar{x}(\rho)$. The effective distance is $D_{\beta}^k(x, \rho) = |x - x^k_{\text{club}}|$ if $\rho(x^k_{\text{club}}) \le \beta$, and $|x - x^{-k}_{\text{club}}|$ otherwise.
Common noise $\varepsilon^0_1$ follows a localized \textbf{block-shift} law: at each step, a shift magnitude $s \sim \text{Unif}\{-\lfloor \frac{|\cX|}{3} \rfloor, \dots, \lfloor \frac{|\cX|}{3} \rfloor\}$ and a block start $i \sim \text{Unif}(\cX)$ are sampled. The resulting noise vector is $\varepsilon^0_1(x) = s \cdot \mathds{1}_{\{x \in \{i, i+1 \pmod{|\cX|}\}\}} \cdot \eta^{1/3}$, effectively shifting only a local block of agents. This model represents a social environment where clubgoers balance two competing objectives: the desire for social gathering and the pursuit of entertainment at local venues. Agents seek to minimize their distance to the nearest club to enjoy the music; however, the clubs are subject to a mechanical threshold effect. If a club's local density exceeds the capacity $\beta$ ($\rho(x_{\text{club}}) > \beta$), the venue is considered overcrowded, the music ceases, and the agents lose all incentive to remain at that location. Agents immediately redirect their interest toward the alternative club. This redirection is 'blind', the agents are assumed to be too distant to perceive the current status (open or closed) of the far venue, and thus target its location regardless of its density. The environment is disrupted by structured external events that simultaneously displace groups of agents, modeling sudden interest in the direction of one of the music venues.
For the following, we fix $\beta = 0.1$. We study the effect of the parameters $(\alpha, \eta)$ on our learning problems. 

\noindent\textbf{Expert, Vanilla and Adaptive Policies:}
We consider the exact same procedure as in the Beach Bar example. Hyperparameter details and training loss are given in Appx.~\ref{appendix:nightclubs}. Figure~\ref{fig:nightclub_meanfield} shows one realization of the mean-field trajectory induced by the expert, for $(\alpha, \eta) = (0.1, 1)$. Figure~\ref{fig:nightclubs_losses_eta=1_alpha=0.1} reports the convergence of the FP procedure and the imitation losses for the same parameter configuration. As in the Beach Bar experiment, the FP procedure converges toward a Nash equilibrium, and the adaptive policy achieves lower loss than the vanilla policy, confirming its superior imitation of the expert.

\begin{figure}[htbp]
    \centering
    \includegraphics[width=0.9\linewidth]{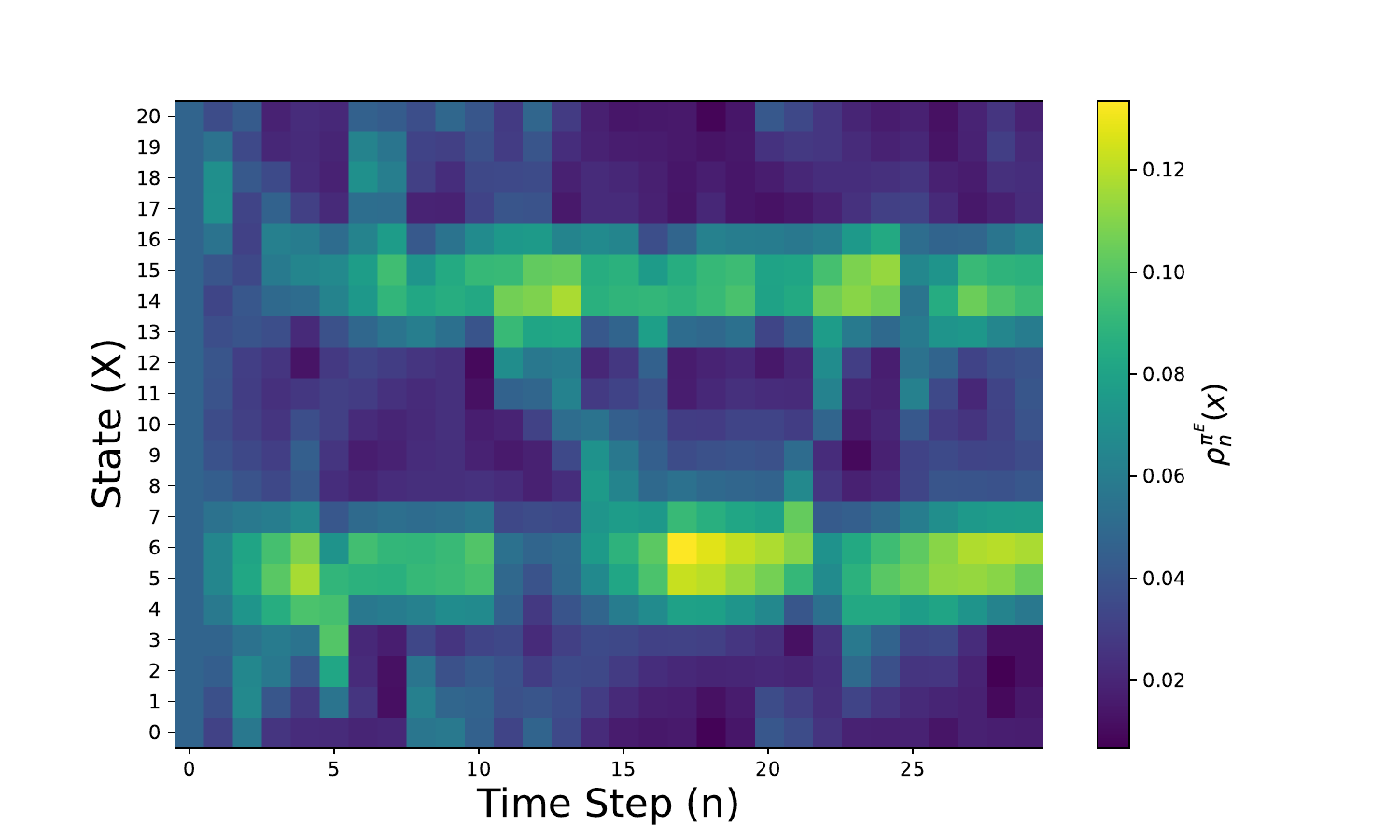}
    \caption{One realization of the mean-field trajectory generated by the expert for $(\alpha, \eta) = (0.1, 1)$.}
    \label{fig:nightclub_meanfield}
\end{figure}

\begin{figure}[htbp]
    \centering
    \includegraphics[width=1\linewidth]{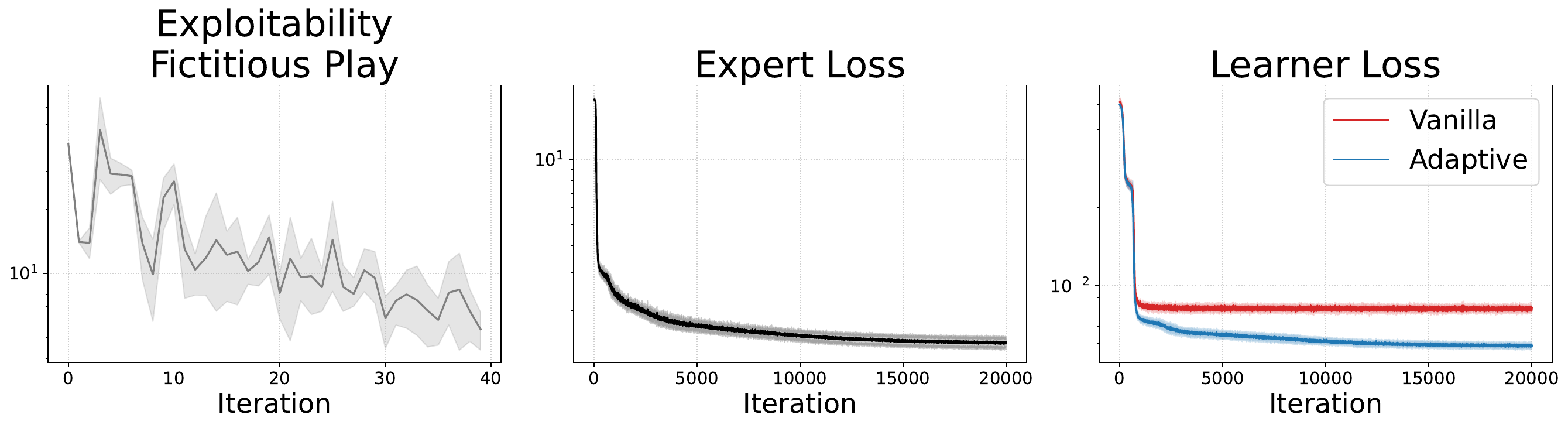}
    \caption{Convergence curves for $(\alpha, \eta) = (0.1, 1)$: FP exploitability (left), expert loss (center), and adaptive vs.\ vanilla losses (right).}
    \label{fig:nightclubs_losses_eta=1_alpha=0.1}
\end{figure}




\begin{figure}[H]
    \centering
    \includegraphics[width=0.99\linewidth]{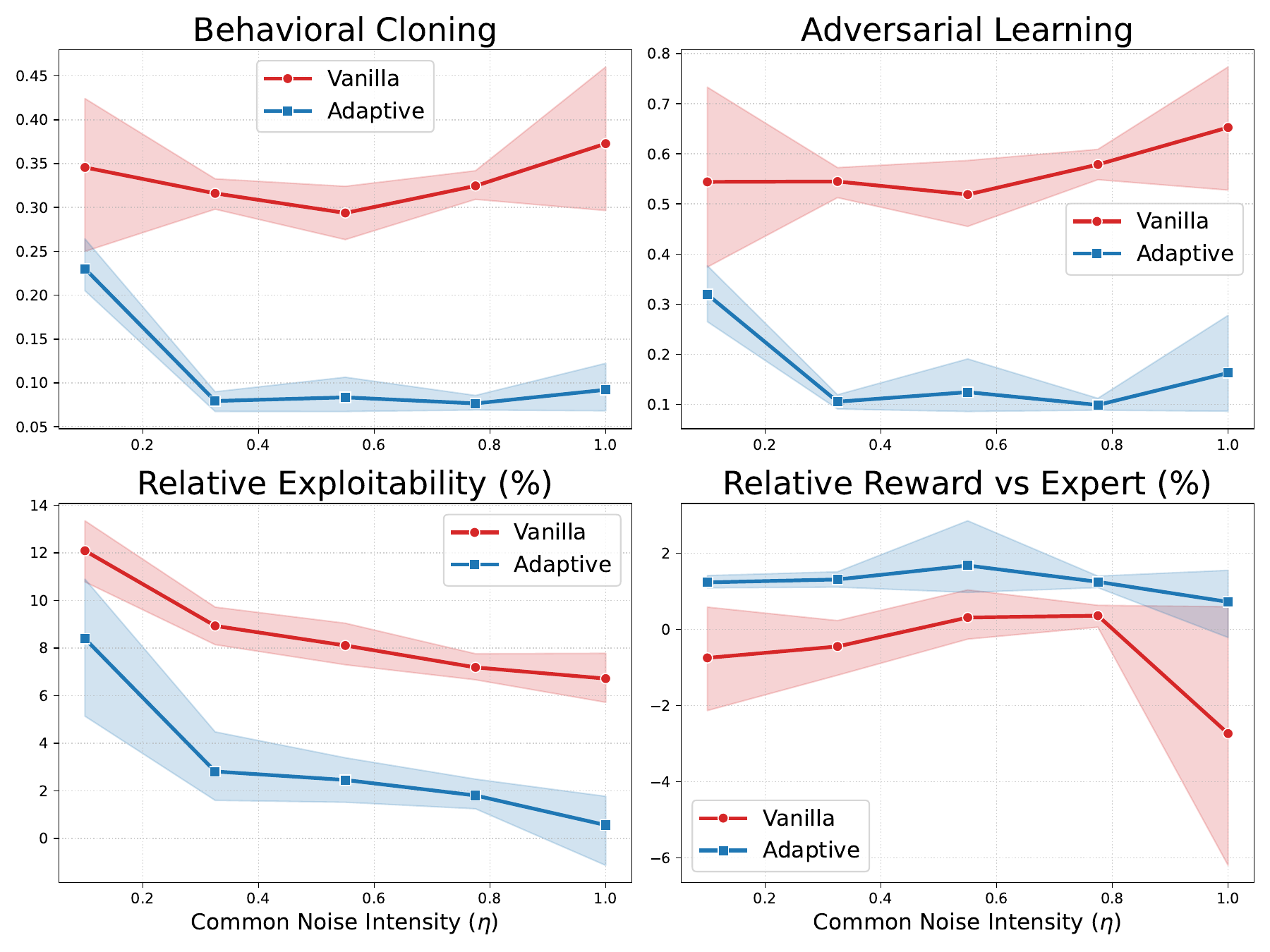}
    \caption{Performance metrics for 5 different runs, with $\alpha = 1.05$.}
    \label{fig:nightclubs_alpha=1.05}
\end{figure}

\begin{figure}[H]
    \centering
    \includegraphics[width=0.99\linewidth]{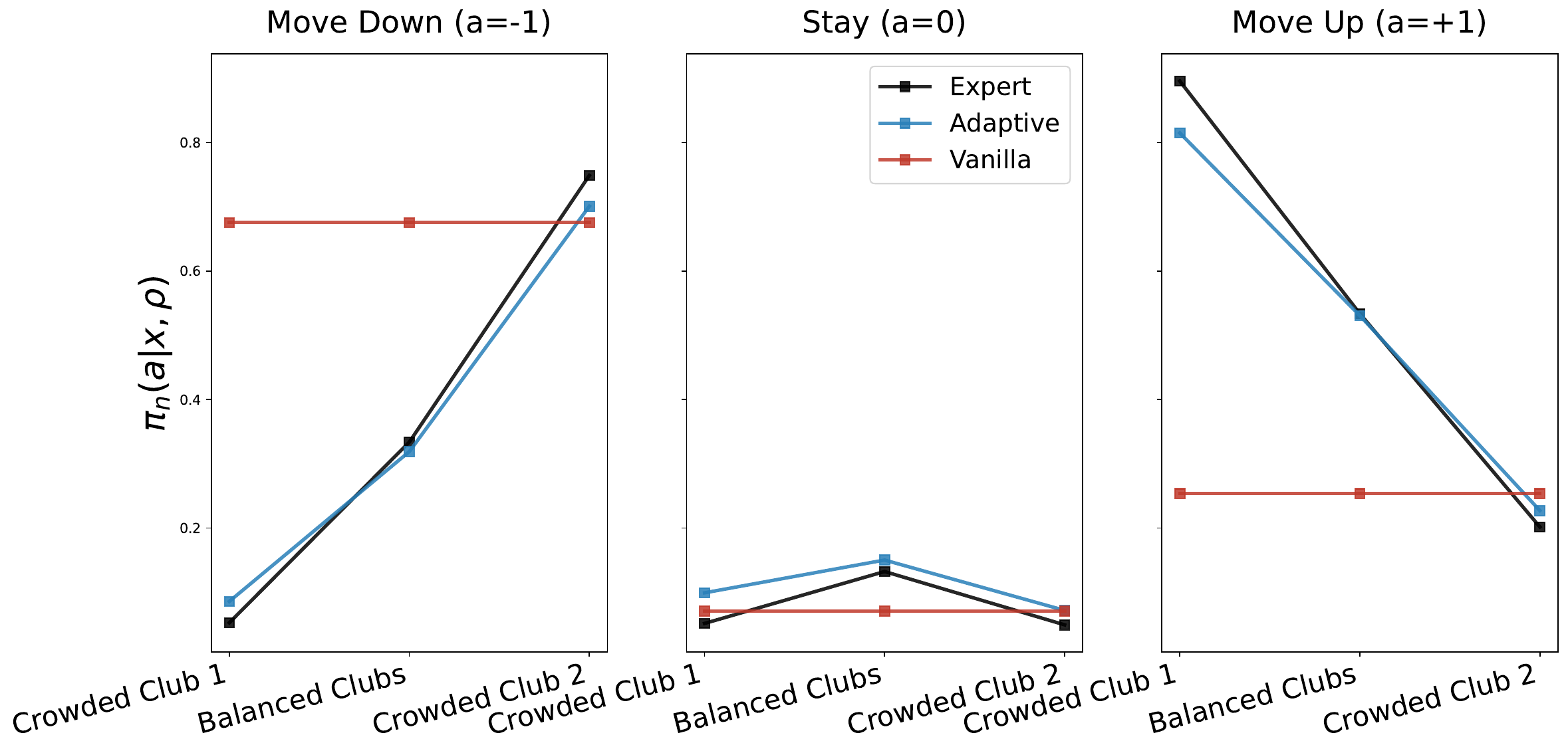}
    \caption{Policy sensitivity analysis for the Night Clubs environment with parameters $(\alpha, \eta) = (0.1, 1)$ at time $n = 10$ and state $x = 2X$. The comparison illustrates how the Expert (Nash), Adaptive, and Vanilla policies respond to different population distributions: \textit{Crowded Club 1} represents high density at the Club 1 location; \textit{Balanced Clubs} represents equal density at Club 1 and Club 2 locations. \textit{Crowded Club 2} represents high density at the Club 2 location.}
    \label{fig:nightclubs_policies}
\end{figure}
\noindent\textbf{Results:}
We take $\eta \in \{0.1, 0.325, 0.55, 0.775, 1\}$ and $\alpha \in \{0.1,0.575,1.05,1.525,2\}$, $X = 5$ ($\vert \cX\vert = 20)$ and $H = 30$.

The complete grid of results is provided in Appx.~\ref{appendix:nightclubs}. We highlight specific findings for $\alpha = 1.05$ in Figure~\ref{fig:nightclubs_alpha=1.05} and for $\alpha = 0.1$ in Figure~\ref{fig:nightclubs_sub_alpha=0.1} in Appx.~\ref{appendix:nightclubs}. The results indicate that the problem exhibits significant parameter sensitivity. We again observe that the adaptive policy consistently outperforms the vanilla policy across all tested regimes, but that the advantage on \ref{theory-q2} (reward matching) is less obvious. 

The qualitative behavior in the Night Clubs environment mirrors our previous observations. As shown in Figure~\ref{fig:nightclubs_policies}, the adaptive policy effectively reconstructs the expert’s reactivity to mean-field fluctuations $\rho$, whereas the vanilla policy fails to capture these dynamics, serving as a mere average of expert actions.

\section{Conclusion}
\label{sec:ccl}

We present the first framework for Imitation Learning in Mean Field Games with common noise. By extending Behavioral Cloning and Adversarial proxies to stochastic mean fields, we prove these metrics control both exploitability and performance relative to an expert. Our analysis highlights the critical role of \defi{adaptive policies}, showing that population-unaware strategies fail to capture equilibrium dynamics under stochastic fluctuations. To validate this, we developed a novel pipeline combining generalized Fictitious Play and Deep Learning to approximate population-aware policies. Future work should focus on extending the framework for instance to state-action mean field interactions and applying the method to real-world data.

\section*{Impact Statement}
While our work is primarily theoretical, we believe it could find implications in real-world applications involving massive agent interactions. Due to high dimensionality and the difficulty of characterization, scientific models often omit common noise: the external stochastic signals that affect an entire population simultaneously. However, real-world equilibria are influenced by these global signals. In financial markets, for example, a single statement from a political figure on social media can trigger instantaneous, population-wide shifts in trading volumes and asset prices. Or in energy infrastructure, environmental randomness in renewable energy production affects all producers in a geographic region simultaneously. Our results suggest that in those situations, Imitation Learning with population-aware policies is more robust than population-unaware policies.

\bibliography{biblio}
\bibliographystyle{icml2026}


\newpage
\appendix
\onecolumn
\allowdisplaybreaks

\section{Useful Inequalities and Proofs}
\label{appendix:theory}

In this appendix, we first present the inequalities that are used to establish our theoretical results. We then proceed to the proofs of the theorems. 

\subsection{Useful Inequalities}

First we extend \citep[Lemma 2 C.4]{ramponi2023imitation}.

\begin{lemma}
    \label{lemma:Vpi3pi1-Vpi3pi2}
    Under Assumptions~\ref{assumption:Lipschitz} and~\ref{assumption:rmax}, for any policies $\pi^1, \pi^2, \pi^3 \in \Pi$, we have:
    \begin{flalign*}
        |V(\pi^3, \pi^1) - V(\pi^3, \pi^2)| &\le L_r \mathbb{E} \left[ \sum_{n=0}^{H-1} \|\rho^1_n - \rho^2_n\|_1 \right] + r_{\max} \mathbb{E} \left[ \sum_{n=0}^{H-1} \|\rho^{1,3}_n - \rho^{2,3}_n\|_1 \right] &\\
        &\quad + r_{\max} \sum_{n=0}^{H-1} \mathbb{E} \left[ \mathbb{E}_{X \sim \rho^{2,3}_n} \left[ \|\pi^3_n(X, \rho^1_n) - \pi^3_n(X, \rho^2_n)\|_1 \right] \right]. &
    \end{flalign*}
\end{lemma}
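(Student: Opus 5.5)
We write the value function in occupation-measure form, conditional on the common noise. Conditionally on $\varepsilon^0$, the state law of the deviating agent is $\rho^{i,3}_n := \rho^{\pi^i,\pi^3,\varepsilon^0}_n$ for $i\in\{1,2\}$. Its action is drawn from $\pi^3_n(\cdot\vert X,\rho^i_n)$, where $\rho^i_n := \rho^{\pi^i,\varepsilon^0}_n$. Hence
\begin{equation*}
V(\pi^3,\pi^i) = \mathbb{E}\Big[\sum_{n=0}^{H-1}\sum_{x,a}\rho^{i,3}_n(x)\,\pi^3_n(a\vert x,\rho^i_n)\,r(x,a,\rho^i_n)\Big].
\end{equation*}
Since both values are driven by the same common noise trajectory, we can subtract inside the outer expectation and bound the difference pathwise for each $n$. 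We then sum over $n$ and take $\mathbb{E}$.

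For a fixed $n$ and realization, we split the difference of the integrands by a telescoping decomposition into three terms, in the following order.
\begin{itemize}
\item[(i)] Change the reward argument:
\begin{equation*}
\sum_{x,a}\rho^{1,3}_n(x)\pi^3_n(a\vert x,\rho^1_n)\big[r(x,a,\rho^1_n)-r(x,a,\rho^2_n)\big].
\end{equation*}
This is bounded by $L_r\|\rho^1_n-\rho^2_n\|_1$ by Assumption~\ref{assumption:Lipschitz}, because the weights form a probability measure.
\item[(ii)] Change the state law:
\begin{equation*}
\sum_{x,a}\big[\rho^{1,3}_n(x)-\rho^{2,3}_n(x)\big]\pi^3_n(a\vert x,\rho^1_n)\,r(x,a,\rho^2_n).
\end{equation*}
This is bounded by $r_{\max}\|\rho^{1,3}_n-\rho^{2,3}_n\|_1$ using Assumption~\ref{assumption:rmax}; summing over $a$ removes the policy.
\item[(iii)] Change the strategy argument:
\begin{equation*}
\sum_{x}\rho^{2,3}_n(x)\sum_a\big[\pi^3_n(a\vert x,\rho^1_n)-\pi^3_n(a\vert x,\rho^2_n)\big]r(x,a,\rho^2_n).
\end{equation*}
This is bounded by $r_{\max}\,\mathbb{E}_{X\sim\rho^{2,3}_n}\|\pi^3_n(X,\rho^1_n)-\pi^3_n(X,\rho^2_n)\|_1$.
\end{itemize}

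Taking absolute values, summing over $n$, and applying the outer expectation over $\varepsilon^0$ gives exactly the three terms of the statement. The order of the telescoping is chosen so that the strategy-mismatch term is weighted by $\rho^{2,3}_n$, as the lemma requires. Accordingly, term (ii) keeps $\pi^3_n(\cdot\vert x,\rho^1_n)$ and $r(\cdot,\rho^2_n)$.

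The main point of care is measure-theoretic rather than computational. The bound $|r|\le r_{\max}$ holds only almost surely along mean-field trajectories, and the estimates must be applied pathwise before taking expectations. This is legitimate because $\rho^2_n$ is of the form $\rho^{\pi,\varepsilon^0}_n$, so Assumption~\ref{assumption:rmax} applies to it almost surely. The second point is justifying the conditional representation of $V$ via the tower property, given that $(X_n,\alpha_n)$ has conditional law $\mu^{\pi^i,\pi^3,\varepsilon^0}_n$. Neither step needs Lipschitz continuity of $P$ or of the policies; those enter only later, when the right-hand side is bounded further.
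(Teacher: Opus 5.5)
Your proposal is correct and is essentially the paper's proof. The paper writes the difference through the occupation measures $\mu^{1,3}_n,\mu^{2,3}_n$, bounds a reward-change term weighted by $\mu^{1,3}_n$ and a term $|\mu^{1,3}_n-\mu^{2,3}_n|\,|r(\cdot,\rho^2_n)|$, and then splits $\mu^{1,3}_n-\mu^{2,3}_n$ exactly as in your (ii)--(iii), with $\pi^3_n(\cdot\vert x,\rho^1_n)$ multiplying the state-law difference and $\rho^{2,3}_n$ weighting the strategy difference; your single three-term telescoping and your pathwise application of the almost-sure bound are a slightly more explicit version of the same argument.
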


\begin{proof}
    Expanding the value function difference:
    \begin{flalign*}
        |V(\pi^3, \pi^1) - V(\pi^3, \pi^2)| &= \left| \mathbb{E} \left[ \sum_{n=0}^{H-1} \sum_{x,a} \left( \mu^{1,3}_n(x, a) r(x, a, \rho^{1}_n) - \mu^{2,3}_n(x, a) r(x, a, \rho^{2}_n) \right) \right] \right| &\\
        &\le \underbrace{\mathbb{E} \left[ \sum_{n,x,a} \mu^{1,3}_n(x, a) |r(x, a, \rho^1_n) - r(x, a, \rho^2_n)| \right]}_{A} &\\
        &\quad + \underbrace{\mathbb{E} \left[ \sum_{n,x,a} |(\mu^{1,3}_n(x, a) - \mu^{2,3}_n(x, a)) r(x, a, \rho^2_n)| \right]}_{B}. &
    \end{flalign*}
    
    For term $A$, by the Lipschitz continuity of $r$ in the mean-field:
    \begin{flalign*}
        A &\le \mathbb{E} \left[ \sum_{n=0}^{H-1} L_r \|\rho^1_n - \rho^2_n\|_1 \sum_{x,a} \mu^{1,3}_n(x, a) \right] = L_r \mathbb{E} \left[ \sum_{n=0}^{H-1} \|\rho^1_n - \rho^2_n\|_1 \right]. &
    \end{flalign*}
    
    For term $B$, we use the definition $\mu_n(x,a) = \rho_n(x)\pi_n(a|x, \rho)$ and the triangle inequality:
    \begin{flalign*}
        B &\le r_{\max} \mathbb{E} \left[ \sum_{n,x,a} |\rho^{1,3}_n(x)\pi^3_n(a|x, \rho^1_n) - \rho^{2,3}_n(x)\pi^3_n(a|x, \rho^2_n)| \right] &\\
        &\le r_{\max} \mathbb{E} \left[ \sum_{n,x,a} \pi^3_n(a|x, \rho^1_n) |\rho^{1,3}_n(x) - \rho^{2,3}_n(x)| + \rho^{2,3}_n(x) |\pi^3_n(a|x, \rho^1_n) - \pi^3_n(a|x, \rho^2_n)| \right] &\\
        &= r_{\max} \mathbb{E} \left[ \sum_{n=0}^{H-1} \|\rho^{1,3}_n - \rho^{2,3}_n\|_1 \right] + r_{\max} \sum_{n=0}^{H-1} \mathbb{E} \left[ \mathbb{E}_{X \sim \rho^{2,3}_n} \left[ \|\pi^3_n(X, \rho^1_n) - \pi^3_n(X, \rho^2_n)\|_1 \right] \right]. &
    \end{flalign*}
    This concludes the proof.
\end{proof}

Now we verify \citep[Lemma 3 C.4]{ramponi2023imitation}.
\begin{lemma}
    \label{lemma:rhoE-rhoEA_BC-muE-muEA_BC}
    Recall that $\delta^{BC} = \max_{n \in [H-1]} \EE(\EE_{X \sim \rho^E_n}( \| \pi^E_n(X, \rho^E_n) - \pi^A_n(X, \rho^E_n) \|_1)).$ We have 
    \begin{equation}
        \label{eq:rhoE-rhoEA_BC}
        \EE(\sum_{n=0}^{H-1} \| \rho^{E}_n - \rho^{EA}_n \|_1) \leq H^2 \delta^{BC},
    \end{equation}
    and
    \begin{equation}
        \label{eq:muE-muEA_BC}
        \EE(\sum_{n=0}^{H-1} \| \mu^{E}_n - \mu^{EA}_n \|_1) \leq H^2 \delta^{BC}.
    \end{equation}
\end{lemma}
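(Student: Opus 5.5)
The plan is to compare, for a fixed common-noise trajectory $\varepsilon^0$, the two single-agent flows $\rho^E_n=\rho^{\pi^E,\pi^E,\varepsilon^0}_n$ and $\rho^{EA}_n=\rho^{\pi^E,\pi^A,\varepsilon^0}_n$. These are driven by the \emph{same} population flow $\rho^E$. The only difference between them is the individual strategy used: $\pi^E_n$ versus $\pi^A_n$, both evaluated at the common argument $(x,\rho^E_n)$. Consequently no Lipschitz constant ($L_P$, $L_E$) will be needed, which matches the statement's dependence only on $H$ and $\delta^{BC}$.

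\textbf{Step 1: a one-step recursion.} Since $\rho^E_0=\rho^{EA}_0=\rho_0$, I write
\begin{equation*}
\rho^E_{n+1}-\rho^{EA}_{n+1}=\Phi(\rho^E_n,\pi^E_n;\rho^E_n,\varepsilon^0_{n+1})-\Phi(\rho^{EA}_n,\pi^A_n;\rho^E_n,\varepsilon^0_{n+1}).
\end{equation*}
I add and subtract $\Phi(\rho^E_n,\pi^A_n;\rho^E_n,\varepsilon^0_{n+1})$ and use that $\Phi$ is linear in its first argument. Because $P(\cdot\mid x,a,\rho,e^0)$ and $\pi^A_n(\cdot\mid x,\rho)$ are probability vectors, the map $\rho'\mapsto\Phi(\rho',\pi_0;\rho,e^0)$ is nonexpansive in $\ell_1$. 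This gives
\begin{equation*}
\|\rho^E_{n+1}-\rho^{EA}_{n+1}\|_1\le \sum_x\rho^E_n(x)\,\|\pi^E_n(x,\rho^E_n)-\pi^A_n(x,\rho^E_n)\|_1+\|\rho^E_n-\rho^{EA}_n\|_1,
\end{equation*}
pathwise in $\varepsilon^0$.

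\textbf{Step 2: take expectations and unroll.} Taking expectations over $\varepsilon^0$, the first term is exactly $\delta^{BC}_n\le\delta^{BC}$. Unrolling from the zero initial condition gives $\mathbb{E}\|\rho^E_n-\rho^{EA}_n\|_1\le n\,\delta^{BC}$. Summing over $n\in[H-1]$ yields at most $\tfrac{H(H-1)}{2}\delta^{BC}\le H^2\delta^{BC}$, which proves \eqref{eq:rhoE-rhoEA_BC}.

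\textbf{Step 3: state-action laws.} Here $\mu^E_n(x,a)=\rho^E_n(x)\pi^E_n(a\mid x,\rho^E_n)$ and $\mu^{EA}_n(x,a)=\rho^{EA}_n(x)\pi^A_n(a\mid x,\rho^E_n)$. The second formula holds because, in the definition of $V(\pi^A,\pi^E)$, the agent's action is drawn from $\pi^A$ evaluated at the population mean field $\rho^E_n$. Adding and subtracting $\rho^E_n(x)\pi^A_n(a\mid x,\rho^E_n)$, exactly as for term $B$ in Lemma~\ref{lemma:Vpi3pi1-Vpi3pi2}, gives
\begin{equation*}
\|\mu^E_n-\mu^{EA}_n\|_1\le \mathbb{E}_{X\sim\rho^E_n}\|\pi^E_n(X,\rho^E_n)-\pi^A_n(X,\rho^E_n)\|_1+\|\rho^E_n-\rho^{EA}_n\|_1.
\end{equation*}
Taking expectations, this is at most $(n+1)\delta^{BC}$. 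Summing over $n$ gives $\tfrac{H(H+1)}{2}\delta^{BC}\le H^2\delta^{BC}$, which proves \eqref{eq:muE-muEA_BC}.

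\textbf{Main obstacle.} The only real care needed is the pathwise-then-expectation bookkeeping. The recursion in Step 1 holds for every realization of $\varepsilon^0$, and the expectation is taken only afterwards, so that $\delta^{BC}_n$ appears exactly. One should also check that the mean-field argument passed to $\pi^A$ is $\rho^E_n$ and not $\rho^{EA}_n$; otherwise an $L_E$ term would enter. With the paper's definitions this check goes through, and the argument is then essentially \citep[Lemma 3]{ramponi2023imitation} applied conditionally on the common noise.
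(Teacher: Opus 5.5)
Your proposal is correct and follows essentially the same route as the paper. You add and subtract the term with $\pi^A_n$ evaluated at $(x,\rho^E_n)$, use that $P$ and $\pi^A_n$ are probability vectors to get $\mathbb{E}\|\rho^E_{n+1}-\rho^{EA}_{n+1}\|_1\le\delta^{BC}_n+\mathbb{E}\|\rho^E_n-\rho^{EA}_n\|_1$, unroll this to $n\,\delta^{BC}$, and repeat the same splitting for $\mu$ to get $(n+1)\delta^{BC}$; your pathwise, operator-level phrasing is a cleaner packaging of the paper's coordinatewise computation and gives the slightly sharper intermediate constants $H(H-1)/2$ and $H(H+1)/2$.
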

\begin{proof}
    \begin{align*}
        \EE\!\left(\| \rho^{E}_n - \rho^{EA}_n \|_1\right)
        &= 
        \EE\!\left( \sum_{x'} \lvert \rho^{E}_n(x') - \rho^{EA}_n(x') \rvert \right) \\[4pt]
        &=
        \EE\!\left(
        \sum_{x'} 
        \left\lvert 
        \sum_{x,a} 
        \rho^{E}_{n-1}(x)\,\pi^E_{n-1}(a \vert x,\rho^{E}_{n-1})\,P(x'\vert x,a,\rho^{E}_{n-1}, \varepsilon^0_{n+1})
        -
        \rho^{EA}_{n-1}(x)\,\pi^A_{n-1}(a \vert x,\rho^{E}_{n-1})\,P(x'\vert x,a,\rho^{E}_{n-1}, \varepsilon^0_{n+1})
        \right\rvert
        \right) \\[6pt]
        &=
        \EE\!\left(
        \sum_{x'} 
        \left\lvert 
        \sum_{x,a}
        P(x' \vert x,a,\rho^{E}_{n-1}, \varepsilon^0_{n+1})
        \bigl(
        \rho^{E}_{n-1}(x)\pi^E_{n-1}(a \vert x,\rho^{E}_{n-1})
        -
        \rho^{EA}_{n-1}(x)\pi^A_{n-1}(a \vert x,\rho^{E}_{n-1})
        \bigr)
        \right\rvert
        \right) \\[6pt]
        &\le 
        \EE\!\left(
        \sum_{x'}\sum_{x,a}
        P(x' \vert x,a,\rho^{E}_{n-1}, \varepsilon^0_{n+1})\,
        \left\lvert 
        \rho^{E}_{n-1}(x)\pi^E_{n-1}(a \vert x,\rho^{E}_{n-1})
        -
        \rho^{EA}_{n-1}(x)\pi^A_{n-1}(a \vert x,\rho^{E}_{n-1})
        \right\rvert
        \right) \\[6pt]
        &=
        \EE\!\left(
        \sum_{x,a}
        \left\lvert 
        \rho^{E}_{n-1}(x)\pi^E_{n-1}(a \vert x,\rho^{E}_{n-1})
        -
        \rho^{EA}_{n-1}(x)\pi^A_{n-1}(a \vert x,\rho^{E}_{n-1})
        \right\rvert
        \underbrace{\sum_{x'} P(x' \vert x,a,\rho^{E}_{n-1},\varepsilon^0_{n+1}}_{=1})
        \right) \\[6pt]
        &\le 
        \EE\!\left(
        \sum_{x,a}
        \left\lvert 
        \rho^{E}_{n-1}(x)\bigl(\pi^E_{n-1}(a \vert x,\rho^{E}_{n-1}) - \pi^A_{n-1}(a \vert x,\rho^{E}_{n-1})\bigr)
        +
        (\rho^{E}_{n-1}(x)-\rho^{EA}_{n-1}(x))\,\pi^A_{n-1}(a \vert x,\rho^{E}_{n-1})
        \right\rvert
        \right) \\[6pt]
        &\le
        \EE\!\left(
        \sum_{x,a}
        \rho^{E}_{n-1}(x)\,
        \lvert \pi^E_{n-1}(a \vert x,\rho^{E}_{n-1}) - \pi^A_{n-1}(a \vert x,\rho^{E}_{n-1}) \rvert
        \right)
        +
        \EE\!\left(
        \sum_{x,a}
        \lvert \rho^{E}_{n-1}(x) - \rho^{EA}_{n-1}(x) \rvert\,
        \pi^A_{n-1}(a \vert x,\rho^{E}_{n-1})
        \right) \\[6pt]
        &=
        \underbrace{
        \EE\!\left(
        \EE_{x\sim \rho^E_{n-1}}
        \bigl[
        \lvert
        \pi^E_{n-1}(x,\rho^{E}_{n-1})
        -
        \pi^A_{n-1}(x,\rho^{E}_{n-1})
        \rvert
        \bigr]
        \right)
        }_{=\ \delta^{BC}_{n-1}}
        +
        \EE\!\left(
        \sum_{x}
        \lvert \rho^{E}_{n-1}(x) - \rho^{EA}_{n-1}(x) \rvert\,
        \underbrace{\sum_{a}\pi^A_{n-1}(a \vert x,\rho^{E}_{n-1})}_{=1}
        \right) \\[6pt]
        &\le 
        \delta^{BC}_{n-1} + 
        \EE\!\left( \| \rho^{E}_{n-1} - \rho^{EA}_{n-1} \|_1 \right) \\[4pt]
        &\le 
        n\, \delta^{BC}.
    \end{align*}
    That gives Eq.~\eqref{eq:rhoE-rhoEA_BC}. Now,
    \begin{align*}
        \EE\!\left(\| \mu^{E}_n - \mu^{EA}_n \|_1\right)
        &= 
        \EE\!\left( \sum_{x,a} \lvert \mu^{E}_n(x,a) - \mu^{EA}_n(x,a) \rvert \right) \\[4pt]
        &=
        \EE\!\left(
        \sum_{x,a} 
        \lvert 
        \rho^{E}_{n}(x)\,\pi^E_{n}(a \vert x,\rho^{E}_{n-1})
        -
        \rho^{EA}_{n}(x)\,\pi^A_{n}(a \vert x,\rho^{E}_{n-1})
        \rvert
        \right) \\[4pt]
        &=
        \EE\!\left(
        \sum_{x,a} 
        \lvert 
        \rho^{E}_{n}(x)\bigl(\pi^E_{n}(a \vert x,\rho^{E}_{n-1}) - \pi^A_{n}(a \vert x,\rho^{E}_{n-1})\bigr)
        +
        \pi^A_{n}(a \vert x,\rho^{E}_{n-1})\bigl(\rho^{E}_{n}(x)-\rho^{EA}_{n}(x)\,\bigl)\,
        \rvert
        \right) \\[4pt]
        &\le
        \EE\!\left(
        \sum_{x,a} 
        \rho^{E}_{n}(x)\lvert \pi^E_{n}(a \vert x,\rho^{E}_{n-1}) - \pi^A_{n}(a \vert x,\rho^{E}_{n-1})\rvert \!\right)
        +
        \EE\!\left(\sum_{x,a} 
        \pi^A_{n}(a \vert x,\rho^{E}_{n-1}) \lvert\, \rho^{E}_{n}(x)-\rho^{EA}_{n}(x)\,\rvert
        \!\right) \\[4pt]
        &\le 
        \delta^{BC}_{n-1} + 
        \underbrace{\EE\!\left( \| \rho^{E}_{n} - \rho^{EA}_{n} \|_1 \right)}_{\le n \, \delta^{BC}} \\[6pt]
        &\le 
        (n+1) \, \delta^{BC}.
    \end{align*}
    That gives Eq.~\eqref{eq:muE-muEA_BC}.
\end{proof}
We extend \citep[Lemma 1 C.1]{ramponi2023imitation}. 
\begin{lemma}
    \label{lemma:Vpi1piA-VpiApiA}
    Under Assumption~\eqref{assumption:Lipschitz}~\eqref{assumption:rmax}, for any policies $\pi^A, \pi^1 \in \Pi$, we have 
    \begin{equation}
    \label{eq:Vpi1piA-VpiApiA}
    \begin{split}
        V(\pi^1, \pi^A) - V(\pi^A, \pi^A)\leq&  2 L_r \, \EE\big(\sum_{n = 0}^{H-1} \|\rho^A_n - \rho^E_n\|_1) \\
        &
        +r_{\max} \big[\EE( \sum_{n=0}^{H-1} \| \mu^{E}_n - \mu^{E,A}_n\|\big)  + \EE (\sum_{n= 0}^{H-1} \| \rho^{A,1}_n - \rho^{E,1}_n\|_1) +  \EE (\sum_{n= 0}^{H-1} \| \rho^{A, A}_n - \rho^{E, A}_n\|_1) \\
        &+ \sum_{n= 0}^{H-1}\EE \!\left(\EE_{X \sim \rho^{E, 1}_n}(\|\pi^1(X, \rho^A_n) - \pi^1(X, \rho^E_n))\!\right)   
        + \sum_{n= 0}^{H-1}\EE \!\left(\EE_{X \sim \rho^{E, A}_n}(\|\pi^A(X, \rho^A_n) - \pi^A(X, \rho^E_n)\|)\!\right)\big].
    \end{split}
    \end{equation}
    And by using the Lipschitz property of our policies:
    \begin{equation}
        \label{eq:Vpi1piA-VpiApiA-Lipschitz}
         \begin{split}
            V(\pi^1, \pi^A) - V(\pi^A, \pi^A)\leq&  2 (L_r + L_E \,r_{\max}) \, \EE\big(\sum_{n = 0}^{H-1} \|\rho^A_n - \rho^E_n\|_1) \\
            &
            +r_{\max} \big[\EE( \sum_{n=0}^{H-1} \| \mu^{E}_n - \mu^{E,A}_n\|_1\big)  + \EE (\sum_{n= 0}^{H-1} \| \rho^{A,1}_n - \rho^{E,1}_n\|_1) +  \EE (\sum_{n= 0}^{H-1} \| \rho^{A, A}_n - \rho^{E, A}_n\|_1)
         \end{split}
    \end{equation}
\end{lemma}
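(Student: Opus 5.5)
The plan is to telescope through the expert population. Because $\pi^E$ is a Nash equilibrium, $V(\pi^1,\pi^E)-V(\pi^E,\pi^E)\le 0$. This gives the decomposition
\begin{align*}
V(\pi^1,\pi^A)-V(\pi^A,\pi^A) &\le \big[V(\pi^1,\pi^A)-V(\pi^1,\pi^E)\big] + \big[V(\pi^E,\pi^E)-V(\pi^A,\pi^E)\big] \\
&\quad + \big[V(\pi^A,\pi^E)-V(\pi^A,\pi^A)\big].
\end{align*}
The first and third brackets change only the population policy while keeping the individual policy fixed. The middle bracket changes only the individual policy while keeping the population at $\pi^E$.

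For the first bracket, I apply Lemma~\ref{lemma:Vpi3pi1-Vpi3pi2} with $(\pi^1,\pi^2,\pi^3)=(\pi^A,\pi^E,\pi^1)$. This yields
\[
L_r\,\EE\Big[\sum_n\|\rho^A_n-\rho^E_n\|_1\Big] + r_{\max}\,\EE\Big[\sum_n\|\rho^{A,1}_n-\rho^{E,1}_n\|_1\Big] + r_{\max}\sum_n\EE\big[\EE_{X\sim\rho^{E,1}_n}\|\pi^1_n(X,\rho^A_n)-\pi^1_n(X,\rho^E_n)\|_1\big].
\]
For the third bracket, I apply the same lemma with $(\pi^A,\pi^E,\pi^A)$. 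This yields the analogous three terms: $L_r$ times the $\rho^A-\rho^E$ sum, $r_{\max}$ times the $\rho^{A,A}-\rho^{E,A}$ sum, and $r_{\max}$ times the policy-difference sum under $X\sim\rho^{E,A}_n$. Adding the two $L_r$ contributions produces the coefficient $2L_r$.

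For the middle bracket, the population is $\pi^E$ in both value functions, so the rewards $r(x,a,\rho^E_n)$ coincide. The difference is therefore
\[
\EE\Big[\sum_n\sum_{x,a}\big(\mu^E_n-\mu^{E,A}_n\big)(x,a)\,r(x,a,\rho^E_n)\Big] \le r_{\max}\,\EE\Big[\sum_n\|\mu^E_n-\mu^{E,A}_n\|_1\Big],
\]
using Assumption~\ref{assumption:rmax}. Collecting the three brackets gives \eqref{eq:Vpi1piA-VpiApiA}.

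For \eqref{eq:Vpi1piA-VpiApiA-Lipschitz}, I apply Definition~\ref{definition:piE-Lipschitz} to each of the two policy-difference terms. The first uses $\pi=\pi^1$, sampling measure $\rho^{E,1}$, and comparison $\rho^{A,A}=\rho^A$ versus $\rho^{E,E}=\rho^E$. The second uses $\pi=\pi^A$ with sampling measure $\rho^{E,A}$. Each term is then bounded by $L_E\,\EE\|\rho^A_n-\rho^E_n\|_1$, and the two together give $2L_E r_{\max}$, which merges with $2L_r$ into $2(L_r+L_Er_{\max})$.

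The main subtlety is the sign in the equilibrium step: I must drop $V(\pi^1,\pi^E)-V(\pi^E,\pi^E)$ rather than bound it in absolute value. This requires $\pi^1$ to range over the same restricted class $\Pi$ in which $\pi^E$ is a best response. Beyond that, the only care needed is matching the roles of the policy arguments when invoking Lemma~\ref{lemma:Vpi3pi1-Vpi3pi2}.
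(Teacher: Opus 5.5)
Your proposal is correct and matches the paper's proof: the paper also inserts $V(\pi^1,\pi^E)$, $V(\pi^E,\pi^E)$ and $V(\pi^A,\pi^E)$, drops the nonpositive equilibrium term, bounds the two population-change brackets with Lemma~\ref{lemma:Vpi3pi1-Vpi3pi2}, and bounds the individual-change bracket by $r_{\max}\,\EE\big[\sum_n\|\mu^E_n-\mu^{E,A}_n\|_1\big]$. Your explicit use of $\rho^{A,A}=\rho^A$ and $\rho^{E,E}=\rho^E$ in Definition~\ref{definition:piE-Lipschitz} spells out the Lipschitz step, which the paper only asserts.
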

\begin{proof}
    Let $\pi^1\in \Pi$, we can write
    \begin{flalign*}
        &V(\pi^1, \pi^A) - V(\pi^A, \pi^A) = \underbrace{V(\pi^1, \pi^A) - V(\pi^E, \pi^E)}_{A} + \underbrace{V(\pi^E, \pi^E) - V(\pi^A, \pi^A) }_{B}.&
    \end{flalign*}
    First, let us work on $A$:
    \begin{flalign*}
        &A = \underbrace{V(\pi^1, \pi^A) - V(\pi^1, \pi^E)}_{A_1} + \underbrace{V(\pi^1, \pi^E) -V(\pi^E, \pi^E) }_{A_2}.&
    \end{flalign*}
    $A_2$ is non positive since $\pi^E$ is an equilibrium. We bound $A_1$ with Lemma~\ref{lemma:Vpi3pi1-Vpi3pi2} and obtain 
    \begin{flalign*}
        &A \leq  L_r \, \EE\big(\sum_{n = 0}^{H-1} \|\rho^A_n - \rho^E_n\|_1) + r_{\max} \EE (\sum_{n= 0}^{H-1} \| \rho^{A,1}_n - \rho^{E,1}_n\|_1)
        +r_{\max}  \sum_{n= 0}^{H-1}\EE (\EE_{X \sim \rho^{E, 1}_n}(\|\pi^1_n(X, \rho^A_n) - \pi^1_n(X, \rho^E_n))). &
    \end{flalign*}
    Consider $B$:
    \begin{flalign*}
        &B \leq \underbrace{V(\pi^E, \pi^E) - V(\pi^A, \pi^E)}_{B_1} + \underbrace{V(\pi^A, \pi^E) - V(\pi^A, \pi^A)}_{B_2} &
    \end{flalign*}
    \begin{flalign*}
        B_1 &\leq \EE\left(\sum_{n=0}^{H-1} \sum_{x,a}\big( \mu^{E, E}_n(x,a) r(x, a, \rho^E) - \mu^{E,A}_n(x,a)r(x, a, \rho^E)\big) \right) &\\
        &\leq r_{\max}\EE\left( \sum_{n=0}^{H-1} \sum_{x,a}\big| \mu^{E, E}_n(x,a) - \mu^{E,A}_n(x,a) \big|\right)& \\
        &\leq r_{\max} \EE( \sum_{n=0}^{H-1} \| \mu^{E, E}_n - \mu^{E,A}_n\|\big).&
    \end{flalign*}
    By Lemma~\ref{lemma:Vpi3pi1-Vpi3pi2} we obtain 
    \begin{equation*}
        B_2 \leq L_r \, \EE\big(\sum_{n = 0}^{H-1} \|\rho^A_n - \rho^E_n\|_1) + r_{\max} \EE (\sum_{n= 0}^{H-1} \| \rho^{A, A}_n - \rho^{E, A}_n\|_1)
        +r_{\max}  \sum_{n= 0}^{H-1}\EE (\EE_{x \sim \rho^{E, A}}(\|\pi^A_n(x, \rho^A_n) - \pi^A_n(x, \rho^E_n))). 
    \end{equation*}
    Summing everything we obtain Eq.~\eqref{eq:Vpi1piA-VpiApiA}.
\end{proof}

We give two new lemmas. 
\begin{lemma}
    \label{lemma:rhoA-rhoE_LP}
    \begin{equation}
        \label{eq:rhoA-rhoE_BC_LP=0}
        \EE(\sum_{n= 0}^{H-1}\|\rho^A_n - \rho^E_n \|_1) \leq \delta^{BC}\frac{(1 + L_E + L_P)^H}{ (L_E + L_P)^2},
    \end{equation}
    \begin{equation}
        \label{eq:rhoA-rhoE_ADV_LP}
        \EE(\|\rho^A_n - \rho^E_n \|_1) \leq \EE(\|\mu_{n}^A - \mu_{n}^E\|_1) \leq \delta^{ADV} .
    \end{equation}
\end{lemma}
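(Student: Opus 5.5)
For \eqref{eq:rhoA-rhoE_BC_LP=0}, the plan is a one-step recursion on $d_n := \EE(\|\rho^A_n - \rho^E_n\|_1)$, followed by a discrete Grönwall argument and a geometric sum. Since $\rho^A_0=\rho^E_0=\rho_0$, we have $d_0=0$. Write
\begin{align*}
\rho^A_{n+1}-\rho^E_{n+1}
&= \Phi(\rho^A_n,\pi^A_n;\rho^A_n,\varepsilon^0_{n+1}) - \Phi(\rho^E_n,\pi^E_n;\rho^E_n,\varepsilon^0_{n+1}),
\end{align*}
and insert intermediate terms in this order.

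\textbf{(i)} Change the kernel argument $\rho^A_n\to\rho^E_n$ inside $P$, keeping the agent distribution $\rho^A_n$ and strategy $\pi^A_n(\cdot\vert x,\rho^A_n)$. Assumption~\ref{assumption:Lipschitz} bounds this term by $L_P\|\rho^A_n-\rho^E_n\|_1$, because the weights $\rho^A_n(x)\pi^A_n(a\vert x,\rho^A_n)$ sum to one.

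\textbf{(ii)} Change the policy input $\pi^A_n(\cdot\vert x,\rho^A_n)\to\pi^A_n(\cdot\vert x,\rho^E_n)$. Summing over $x'$ kills $P$, leaving $\EE_{X\sim\rho^A_n}\|\pi^A_n(X,\rho^A_n)-\pi^A_n(X,\rho^E_n)\|_1$. Definition~\ref{definition:piE-Lipschitz} is applied with $\pi'=\pi''=\pi^A$, $\pi^1=\pi^2=\pi^A$ and $\pi^3=\pi^4=\pi^E$, so that $\rho^{\pi,\pi}=\rho^\pi$. This bounds the expectation of the term by $L_E d_n$.

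\textbf{(iii)} Change the agent distribution $\rho^A_n\to\rho^E_n$ while keeping $\pi^A_n(\cdot\vert x,\rho^E_n)$. As in Lemma~\ref{lemma:rhoE-rhoEA_BC-muE-muEA_BC}, this costs at most $\|\rho^A_n-\rho^E_n\|_1$.

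\textbf{(iv)} Change the strategy $\pi^A_n\to\pi^E_n$ under $X\sim\rho^E_n$. Its expectation is exactly $\delta^{BC}_n\le\delta^{BC}$.

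Collecting the four terms gives
\[
d_{n+1}\le (1+L_E+L_P)\,d_n+\delta^{BC}.
\]
With $c=1+L_E+L_P$, unrolling yields $d_n\le \delta^{BC}\frac{c^n-1}{c-1}$. Summing over $n\in[H-1]$ gives
\[
\sum_{n=0}^{H-1} d_n \le \delta^{BC}\frac{c^H-1-H(c-1)}{(c-1)^2}\le \delta^{BC}\frac{c^H}{(L_E+L_P)^2},
\]
which is the claim. This uses $L_E+L_P>0$, matching the paper's exclusion of the case $L_P=L_E=0$.

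The main obstacle is step (ii): the Lipschitz property in Definition~\ref{definition:piE-Lipschitz} holds only in expectation and only along the specific mean-field flows of the form $\rho^{\pi,\pi'}$. The telescoping must therefore be arranged so that every policy-input comparison is between $\rho^A_n$ and $\rho^E_n$, integrated against one of these flows. The conditional-on-noise bounds in (i), (iii) and (iv) then pass to expectations by linearity. The noise index on $P$ must also match the one used in the recursion.

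For \eqref{eq:rhoA-rhoE_ADV_LP}, marginalising over the action is an $\ell_1$-contraction: $\rho_n(x)=\sum_a\mu_n(x,a)$ for both $A$ and $E$, so the triangle inequality gives $\|\rho^A_n-\rho^E_n\|_1\le\|\mu^A_n-\mu^E_n\|_1$ pointwise in $\varepsilon^0$. Taking expectations and then the maximum over $n$ in the definition of $\delta^{ADV}$ concludes.
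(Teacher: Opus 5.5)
Your proposal is correct and follows the paper's proof essentially step for step. It uses the same four-term telescoping to reach $d_{n+1}\le(1+L_E+L_P)\,d_n+\delta^{BC}$, the same geometric summation, and the same marginalisation argument for the ADV bound. The only difference is cosmetic: you apply the $L_E$-Lipschitz bound with $X\sim\rho^A_n$ (the flow $\rho^{\pi^A,\pi^A}$) before swapping the agent distribution, whereas the paper swaps the distribution first and applies it with $X\sim\rho^E_n$; both are admissible instances of Definition~\ref{definition:piE-Lipschitz}, and your unrolled bound $d_n\le\delta^{BC}(c^n-1)/(c-1)$ has the correct exponent where the paper's intermediate display is off by one (harmlessly, since the final summed bound still holds).
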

\begin{proof}
    \begin{align*}
        \EE(\|\rho^A_{n+1}-\rho^E_{n+1}\|) &= \EE(\sum_{x', x, a}\lvert \pi^A_{n}(a\vert x, \rho_n^{A}) \rho^A_n(x)  P(x'\vert x, a,\rho^A_n,  \varepsilon^0_{n+1}) - \pi^E_{n}(a\vert x, \rho_n^{E}) \rho^E_n(x)  P(x'\vert x, a,\rho^E_n \varepsilon^0_{n+1})\rvert ) \\
        &\leq \EE(\sum_{x', x, a}\pi^A_{n}(a\vert x, \rho_n^{A}) \rho^A_n(x)  \lvert P(x'\vert x, a,\rho^A_n,  \varepsilon^0_{n+1}) - P(x'\vert x, a,\rho^E_n,  \varepsilon^0_{n+1}) \rvert  ) \\
        &+ \EE(\sum_{x',x, a}P(x'\vert x, a,\rho^E_n,  \varepsilon^0_{n+1}) \lvert \pi^A_{n}(a\vert x, \rho_n^{A}) \rho^A_n(x) - \pi^E_{n}(a\vert x, \rho_n^{E}) \rho^E_n(x) \rvert ) \\
        &\leq \underbrace{\EE(\sum_{x, a} \pi^A_{n}(a\vert x, \rho_n^{A}) \rho^A_n(x)  \| P( x, a,\rho^A_n,  \varepsilon^0_{n+1}) - P( x, a,\rho^E_n,  \varepsilon^0_{n+1}) \|_1 )}_{\EE(\EE_{(x, a)\sim\mu_n^A}(\| P( x, a,\rho^A_n,  \varepsilon^0_{n+1}) - P( x, a,\rho^E_n,  \varepsilon^0_{n+1}) \|_1))} \\
        &+ \EE(\sum_{x, a}\lvert \pi^A_{n}(a\vert x, \rho_n^{A}) \rho^A_n(x) - \pi^E_{n}(a\vert x, \rho_n^{E}) \rho^E_n(x) \rvert ) \\
        &\leq L_P \EE(\|\rho^A_n - \rho^E_n\|_1) + \underbrace{\EE(\sum_{x, a}\lvert \pi^A_{n}(a\vert x, \rho_n^{A}) \rho^A_n(x) - \pi^A_{n}(a\vert x, \rho_n^{A}) \rho^E_n(x) \rvert )}_{A_1} \\
        &+ \underbrace{\EE(\sum_{x, a}\lvert \pi^A_{n}(a\vert x, \rho_n^{A}) \rho^E_n(x) - \pi^A_{n}(a\vert x, \rho_n^{E}) \rho^E_n(x) \rvert )}_{A_2} \\
        &+ \underbrace{\EE(\sum_{x, a}\lvert \pi^A_{n}(a\vert x, \rho_n^{E}) \rho^E_n(x) - \pi^E_{n}(a\vert x, \rho_n^{E}) \rho^E_n(x) \rvert )}_{A_3}
    \end{align*}
    First we have $A_1 = \EE(\|\rho^A_n - \rho^E_n\|_1).$ Then we have 
    \begin{flalign*}
        &A_2 = \EE\big(\EE_{x\sim \rho^E_n}(\|\pi^A_n(x, \rho^A_n) - \pi^A_n(x, \rho^E_n)\|_1)\big) \leq L_E \,\EE(\|\rho^A_n - \rho^E_n\|_1).&
    \end{flalign*}
    Now,
    \begin{flalign*}
        &A_3 = \underbrace{\EE\big(\EE_{x\sim\rho_n^E}( \| \pi^A_n( x, \rho^E_n) - \pi^E_n(x, \rho^E_n)\|\big)}_{\leq \delta^{BC}}. &\\
    \end{flalign*}
    By combining the terms we have 
    \begin{equation}
        \EE(\|\rho^A_{n+1}-\rho^E_{n+1}\|) \leq \EE(\|\rho^A_{n}-\rho^E_{n}\|)(1 + L_E + L_P) + \delta^{BC} \leq  \delta^{BC} \sum^{n}_{k = 0}(1+L_E + L_P)^k = \delta^{BC} \frac{(1 + L_E + L_P)^n - 1}{L_E + L_P}. 
    \end{equation}
    By summing over $n$ we obtain the result. For Eq.~\eqref{eq:rhoA-rhoE_ADV_LP}, we have 
    \begin{flalign*}
        \EE(\|\rho^A_n - \rho^E_n\|_1) &\leq \EE(\sum_{x}\rvert\rho_n^A(x) - \rho^E_n(x)\lvert ) &\\
        &\leq \EE(\sum_x\rvert\sum_a\mu^A_n(x, a) - \mu^E_n(x, a)\lvert) \\
        &\leq \EE(\sum_{x, a}\rvert \mu^A_n(x, a) - \mu^E_n(x, a)\lvert) \\
        &\leq \EE(\|\mu_n^A - \mu_n^E\|_1) \leq \delta^{ADV}.
    \end{flalign*}
\end{proof}
\begin{lemma}
    \label{lemma:rhoA1-rhoE1_LP=0}
    Let $\pi^1 \in \Pi$. We have 
    \begin{equation}
        \label{eq:rhoA1-rhoE1_BC_LP}
        \EE(\sum_{n=0}^{H-1} \| \rho_n^{A, 1} - \rho_n^{E, 1}\|) \leq \delta^{BC} \frac{ (1 + L_E + L_P)^{H}}{(L_P + L_E)^2}.
    \end{equation}
    and 
    \begin{equation}
        \label{eq:rhoA1-rhoE1_ADV_LP}
        \EE(\sum_{n=0}^{H-1} \| \rho_n^{A, 1} - \rho_n^{E, 1}\|) \leq  \delta^{ADV}(L_P + L_E)H^2.
    \end{equation}
\end{lemma}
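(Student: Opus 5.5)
We obtain both bounds from a one-step recursion for $D_n := \EE(\|\rho^{A,1}_n - \rho^{E,1}_n\|_1)$, mirroring the proof of Lemma~\ref{lemma:rhoA-rhoE_LP}. Here the \emph{same} deviating policy $\pi^1$ drives both single-agent flows, and only the population flows differ ($\rho^A$ versus $\rho^E$). Since $\rho^{A,1}_0=\rho^{E,1}_0=\rho_0$, we have $D_0=0$. Expanding $\rho^{A,1}_{n+1}$ and $\rho^{E,1}_{n+1}$ through $\Phi$, we write the difference as a telescoping sum with three pieces:
\begin{align*}
&\textstyle\sum_{x,a} \rho^{A,1}_n(x)\pi^1_n(a\vert x,\rho^A_n)\big[P(\cdot\vert x,a,\rho^A_n,\varepsilon^0_{n+1})-P(\cdot\vert x,a,\rho^E_n,\varepsilon^0_{n+1})\big],\\
&\textstyle\sum_{x,a} \big[\rho^{A,1}_n(x)-\rho^{E,1}_n(x)\big]\pi^1_n(a\vert x,\rho^A_n)P(\cdot\vert x,a,\rho^E_n,\varepsilon^0_{n+1}),\\
&\textstyle\sum_{x,a} \rho^{E,1}_n(x)\big[\pi^1_n(a\vert x,\rho^A_n)-\pi^1_n(a\vert x,\rho^E_n)\big]P(\cdot\vert x,a,\rho^E_n,\varepsilon^0_{n+1}).
\end{align*}
We take $\ell_1$ norms and expectations, and use that $P$ and $\pi^1_n$ are probability kernels. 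By Assumption~\ref{assumption:Lipschitz}, the first piece is bounded by $L_P\EE\|\rho^A_n-\rho^E_n\|_1$. The second piece is at most $D_n$. For the third piece we use the Mean-Field Lipschitz property (Definition~\ref{definition:piE-Lipschitz}), applied with $\pi'=E,\pi''=1$, $(\pi^1,\pi^2)=(A,A)$ and $(\pi^3,\pi^4)=(E,E)$; this is legitimate because $\pi^1\in\Pi\subset\Pi^\cM_{\text{lip}}$. It gives the bound $L_E\EE\|\rho^A_n-\rho^E_n\|_1$. Writing $e_n:=\EE\|\rho^A_n-\rho^E_n\|_1$, we therefore obtain the key recursion
\[
D_{n+1}\le D_n + (L_P+L_E)\,e_n .
\]

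For the ADV bound \eqref{eq:rhoA1-rhoE1_ADV_LP}, Lemma~\ref{lemma:rhoA-rhoE_LP}, Eq.~\eqref{eq:rhoA-rhoE_ADV_LP}, gives $e_n\le\delta^{ADV}$. Hence $D_n\le n(L_P+L_E)\delta^{ADV}$, and summing over $n\le H-1$ yields at most $(L_P+L_E)H^2\delta^{ADV}$.

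For the BC bound \eqref{eq:rhoA1-rhoE1_BC_LP}, we instead use the per-step estimate derived inside the proof of Lemma~\ref{lemma:rhoA-rhoE_LP}:
\[
e_k\le \delta^{BC}\,\frac{(1+L)^k-1}{L},\qquad L:=L_P+L_E .
\]
Unrolling the recursion gives
\[
D_n\le L\sum_{k<n}e_k\le \delta^{BC}\sum_{k<n}\big((1+L)^k-1\big)\le \delta^{BC}\,\frac{(1+L)^n}{L}.
\]
Summing over $n\le H-1$ then gives
\[
\sum_{n} D_n\le \delta^{BC}\,\frac{(1+L)^H}{L^2},
\]
which is the stated bound (we may discard the negative terms).

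The main obstacle is the third piece. The Lipschitz control of $\pi^1$ must hold in expectation under the law $\rho^{E,1}_n$ of the deviating agent, not under the population law. This is exactly why Definition~\ref{definition:piE-Lipschitz} allows arbitrary sampling flows $\rho^{\pi',\pi''}$, so we must check that the indices can be matched as above. The same step would also need care if a sharper constant were sought, since the $L^2$ denominator relies on $L>0$, which is the regime assumed here.
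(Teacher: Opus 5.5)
Your proposal is correct and follows the paper's own argument. It uses the same split into a transition-kernel term, a single-agent-law term and a policy-Lipschitz term to get $D_{n+1}\le D_n+(L_P+L_E)e_n$, then inserts either $e_n\le\delta^{ADV}$ or the per-step BC bound $e_n\le\delta^{BC}\bigl((1+L)^n-1\bigr)/L$ with $L=L_P+L_E$ from Lemma~\ref{lemma:rhoA-rhoE_LP}, and sums twice. Your explicit slot matching in Definition~\ref{definition:piE-Lipschitz} (sampling under $\rho^{E,1}_n$, with $\rho^{A,A}_n=\rho^A_n$ and $\rho^{E,E}_n=\rho^E_n$) makes precise a step the paper uses without comment.
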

\begin{proof}
    Suppose $L_P + L_E \ne 0$,
    \begin{align*}
        \EE(\| \rho_n^{A, 1} - \rho_n^{E, 1}\|) &= \EE \!\left(\sum_{x'} \vert\rho_n^{A, 1}(x') - \rho_n^{E, 1}(x')\vert\!\right) \\
        &\leq \EE \!\left(\sum_{x', x, a} \vert \rho_{n-1}^{A, 1}(x) \pi^1_{n-1}(a\vert x, \rho^A_{n-1})P(x'\vert x, a,\rho^A_{n-1} \varepsilon^0_{n})  - \rho_{n-1}^{E, 1}(x)\pi^1_{n-1}(a\vert x, \rho^E_{n-1})P(x'\vert x, a,\rho^E_{n-1} \varepsilon^0_{n}) \vert\!\right) \\
        &\leq \EE \!\left(\sum_{x', x, a} \rho_{n-1}^{A, 1}(x) \pi^1_{n-1}(a\vert x, \rho^A_{n-1})\vert P(x'\vert x, a,\rho^A_{n-1}, \varepsilon^0_{n})  - P(x'\vert x, a,\rho^E_{n-1}, \varepsilon^0_{n}) \vert\!\right) \\
        & + \EE \!\left(\sum_{x', x, a}  P(x'\vert x, a, \rho^E_{n-1} ,\varepsilon^0_{n+1}) \vert\rho_{n-1}^{A, 1}(x) \pi^1_{n-1}(a\vert x, \rho^A_{n-1}) - \rho_{n-1}^{E, 1}(x)\pi^1_{n-1}(a\vert x, \rho^E_{n-1})\vert\!\right) \\
        &\leq \underbrace{\EE \!\left(\sum_{x, a} \rho_{n-1}^{A, 1}(x) \pi^1_{n-1}(a\vert x, \rho^A_{n-1})\| P(x, a,\rho^A_{n-1}, \varepsilon^0_{n})  - P(x, a,\rho^E_{n-1}, \varepsilon^0_{n})\|_1\right)}_{A_1} \\
        & + \underbrace{\EE \!\left(\sum_{x, a}  \vert\rho_{n-1}^{A, 1}(x) \pi^1_{n-1}(a\vert x, \rho^A_{n-1}) - \rho_{n-1}^{E, 1}(x)\pi^1_{n-1}(a\vert x, \rho^E_{n-1})\vert\!\right)}_{A_2} \\
    \end{align*}
    Let us consider $A_1$
    \begin{flalign*}
        &A_1 = \EE\!\left( \EE_{(x, a)\sim\mu^{A, 1}_{n-1}}(\| P(x, a,\rho^A_{n-1}, \varepsilon^0_{n})  - P(x, a,\rho^E_{n-1}, \varepsilon^0_{n})\|_1) \!\right) \leq L_P\EE(\| \rho_{n-1}^{A} - \rho_{n-1}^{E}\|). &
    \end{flalign*}
    Let us consider $A_2$
    \begin{flalign*}
        A_2 &=\EE \!\left(\sum_{x, a}  \vert\rho_{n-1}^{A, 1}(x) \pi^1_{n-1}(a\vert x, \rho^A_{n-1}) - \rho_{n-1}^{E, 1}(x)\pi^1_{n-1}(a\vert x, \rho^E_{n-1})\vert\!\right) &\\
        &\leq \EE \!\left(\sum_{x, a}  \vert\pi^1_{n-1}(a\vert x, \rho^A_{n-1})\big(\rho_{n-1}^{A, 1}(x) - \rho_{n-1}^{E, 1}(x) \big)  + \rho_{n-1}^{E, 1}(x)\big(\pi^1_{n-1}(a\vert x, \rho^E_{n-1}) - \pi^1_{n-1}(a\vert x, \rho^A_{n-1})\big)\vert\!\right) \\
        &\leq \EE \!\left(\sum_{x, a}  \pi^1_{n-1}(a\vert x, \rho^A_{n-1})\vert\rho_{n-1}^{A, 1}(x) - \rho_{n-1}^{E, 1}(x) \vert \!\right) + \EE\!\left(\sum_{x,a}\rho_{n-1}^{E, 1}(x)\vert\pi^1_{n-1}(a\vert x, \rho^E_{n-1}) - \pi^1_{n-1}(a\vert x, \rho^A_{n-1})\vert\!\right) \\
        &\leq \EE(\| \rho_{n-1}^{A, 1} - \rho_{n-1}^{E, 1}\|) + \EE(\EE_{x \sim \rho^{E, 1}}(\|\pi^1_{n-1}(x, \rho^E_{n-1}) - \pi^1_{n-1}(x, \rho^A_{n-1}) \|_1) \\
        &\leq \EE(\| \rho_{n-1}^{A, 1} - \rho_{n-1}^{E, 1}\|) + L_E \EE(\|\rho^E_{n-1} - \rho^A_{n-1} \|_1) \\
    \end{flalign*}
    Combining $A_1$ and $A_2$ we obtain 
    \begin{align*}
         \EE(\| \rho_n^{A, 1} - \rho_n^{E, 1}\|) &\leq \EE(\| \rho_{n-1}^{A, 1} - \rho_{n-1}^{E, 1}\|)  + (L_E + L_P) \EE(\|\rho^E_{n-1} - \rho^A_{n-1} \|_1) \\
         &\leq  \EE(\| \rho_{n-1}^{A, 1} - \rho_{n-1}^{E, 1}\|) +  \delta^{BC} (L_E + L_P)  \frac{(1 + L_E + L_P)^{n-1} - 1}{L_E + L_P} \\
         &\leq  \EE(\| \rho_{n-1}^{A, 1} - \rho_{n-1}^{E, 1}\|) +  \delta^{BC} (1 + L_E + L_P)^{n-1}
    \end{align*}
    By summing we obtain 
    \begin{equation*}
        \EE(\| \rho_n^{A, 1} - \rho_n^{E, 1}\|) \leq \delta^{BC} \frac{ (1 + L_E + L_P)^{n}}{L_P + L_E}
    \end{equation*}
    and by summing again
    \begin{equation*}
        \EE(\sum_{n=0}^{H-1} \| \rho_n^{A, 1} - \rho_n^{E, 1}\|) \leq\delta^{BC} \frac{ (1 + L_E + L_P)^{n}}{(L_P + L_E)^2}.
    \end{equation*}
    Let us prove Eq.~\eqref{eq:rhoA1-rhoE1_ADV_LP}. Previously, one of the intermediate step showed that $\EE(\| \rho_n^{A, 1} - \rho_n^{E, 1}\|) \leq  \EE(\| \rho_{n-1}^{A, 1} - \rho_{n-1}^{E, 1}\|) +(L_P + L_E) \EE(\|\rho^E_{n-1} - \rho^A_{n-1} \|_1)$. Applying Lemma~\ref{lemma:rhoA-rhoE_LP} with Eq.~\eqref{eq:rhoA-rhoE_ADV_LP} leads to 
    \begin{equation*}
        \EE(\| \rho_n^{A, 1} - \rho_n^{E, 1}\|) \leq \EE(\| \rho_{n-1}^{A, 1} - \rho_{n-1}^{E, 1}\|) + (L_E + L_P)\delta^{ADV},
    \end{equation*}
    from what we deduce the result by summing twice.
\end{proof}

\begin{lemma}
    \label{lemma:muEA-muEE_ADV_LP}
    We have 
    \begin{equation}
        \label{eq:muEA-muEE_ADV_LP}
        \EE(\sum_{n=1}^{H-1} \|\mu_n^{EA} - \mu_n^E \|_1) \leq \delta^{ADV}(H(L_E +1) + H^2(L_P + L_E)).
    \end{equation}
\end{lemma}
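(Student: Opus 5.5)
We bound $\|\mu^{EA}_n - \mu^E_n\|_1$ by inserting the intermediate measure $\mu^A_n$:
\[
\EE\big(\|\mu^{EA}_n - \mu^E_n\|_1\big) \le \EE\big(\|\mu^{EA}_n - \mu^A_n\|_1\big) + \EE\big(\|\mu^A_n - \mu^E_n\|_1\big) \le \EE\big(\|\mu^{EA}_n - \mu^A_n\|_1\big) + \delta^{ADV}.
\]
This isolates a pure ADV term. What remains compares a single agent playing $\pi^A$ against the population $\pi^E$ with the same agent playing $\pi^A$ against the population $\pi^A$. In the paper's notation $\mu^{A} = \mu^{A,A}$ and $\mu^{EA} = \mu^{E,A}$. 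So this is a comparison of the form $\mu^{A,1}$ versus $\mu^{E,1}$ with $\pi^1 = \pi^A$, which is exactly the setting of Lemma~\ref{lemma:rhoA1-rhoE1_LP=0}.

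Next, write $\mu^{E,A}_n(x,a) = \rho^{E,A}_n(x)\pi^A_n(a\vert x,\rho^E_n)$ and $\mu^{A,A}_n(x,a) = \rho^{A,A}_n(x)\pi^A_n(a\vert x,\rho^A_n)$. Adding and subtracting $\rho^{E,A}_n(x)\pi^A_n(a\vert x,\rho^A_n)$ and using $\sum_a \pi^A_n = 1$ gives
\[
\EE\|\mu^{EA}_n - \mu^{A}_n\|_1 \le \EE\|\rho^{E,A}_n - \rho^{A,A}_n\|_1 + \EE\Big(\EE_{X\sim \rho^{E,A}_n}\|\pi^A_n(X,\rho^E_n) - \pi^A_n(X,\rho^A_n)\|_1\Big).
\]
We bound the second term by Definition~\ref{definition:piE-Lipschitz}, taking $\pi'=\pi^E$, $\pi''=\pi^A$ and the population pairs $(\pi^E,\pi^E)$, $(\pi^A,\pi^A)$. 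This gives $L_E\,\EE\|\rho^E_n-\rho^A_n\|_1$, which by \eqref{eq:rhoA-rhoE_ADV_LP} is at most $L_E\delta^{ADV}$. For the first term we use the per-step recursion established in the proof of Lemma~\ref{lemma:rhoA1-rhoE1_LP=0} with $\pi^1=\pi^A$:
\[
\EE\|\rho^{A,A}_n - \rho^{E,A}_n\|_1 \le \EE\|\rho^{A,A}_{n-1} - \rho^{E,A}_{n-1}\|_1 + (L_E+L_P)\delta^{ADV}.
\]
Since both sequences start at $\rho_0$, this telescopes to $\EE\|\rho^{A,A}_n - \rho^{E,A}_n\|_1 \le n(L_E+L_P)\delta^{ADV}$.

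Combining, each $n$ contributes at most $\delta^{ADV}\big(1 + L_E + n(L_E+L_P)\big)$. Summing over $n=1,\dots,H-1$ and using $\sum_{n\le H-1} n \le H^2$ yields $\delta^{ADV}\big(H(L_E+1) + H^2(L_P+L_E)\big)$, which is the claim.

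The only delicate point is checking that the Lipschitz property of Definition~\ref{definition:piE-Lipschitz} applies when the state is drawn from $\rho^{E,A}_n$ while the arguments are the population flows $\rho^E_n=\rho^{\pi^E,\pi^E}_n$ and $\rho^A_n=\rho^{\pi^A,\pi^A}_n$. The definition allows arbitrary such pairs, so this choice is admissible. The recursion for $\rho^{A,1}-\rho^{E,1}$ must also be used in its per-step form, not in the summed form \eqref{eq:rhoA1-rhoE1_ADV_LP}, so that we get the pointwise bound $n(L_E+L_P)\delta^{ADV}$ instead of a cruder $H^2$ factor at each step.
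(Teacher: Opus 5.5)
Your proof is correct and matches the paper's argument: you split through $\mu^A_n$ by the triangle inequality, factor $\mu^{EA}_n - \mu^A_n$ into a state-marginal term and an $L_E$ policy term, and apply the ADV version of Lemma~\ref{lemma:rhoA1-rhoE1_LP=0} with $\pi^1 = \pi^A$. The only difference is cosmetic: the paper sums over $n$ first and then applies the summed bound \eqref{eq:rhoA1-rhoE1_ADV_LP}, which yields the same $H^2(L_P+L_E)\delta^{ADV}$ term, so the cruder per-step $H^2$ factor you warn about never arises.
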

\begin{proof}
    \begin{align*}
        \EE(|\mu_n^{EA} - \mu_n^E \|_1) &\leq \EE(\|\mu_n^{EA} - \mu_n^A \|_1) + \underbrace{\EE(\|\mu_n^{A} - \mu_n^E \|_1)}_{\leq\,\delta^{ADV}}.
    \end{align*}
    Now we have 
    \begin{align*}
        \EE(\|\mu_n^{EA} - \mu_n^A \|_1) &\leq  \EE(\sum_{x,a} \lvert\rho^{EA}_{n}(x)\pi^A_n(a\vert x, \rho^E_n) - \rho^{A}_{n}(x)\pi^A_n(a\vert x, \rho^A_n) \rvert) \\
         &\leq  \underbrace{\EE(\sum_{x,a} \rho^{EA}_{n}(x)\lvert\pi^A_n(a\vert x, \rho^E) - \pi^A_n(a\vert x, \rho^A_n)\vert  )}_{=\, \EE\big(\EE_{X\sim\rho_n^{EA}}(\|\pi^A(X, \rho_n^E) - \pi^A(X, \rho_n^A)\|)\big)}+ \EE(\sum_{x,a} \pi^A_n(a\vert x, \rho^A_n)\vert\rho^{EA}_{n}(x) - \rho^{A}_{n}(x)\vert) \\
         &\leq L_E\EE(\|\rho^E_n - \rho^A_n\|_1) + \EE(\|\rho^{EA}_n - \rho^A_n\|_1).
    \end{align*}
    We deduce that 
    \begin{align*}
        \EE(|\mu_n^{EA} - \mu_n^E \|_1) &\leq L_E\EE(\|\rho^E_n - \rho^A_n\|_1) + \EE(\|\rho^{EA}_n - \rho^A_n\|_1) + \delta^{ADV}.
    \end{align*}
    Applying Lemma~\ref{lemma:rhoA-rhoE_LP} with Eq.~\eqref{eq:rhoA-rhoE_ADV_LP} and Lemma~\ref{lemma:rhoA1-rhoE1_LP=0} with Eq.~\eqref{eq:rhoA1-rhoE1_ADV_LP} we obtain
    \begin{align*}
        \EE(\sum_{n=1}^{H-1} \|\mu_n^{EA} - \mu_n^E \|_1) \leq  \delta^{ADV}(H(L_E +1) + H^2(L_P + L_E)),
    \end{align*}
\end{proof}

\subsection{Proofs of the Theorems}

To prove the theorems, it is enough to apply the previous established inequalities. 

\noindent\textbf{Proof of Theorem~\ref{theorem:BC_LPge0}}

\begin{proof}
    Apply Lemma~\ref{lemma:Vpi1piA-VpiApiA} with Eq.~\eqref{eq:Vpi1piA-VpiApiA-Lipschitz}, then Lemma~\ref{lemma:rhoE-rhoEA_BC-muE-muEA_BC}, Lemma~\ref{lemma:rhoA-rhoE_LP} with Eq.~\eqref{eq:rhoA-rhoE_BC_LP=0} and Lemma~\ref{lemma:rhoA1-rhoE1_LP=0} with Eq.~\eqref{eq:rhoA1-rhoE1_BC_LP}.
\end{proof}

\noindent\textbf{Proof of Theorem~\ref{theorem:ADV_LPge0}}

\begin{proof}
    Apply Lemma~\ref{lemma:Vpi1piA-VpiApiA} with Eq.~\eqref{eq:Vpi1piA-VpiApiA-Lipschitz}, then Lemma~\ref{lemma:muEA-muEE_ADV_LP}, Lemma~\ref{lemma:rhoA-rhoE_LP} with Eq.~\eqref{eq:rhoA-rhoE_ADV_LP} and Lemma~\ref{lemma:rhoA1-rhoE1_LP=0} with Eq.~\eqref{eq:rhoA1-rhoE1_ADV_LP}.
\end{proof}

\noindent\textbf{Proof of Theorem~\ref{theorem:BC_LPge0_reward}}

\begin{proof}
    In the proof of Lemma~\ref{lemma:Vpi1piA-VpiApiA}, we show that 
    $$
    \vert V(\pi^E, \pi^E) - V(\pi^A, \pi^E) \vert \leq r_{\max}\EE\left[ \sum_{n=0}^{H-1} \| \mu^E_n - \mu^{E, A} \|_1 \right].
    $$
    We can conclude by applying Lemma~\ref{lemma:rhoE-rhoEA_BC-muE-muEA_BC}.
\end{proof}

\noindent\textbf{Proof of Theorem~\ref{theorem:ADV_LPge0_reward}}

\begin{proof}
    The proof uses the same argument as in  Theorem~\ref{theorem:BC_LPge0_reward}, but by using Lemma~\ref{lemma:muEA-muEE_ADV_LP}.
\end{proof}

\clearpage

\section{Additional Algorithms}
\label{appendix:algo}

\begin{algorithm}[hbp]
\caption{Proxy Algorithm}
\begin{algorithmic}[1]
\label{algo:proxy}
\REQUIRE MFG $\cM = (\cX, \cA, \cE^0, \nu^0, P, r, H, \rho_0)$; \\ 
\hspace{3.5em} $\pi^A, \pi^E$; \\
\hspace{3.5em} Monte Carlo parameter $N$;
\STATE Initialize the empirical BC proxy for $t = 0, \dots, H-1$, $\hat \delta^{BC}_t = 0$.
\STATE Initialize the empirical ADV proxy for $t = 0, \dots, H-1$, $\hat \delta^{ADV}_t = 0$.
\STATE Initialize $$\hat \delta^{BC}_0 = \sum_{x} \rho_0(x) \|\pi^A_0(x, \rho_0) - \pi^E_0(x, \rho_0)\|_1.$$
\STATE Initialize $$\hat \delta^{ADV}_0 =  \sum_{x, a} \rvert\rho_0(x)\pi^A_0(a\vert x, \rho_0) - \rho_0(x)\pi^E_0(a\vert x, \rho_0)\lvert.$$
\FOR{$n = 1,\dots,N$}
    \STATE Sample one i.i.d sequence of common noises $\varepsilon^{0, n}_1, \dots, \varepsilon^{0, n}_{H-1}$ with $\varepsilon^{0, n}_1 \sim \nu^0$.
    \STATE Initialize the mean-field $\rho_0^{E,n} = \rho_0$.
    \STATE Initialize the mean-field $\rho_0^{A,n} = \rho_0$.
    \FOR{$t = 1,\dots,H-1$}
        \STATE Compute $$\rho_t^{E,n} = \sum_{x, a} \rho_{t-1}^{E,n}(x)\pi^E_{t-1}(a\mid x, \rho^{E,n}_{t-1}) P(\cdot\mid x, a, \rho^{E,n}_{t-1}, \varepsilon^{0,n}_t).$$
        \STATE Compute $$\rho_t^{A,n} = \sum_{x, a} \rho_{t-1}^{A,n}(x)\pi^{A,n}_{t-1}(a\mid x, \rho^{A,n}_{t-1}) P(\cdot\mid x, a, \rho^{A,n}_{t-1}, \varepsilon^{0,n}_t).$$
        \STATE Update $$\hat \delta^{BC}_t += \frac{1}{N} \sum_{x} \rho_t^{E,n}(x) \|\pi^A_t(x, \rho_t^{E,n}) - \pi^E_t(x, \rho_t^{E,n})\|_1.$$
        \STATE Update $$\hat \delta^{ADV}_t += \frac{1}{N} \sum_{x, a} \rvert\rho_t^{A,n}(x)\pi^A_t(a\vert x, \rho_t^{A,n}) - \rho_t^{E,n}(x)\pi^E_t(a\vert x, \rho_t^{E,n})\lvert.$$
    \ENDFOR
\ENDFOR
\STATE Compute $\hat \delta^{BC} = \max_{t \in [H-1]} \hat \delta^{BC}_t$.
\STATE Compute $\hat \delta^{ADV} = \max_{t \in [H-1]} \hat \delta^{ADV}_t$.
\STATE \textbf{Return} $\hat \delta^{BC}, \hat \delta^{ADV}$
\end{algorithmic}
\end{algorithm}

\begin{algorithm}[htbp]
\caption{Neural Network Best Response}
\begin{algorithmic}[1]
\label{algo:nn_best_response}
\REQUIRE MFG $\cM = (\cX, \cA, \cE^0, \nu^0, P, r, H, \rho_0)$; Sequence of policies $(\pi^0, \dots, \pi^k) \in \Pi^{k+1}$; Iterations $J$; Batch size $B$; Optimizer. \\ 
\FOR{$j = 1, \dots, J$} 
    \STATE Reset batch loss: $L(\theta_{j-1}) = 0$.
    \STATE Sample a batch of i.i.d. common noises $(\varepsilon^{0, s}_t)_{0 \leq t \leq H-1}$ for $s=1, \dots, B$.
    \FOR{$s = 1, \dots, B$}
        \STATE Compute the aggregated sequence $\bar{\rho}^{k,s}$ induced by $(\pi^0, \dots, \pi^k)$.
        \STATE Compute the mean-field sequence $\rho^{\theta, s}$ induced by the learner $\pi_\theta$ responding to $\bar{\rho}^{k,s}$.
        \STATE Accumulate the loss:
        \[
        L(\theta_{j-1}) \leftarrow L(\theta_{j-1}) - \frac{1}{B} \sum_{t=0}^{H-1} \sum_{x, a} \rho^{\theta,s}_t(x) \pi^\theta_t(a \mid x, \bar{\rho}^{k,s}_t) r(x, a, \bar{\rho}^{k,s}_t)
        \]
    \ENDFOR
    \STATE Compute the gradient $\nabla_\theta L(\theta_{j-1})$ and update parameters via the optimizer to obtain $\theta_{j}$. 
\ENDFOR 
\STATE \textbf{Return} Optimized parameters $\theta_{J}$ defining the best response policy $\pi_{\theta_J}$.
\end{algorithmic}
\end{algorithm}


\clearpage

\section{Additional Details to Example 1}
\label{appendix:example1}

\subsection{Mann Iterations and Best Response}
We use the following algorithm to compute a $\epsilon$-Nash equilibrium to obtain the expert policy $\pi^E$. 
\begin{algorithm}[ht]
\caption{Mann Iteration for $\epsilon$-Nash Equilibrium}
\begin{algorithmic}[1]
\label{algo:mann_iteration_mfg}
\REQUIRE MFG environment $\cM$; Initial policy $\pi^0 \in \Pi$, Learning rate sequence $(\gamma_k)_{1\leq k\leq K} \in (0, 1]$; Number of iterations $K$. \\ 
\STATE Define $\pi^{0,*} = \pi^0$
\FOR{each $k = 0, \dots, K-1$}
    \STATE Compute best response $\pi^{BR, k}$ to $\pi^{k-1,*}$. 
    \STATE Define $\pi^{k,*} = \pi^{k-1,*} + \gamma_k \pi^{BR, k}$
\ENDFOR
\STATE \textbf{Return} $\pi^{*,K}$.
\end{algorithmic}
\end{algorithm}

To compute a best response, we discretize the mean-field space, and use a Monte Carlo estimation to perform a backward induction. 

\begin{algorithm}[ht]
\caption{Backward Induction for Best Response.}
\begin{algorithmic}[1]
\label{algo:best_response_dp}
\REQUIRE Fixed population policy $\pi$; MFG $\cM$; Grid for mean-field $\mathcal{G}_{\rho}$; Monte Carlo batch size $M$.
\STATE \textbf{Initialize} Value function $V_H(x, \rho) = 0$ for all $x \in \cX, \rho \in \mathcal{G}_{\rho}$.
\FOR{time $t = H-1, \dots, 0$}
    \FOR{each possible density $\rho \in \mathcal{G}_{\rho}$}
        \FOR{each state $x \in \cX$}
            \FOR{each action $a \in \cA$}
                \STATE Sample a batch of common noise $(\varepsilon^{0, s})_{1\leq s\leq M}$.
                \FOR{each $s \in [M]^*$}
                    \STATE Compute expected next density: $\rho^{'s} = \Phi(\rho^s, \pi_t, \varepsilon^{0,s}_{t+1})$.
                \ENDFOR
                \STATE Define $Q_t(x, \rho, a) = r(x, a, \rho) + \sum_{s = 1}^M  V_{t+1}(x', \rho^{'s})\rho^{'s}(x')$
            \ENDFOR
            \STATE $V_t(x, \rho) = \max_{a \in \cA} Q_t(x, \rho, a)$
            \STATE $\pi^{BR}_t(a \mid x, \rho) = \mathds{1}_{\{a = \text{argmax}_{a'} Q_t(x, \rho, a')\}}$
        \ENDFOR
    \ENDFOR
\ENDFOR
\STATE \textbf{Return} $\pi^{BR} = (\pi^{BR}_t)_{0 \leq t < H}$.
\end{algorithmic}
\end{algorithm}

\subsection{Vanilla and Adaptive Policies}

We describe here how we obtain using data from expert agents following the policy $\pi^E$, the vanilla and adaptive policies used for the experiments in Subsection~\ref{subsection:example1}.

\noindent\textbf{Data Generation:}
The simplicity of the environment allows us to consider classical statistical estimators for the vanilla and adaptive policies. Let $(\eta, \alpha)$ be a couple of parameters. We generate a synthetic dataset composed of $N$ independent trajectories of $M$ agents:
$
\cD_{\alpha,\eta}^{N, M} = \left\{ \left( x^{n,m}_{t, \alpha, \eta}, a^{n,m}_{t, \alpha, \eta} \right)_{\substack{0\leq t\leq H-1 \\ 0\leq m \leq M-1}} \right\}_{1\leq n\leq N},
$
where each agent acts according to the expert policy $\pi^E_{\alpha, \eta}$ in an environment subjected to a common noise realization $\varepsilon^{0,n}_{1:H-1}$. Specifically, for each trajectory $n$, a unique common noise sequence $(\varepsilon^{0,n}_{t})_{0 \leq t \leq H-1}$ is sampled, inducing a specific mean-field sequence $\rho^{E,n}_{\alpha, \eta}(\varepsilon^{0,n}_{1:H-1})$ that the agents observe and react to. To simplify notation, we fix the parameters $(\alpha, \eta)$ and suppress their explicit dependence in the following subscripts.

\noindent\textbf{Vanilla Policy:}  
We define the vanilla policy $\hat \pi^{M, N,\text{vanilla}}$ as the time-dependent approximation that ignores the common noise fluctuations. It is computed via the empirical marginals:
$$
\hat\pi_t^{M, N,\text{vanilla}}(a\mid x) = \frac{\hat \mu^{E, M, N}_t(x, a)}{\sum_{a' \in \mathcal{A}} \hat \mu^{E, M, N}_t(x, a')},
$$

where $\hat \mu^{E, M, N}_t(x, a) = \frac{1}{NM}\sum_{n,m} \mathds{1}_{\{x^{n,m}_t = x, a^{n,m}_t = a\}}$.

\noindent\textbf{Adaptive Policy:} 
We define the adaptive policy $\hat\pi^{M, N, \text{adaptive}}$ as a density-dependent policy. To reconstruct the policy for any arbitrary mean-field value $\rho$, we employ a Nadaraya–Watson kernel regression:
\begin{equation*}
\begin{split}
        &\hat\pi_t^{M, N, \text{adaptive}}(a\mid x, \rho) = \\ &\frac{\sum_{n, m}\mathds{1}_{\{x^{n,m}_t = x, a^{n,m}_t = a\}} K_h(\hat \rho^{E, n, M}_t - \rho)}{\sum_{a' \in \mathcal{A}}\sum_{n, m}\mathds{1}_{\{x^{n,m}_t = x, a^{n,m}_t = a'\}} K_h(\hat  \rho^{E, n, M}_t  - \rho)},
\end{split}
\end{equation*}
where $ \hat \rho^{E, n, M}_t(1) = \frac{1}{M}\sum_{m} \mathds{1}_{\{x^{n,m}_t = 1\}}$ is the empirical density of agents in state 1 for trajectory $n$. The kernel function $K_h:\Delta_\cX\times\Delta_\cX\to\RR_+$ is defined by the Gaussian radial basis function:
$$
K_h(\rho, \rho') = \exp\left(-\frac{\|\rho - \rho'\|^2_2}{2h^2}\right), \quad \forall (\rho, \rho') \in \Delta_\cX^2. 
$$
This kernel density estimation (KDE) approach allows the agent to interpolate the expert's behavior across the continuous space of possible mean-field realizations.

\subsection{Hyperparameters}

All the experiments have been run with parameters presented in Table~\ref{tab:ex1_hyperparameters}. 

\begin{table}[htbp]
\centering
\caption{Summary of Hyperparameters for the First Example}
\label{tab:ex1_hyperparameters}
\begin{tabular}{@{}llll@{}}
\toprule
\textbf{Category} & \textbf{Parameter} & \textbf{Symbol} & \textbf{Value} \\ \midrule
\textbf{Nash Equilibrium} & Mann Iterations & $K$ & $50$ \\
& Learning Rate & $\gamma_k$ & $0.05$ \\
& Mean Field Discretizations & $\mathcal{G}_{\rho}$ & $50$ points \\ \addlinespace
& MC Samples (Backward Induction) & - & $10000$ \\ \addlinespace
\textbf{Data Generation} & Number of Trajectories & $N$ & $2000$ \\
& Number of Agents & $M$ & $100$ \\ \addlinespace
\textbf{Kernel Regression} & Bandwidth & $h$ & $0.05$ \\ \bottomrule
\end{tabular}
\end{table}

\subsection{Full Sensitivity Analysis }

Figure \ref{fig:ex1_grid_mean} represents the average over $5$ runs of the metrics, as functions of the varying parameters $\eta$ and $\alpha$. Figure \ref{fig:ex1_grid_std} shows the standard deviation. For the row corresponding to the proxy, the advantage corresponds to the adaptive proxy minus the vanilla proxy. For the rows corresponding to the reward vs expert, it corresponds to the reward of the adaptive policy minus the one of the vanilla policy. For the rows corresponding to the exploitability, it corresponds to the exploitability of the vanilla minus the one of the adaptive policy.


\begin{figure}[htpb]
    \centering
    \begin{subfigure}{0.49\linewidth}
        \centering
        \includegraphics[width=\linewidth]{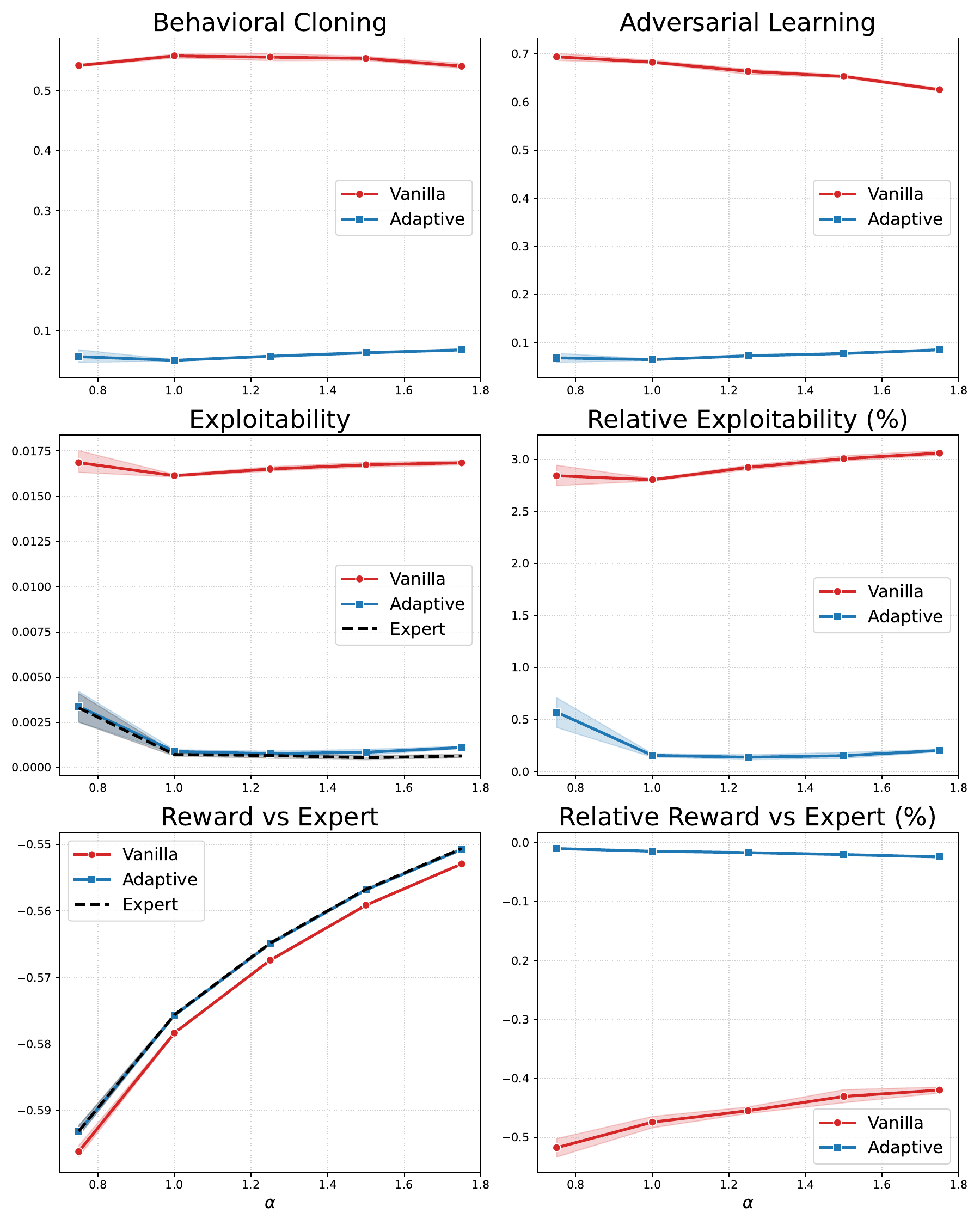}
        \caption{Fixed $\eta = 0.75$.}
        \label{fig:example1_sub_eta}
    \end{subfigure}
    \hfill
    \begin{subfigure}{0.49\linewidth}
        \centering
        \includegraphics[width=\linewidth]{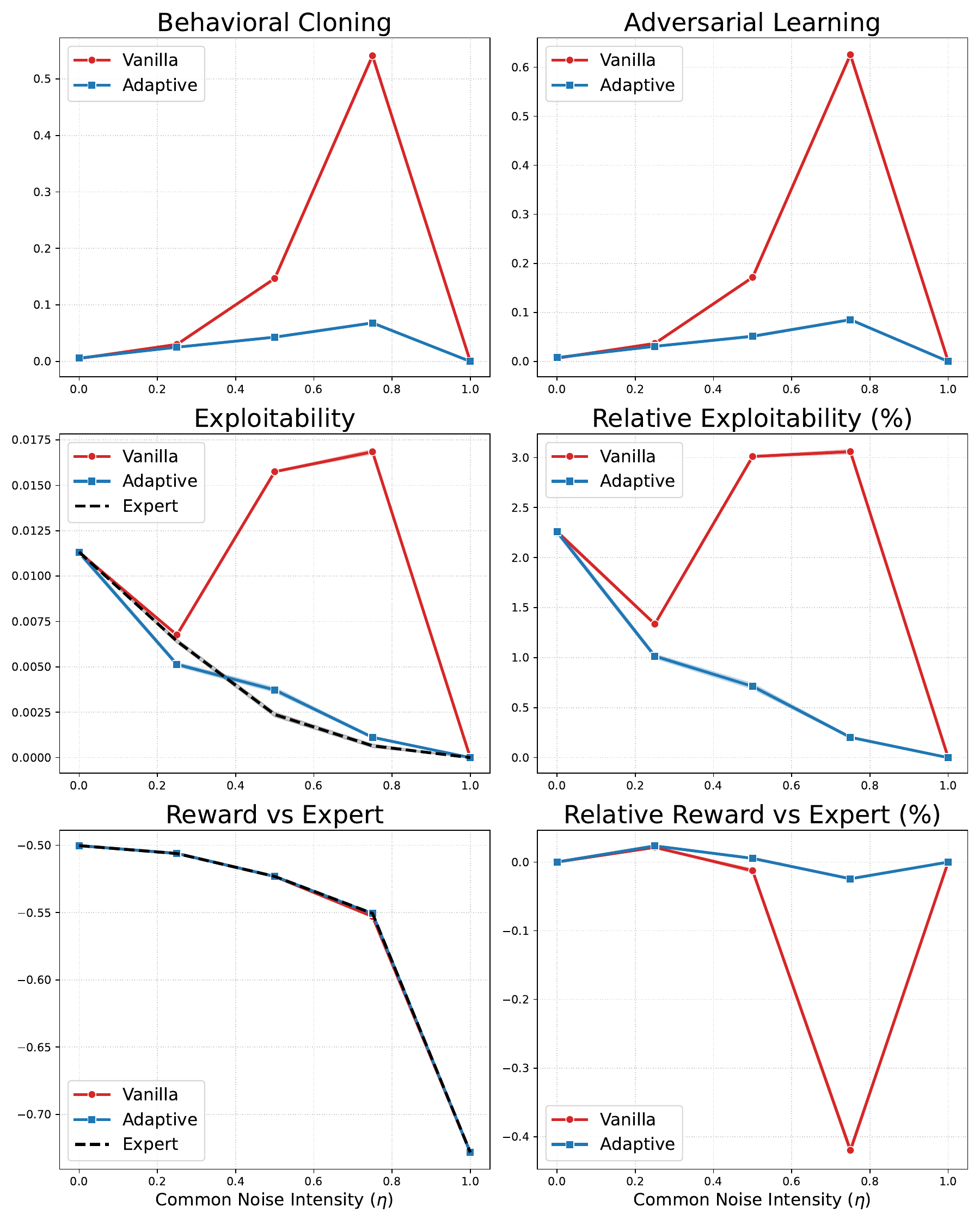}
        \caption{Fixed $\alpha = 1.75$.}
        \label{fig:example1_sub_alpha}
    \end{subfigure}

    \caption{Performance metrics for 5 runs in the Example 1. Left: variation across $\alpha$ with fixed $\eta$; Right: variation across $\eta$ with fixed $\alpha$.}
    \label{fig:example1_comparison_side_by_side}
\end{figure}

\begin{figure}[htbp]
    \centering
    \includegraphics[width=0.9\linewidth]{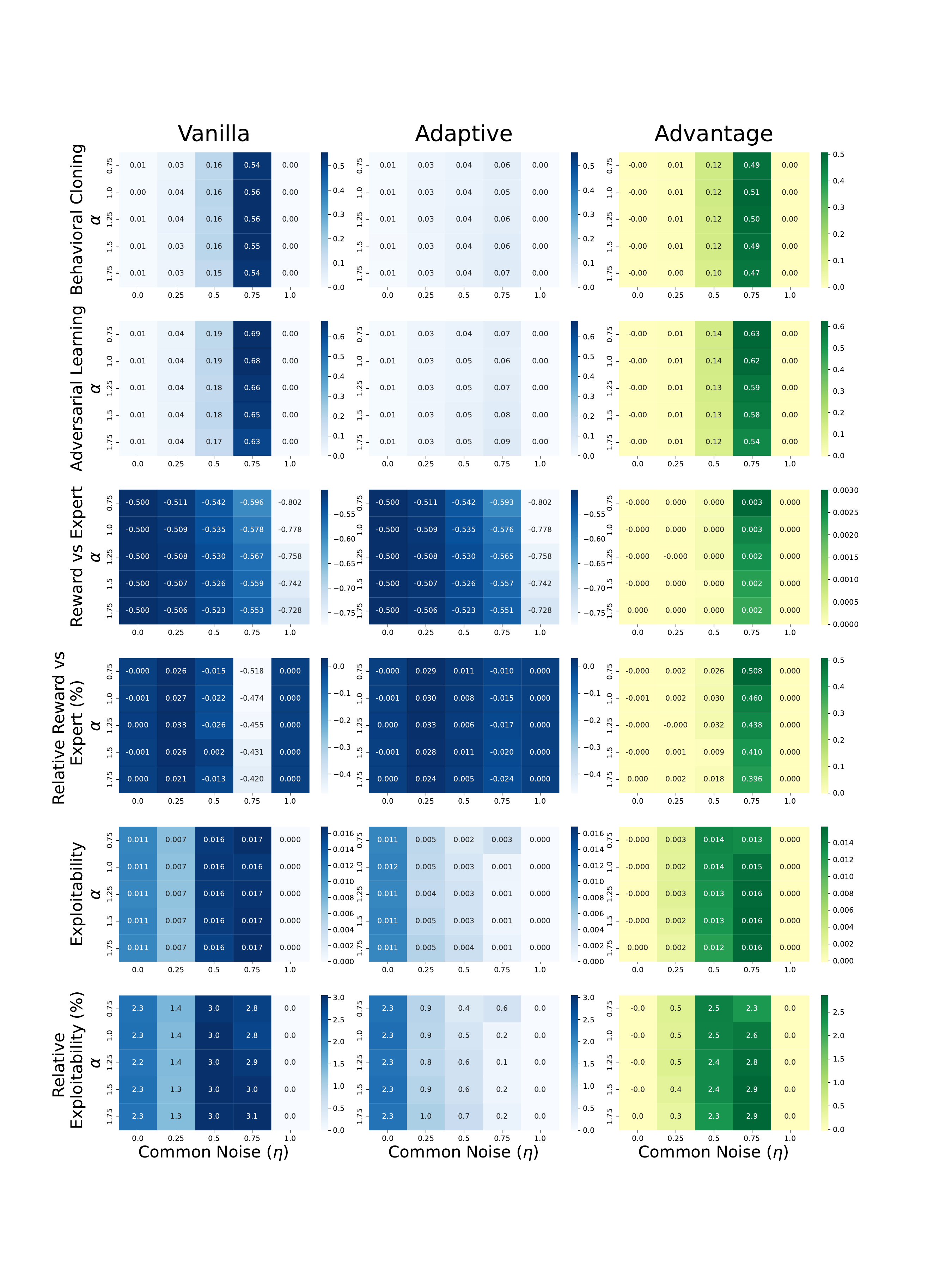}
    \caption{Example 1: Metrics Averages}
    \label{fig:ex1_grid_mean}
\end{figure}

\begin{figure}[htbp]
    \centering
    \includegraphics[width=0.9\linewidth]{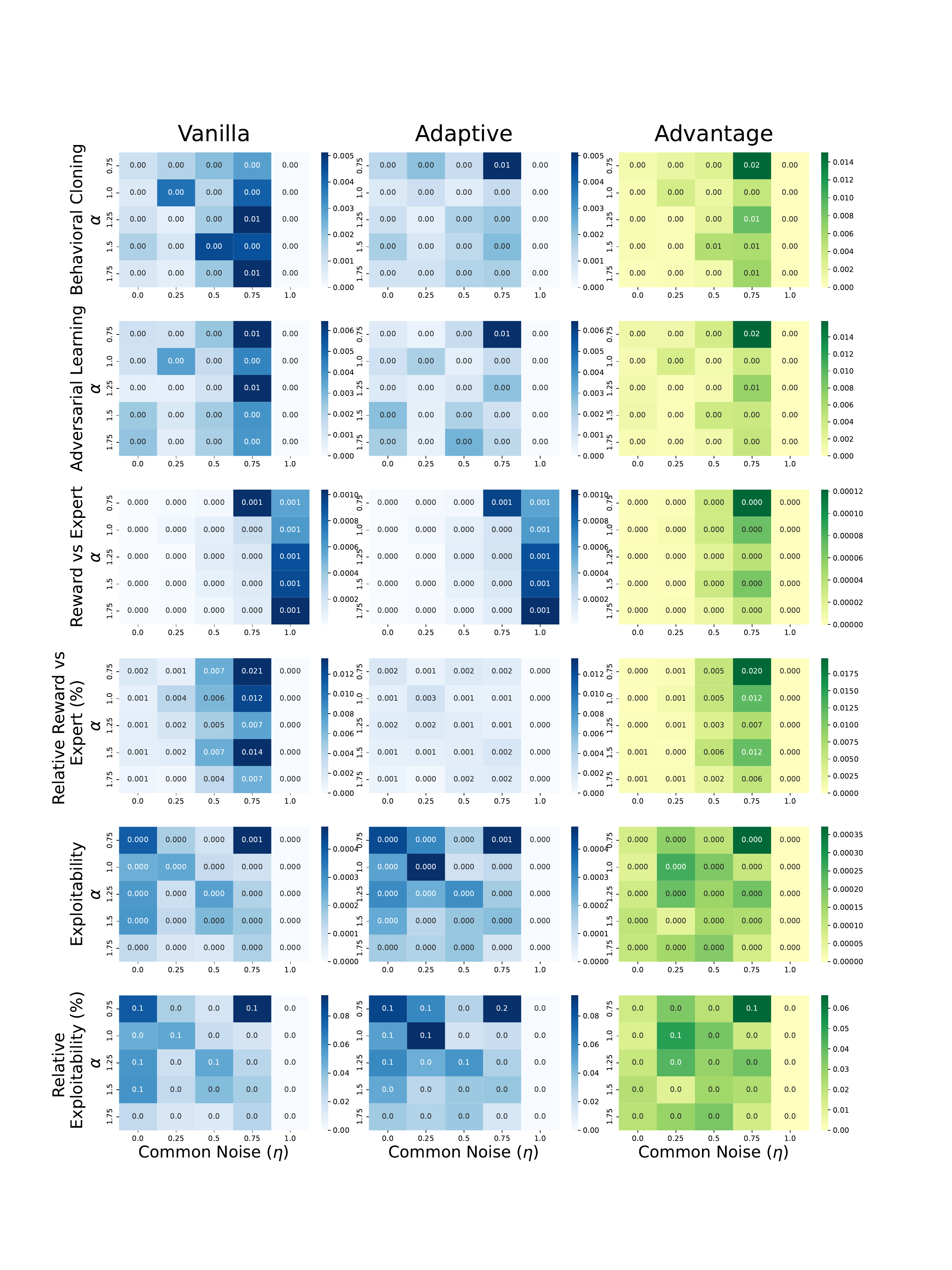}
    \caption{Example 1: Metrics STD}
    \label{fig:ex1_grid_std}
\end{figure}

\clearpage

\section{Additional Details to the Beach Bar Example}
\label{appendix:beachbar}
\subsection{Hyperparameters}
All the experiments have been run with the parameters presented in Table~\ref{tab:beachbar_hyperparameters}
\begin{table}[htbp]
\centering
\caption{Summary of Hyperparameters for the Beach Bar Environment}
\label{tab:beachbar_hyperparameters}
\begin{tabular}{@{}lll@{}}
\toprule
\textbf{Category} & \textbf{Parameter} & \textbf{Value} \\ \midrule
\textbf{Neural Architecture} & Model Type & MLP \\
& Hidden Layers & $2$ \\
& Layer Width & $64$ \\
& Activation Function & ReLU \\
& Optimizer & Adam \\ \addlinespace
\textbf{Fictitious Play} & Number of (FP) iterations & $40$ \\ \addlinespace
\textbf{Best Response Training} & Number of iterations & $1\,000$ \\
& Learning Rate & $10^{-4}$ \\
& Batch Size  & $500$ \\ \addlinespace
\textbf{Expert Imitation} & Number of iterations & $10\,000$ \\
& Learning Rate & $10^{-4}$ \\
& Batch Size & $200$ \\ \addlinespace
\textbf{Vanilla and Adaptive Learning} & Number of iterations & $10\,000$ \\
& Learning Rate & $10^{-4}$ \\
& Batch Size & $50$ \\ 
& Number of agents in the Simulation & $1000$ \\ 
\bottomrule
\end{tabular}
\end{table}

\subsection{Full Sensitivity Analysis and Training Losses}
Figure \ref{fig:beachbar_grid_mean} represents the average over $5$ runs of the metrics, as functions of the varying parameters $\eta$ and $\alpha$. Figure \ref{fig:beachbar_grid_std} shows the standard deviation. For the row corresponding to the proxy, the advantage corresponds to the adaptive proxy minus the vanilla proxy. For the rows corresponding to the reward vs expert, it corresponds to the reward of the adaptive policy minus the one of the vanilla policy. For the rows corresponding to the exploitability, it corresponds to the exploitability of the vanilla minus the one of the adaptive policy.

Figure \ref{fig:beachbar_losses} illustrates the convergence of Fictitious Play (FP) with common noise (Algorithm \ref{algo:fictitious}), the training loss of the expert learning to replicate the aggregated mean fields generated by the FP-outputted policies (Algorithm \ref{algo:il_agg_meanfield}), and the training losses for the vanilla and adaptive parametric policies trained via Algorithm \ref{algo:interactive_imitation}.


\begin{figure}[htpb]
    \centering
    \begin{subfigure}{0.49\linewidth}
        \centering
        \includegraphics[width=\linewidth]{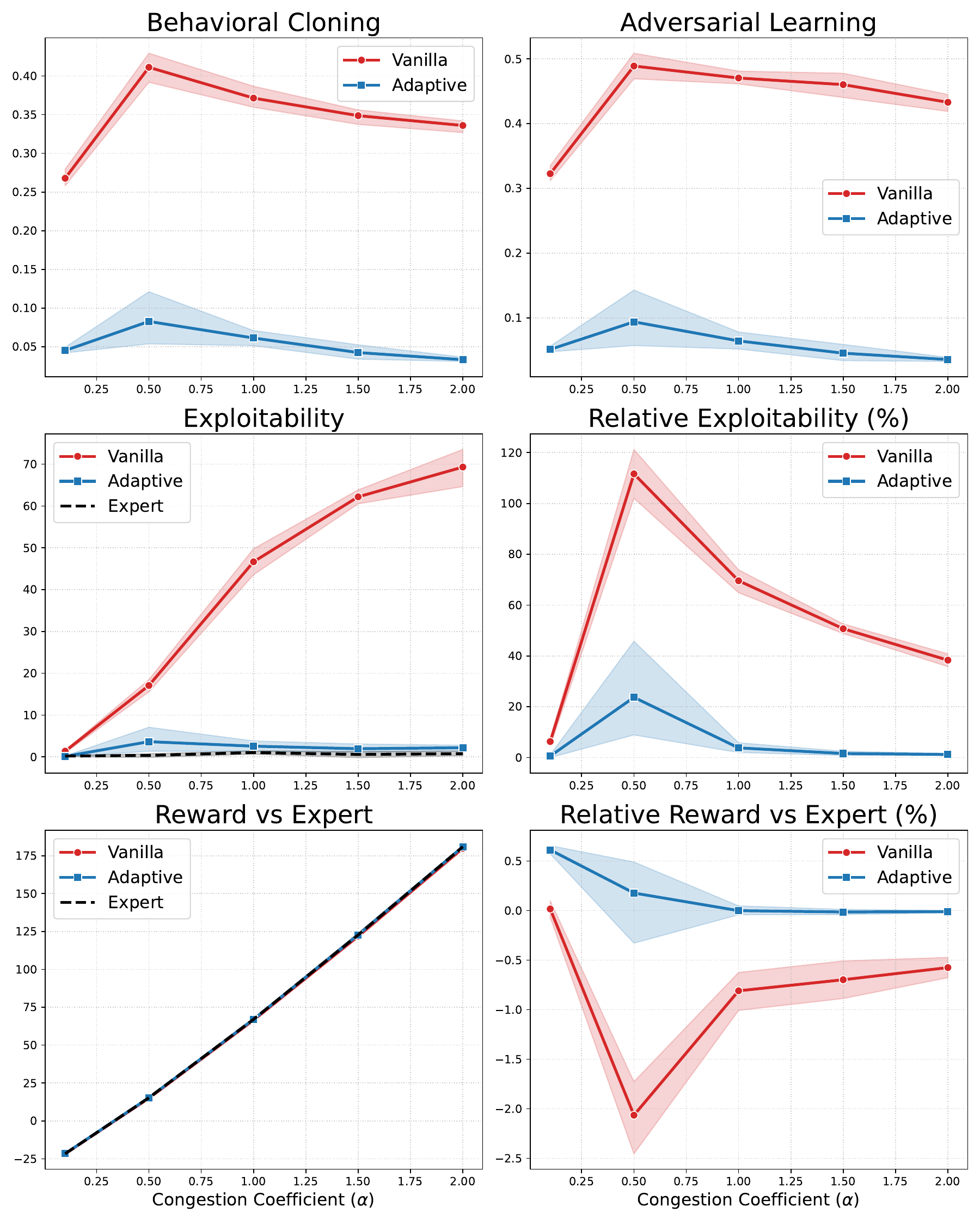}
        \caption{Fixed $\eta = 0.3$.}
        \label{fig:beachbar_sub_eta}
    \end{subfigure}
    \hfill
    \begin{subfigure}{0.49\linewidth}
        \centering
        \includegraphics[width=\linewidth]{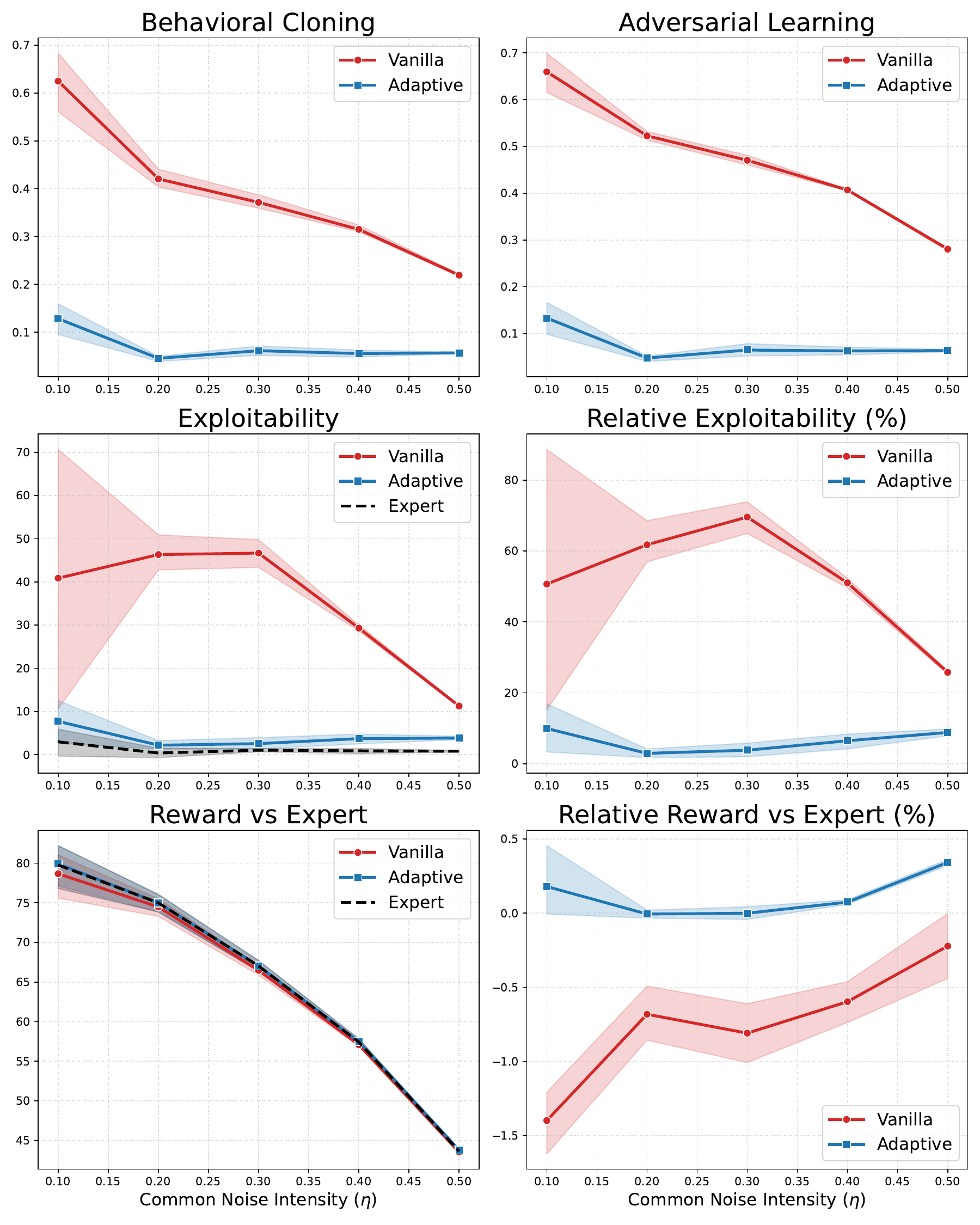}
        \caption{Fixed $\alpha = 1$.}
        \label{fig:beachbar_sub_alpha}
    \end{subfigure}

    \caption{Performance metrics for 5 runs in the Beach Bar environment. Left: variation across $\alpha$ with fixed $\eta$; Right: variation across $\eta$ with fixed $\alpha$.}
    \label{fig:beachbar_comparison_side_by_side}
\end{figure}

\begin{figure}[htbp]
    \centering
    \includegraphics[width=0.9\linewidth]{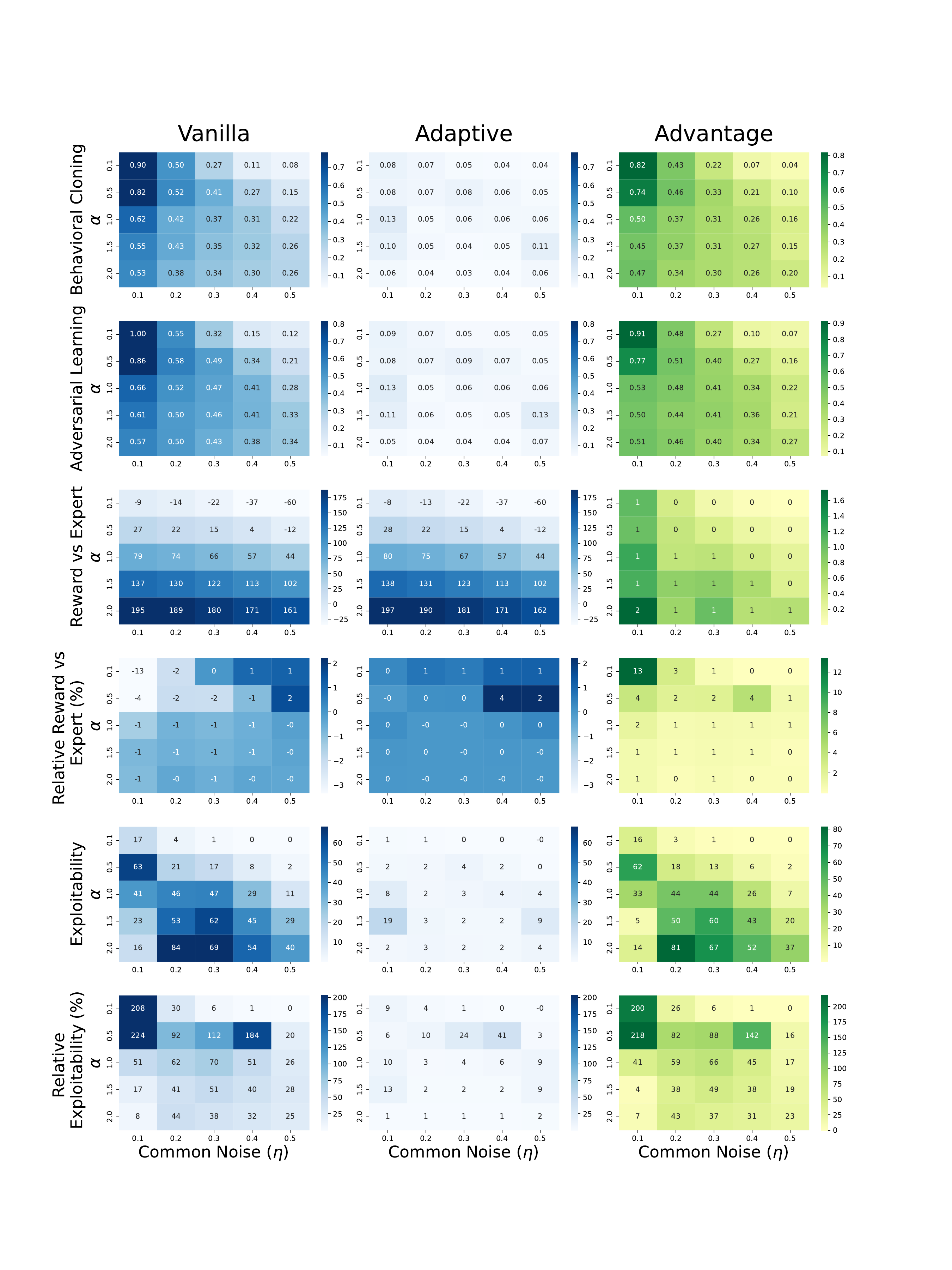}
    \caption{Beach Bar: Metrics Averages}
    \label{fig:beachbar_grid_mean}
\end{figure}

\begin{figure}[htbp]
    \centering
    \includegraphics[width=0.9\linewidth]{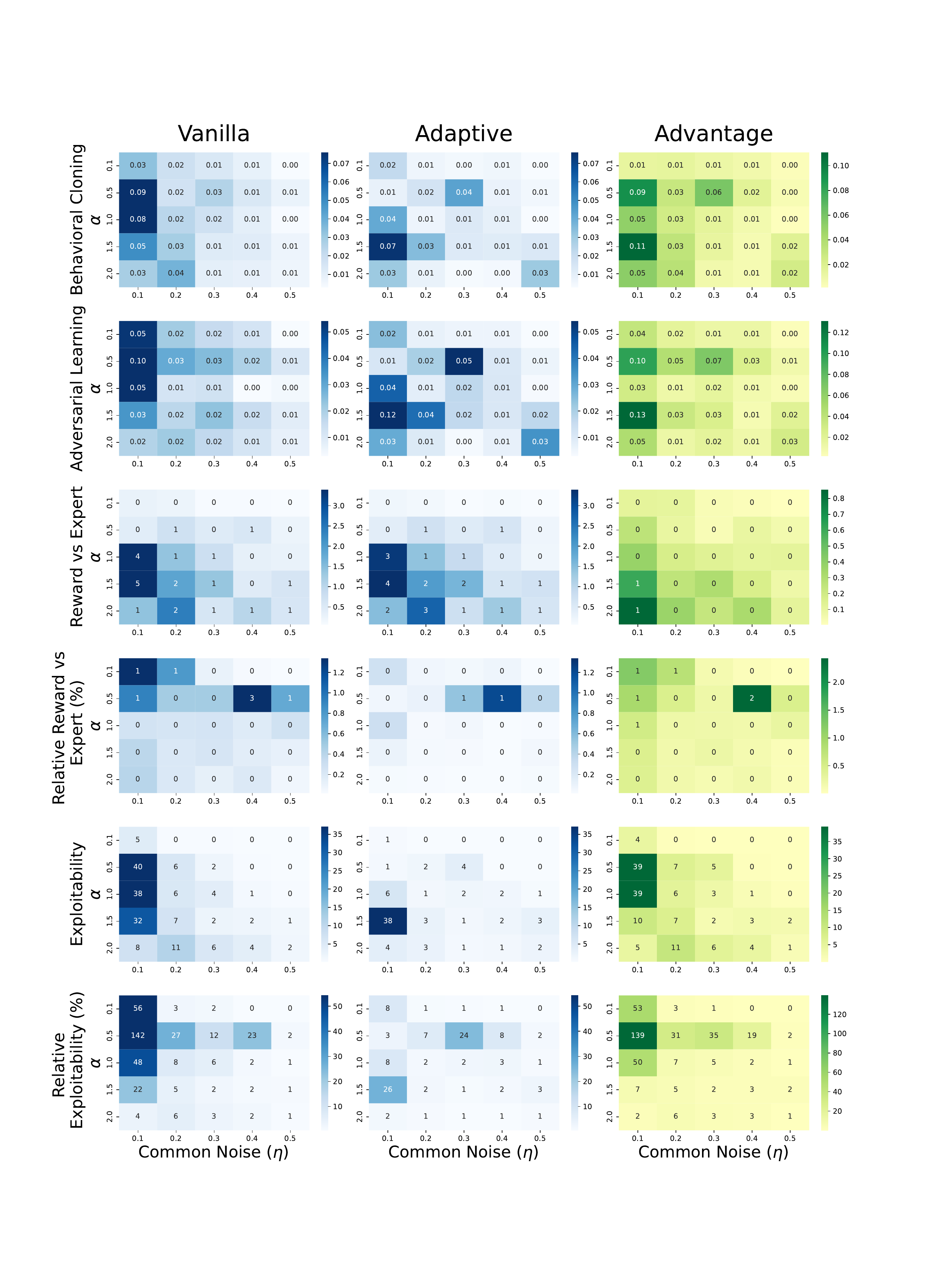}
    \caption{Beach Bar: Metrics STD.}
    \label{fig:beachbar_grid_std}
\end{figure}

\begin{figure}[htbp]
    \centering
    \includegraphics[width=0.9\linewidth]{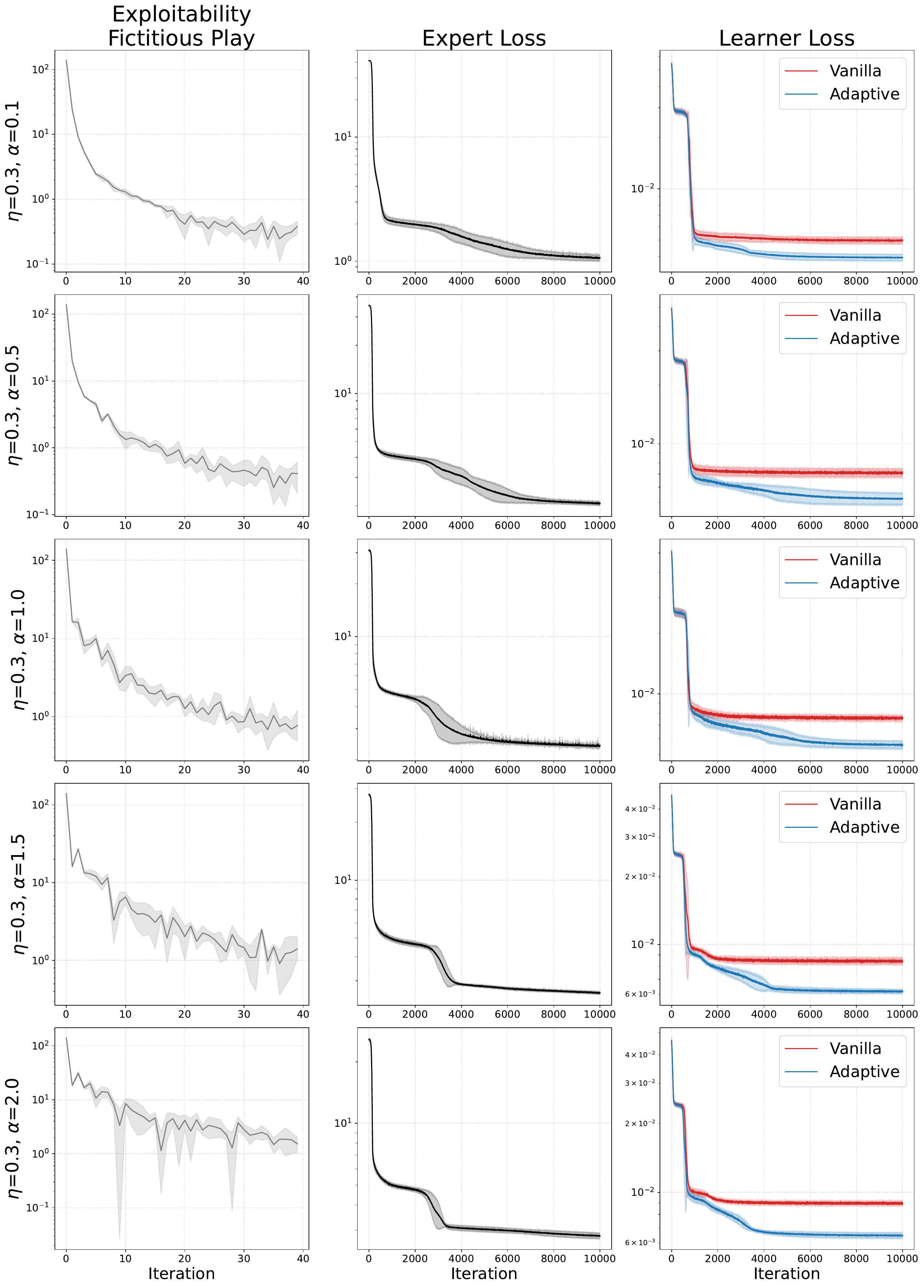}
    \caption{Beach Bar: Fictitious Play Exploitability Convergence, Expert Losses, Vanilla and Adaptive Losses ($\eta = 0.3)$.}
    \label{fig:beachbar_losses}
\end{figure}

\clearpage

\section{Additional Details to the Night Clubs Example}
\label{appendix:nightclubs}
\subsection{Hyperparameters}

All the experiments have been ran with the parameters presented in Table~\ref{tab:nightclubs_hyperparameters}
\begin{table}[htbp]
\centering
\caption{Summary of Hyperparameters for the Night Club Environment}
\label{tab:nightclubs_hyperparameters}
\begin{tabular}{@{}lll@{}}
\toprule
\textbf{Category} & \textbf{Parameter} & \textbf{Value} \\ \midrule
\textbf{Neural Architecture} & Model Type & MLP \\
& Hidden Layers & $2$ \\
& Layer Width & $64$ \\
& Activation Function & ReLU \\
& Optimizer & Adam \\ \addlinespace
\textbf{Fictitious Play} & Number of (FP) iterations & $40$ \\ \addlinespace
\textbf{Best Response Training} & Number of iterations & $1\,000$ \\
& Learning Rate & $10^{-4}$ \\
& Batch Size  & $500$ \\ \addlinespace
\textbf{Expert Imitation} & Number of iterations & $20\,000$ \\
& Learning Rate & $5\times 10^{-4}$ \\
& Scheduler & Cosine Annealing \\
& Batch Size & $500$ \\ \addlinespace
\textbf{Vanilla and Adaptive Learning} & Number of iterations & $20\,000$ \\
& Learning Rate & $10^{-4}$ \\
& Batch Size & $50$ \\ 
& Number of agents in the Simulation & $1000$ \\ 
\bottomrule
\end{tabular}
\end{table}

\subsection{Full Sensitivity Analysis and Training Losses}
Figure \ref{fig:nightclubs_grid_mean} represents the average over $5$ runs of the metrics, as functions of the varying parameters $\eta$ and $\alpha$. Figure \ref{fig:nightclubs_grid_std} shows the standard deviation. For the row corresponding to the proxy, the advantage corresponds to the adaptive proxy minus the vanilla proxy. For the rows corresponding to the reward vs expert, it corresponds to the reward of the adaptive policy minus the one of the vanilla policy. For the rows corresponding to the exploitability, it corresponds to the exploitability of the vanilla minus the one of the adaptive policy.

Figure \ref{fig:nightclubs_losses} illustrates the convergence of Fictitious Play (FP) with common noise (Algorithm \ref{algo:fictitious}), the training loss of the expert learning to replicate the aggregated mean fields generated by the FP-outputted policies (Algorithm \ref{algo:il_agg_meanfield}), and the training losses for the vanilla and adaptive parametric policies trained via Algorithm \ref{algo:interactive_imitation}.


\begin{figure}[htpb]
    \centering
    \begin{subfigure}{0.49\linewidth}
        \centering
        \includegraphics[width=\linewidth]{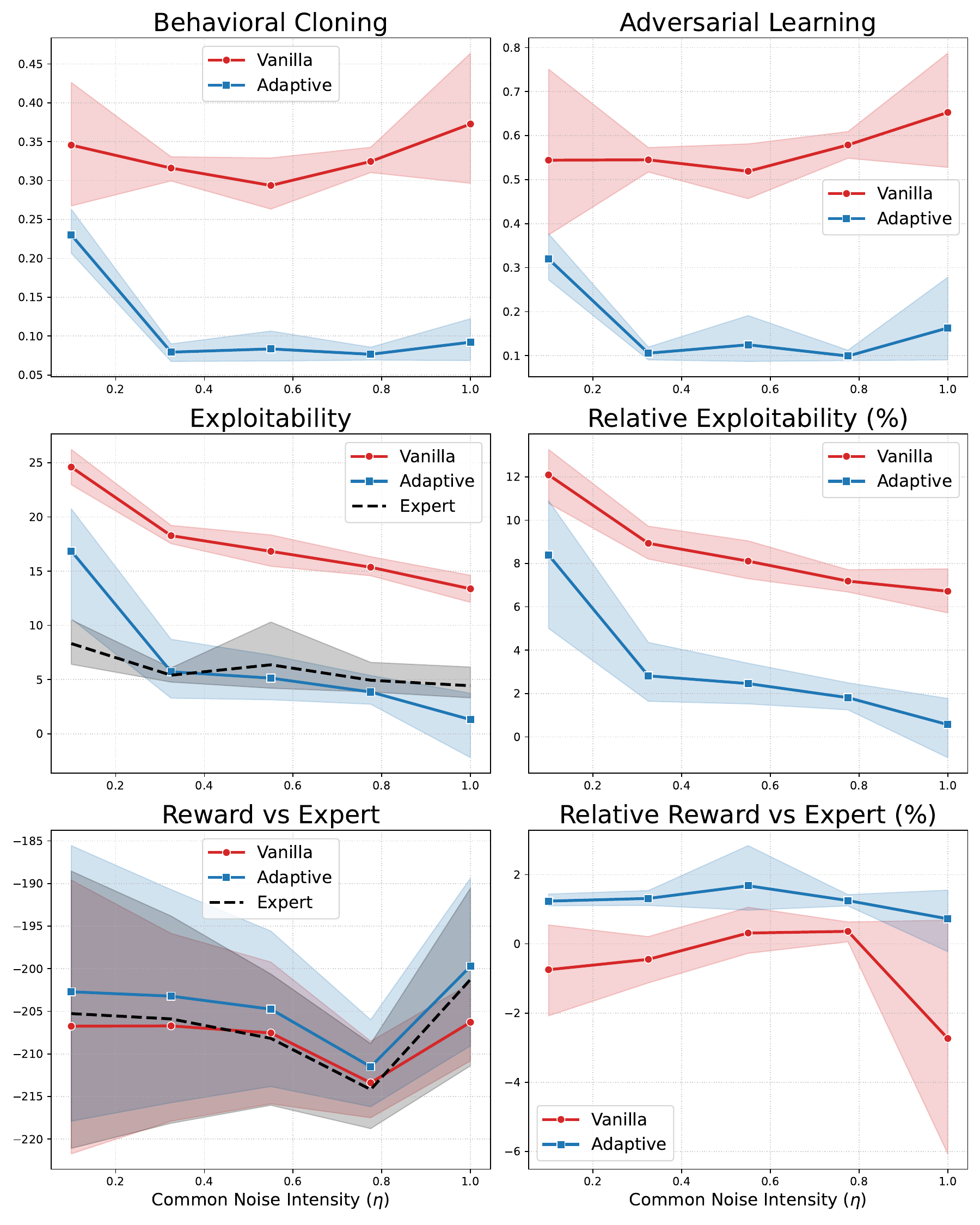}
        \caption{Fixed $\alpha = 1.05$.}
        \label{fig:nightclubs_sub_alpha=1.05}
    \end{subfigure}
    \hfill
    \begin{subfigure}{0.49\linewidth}
        \centering
        \includegraphics[width=\linewidth]{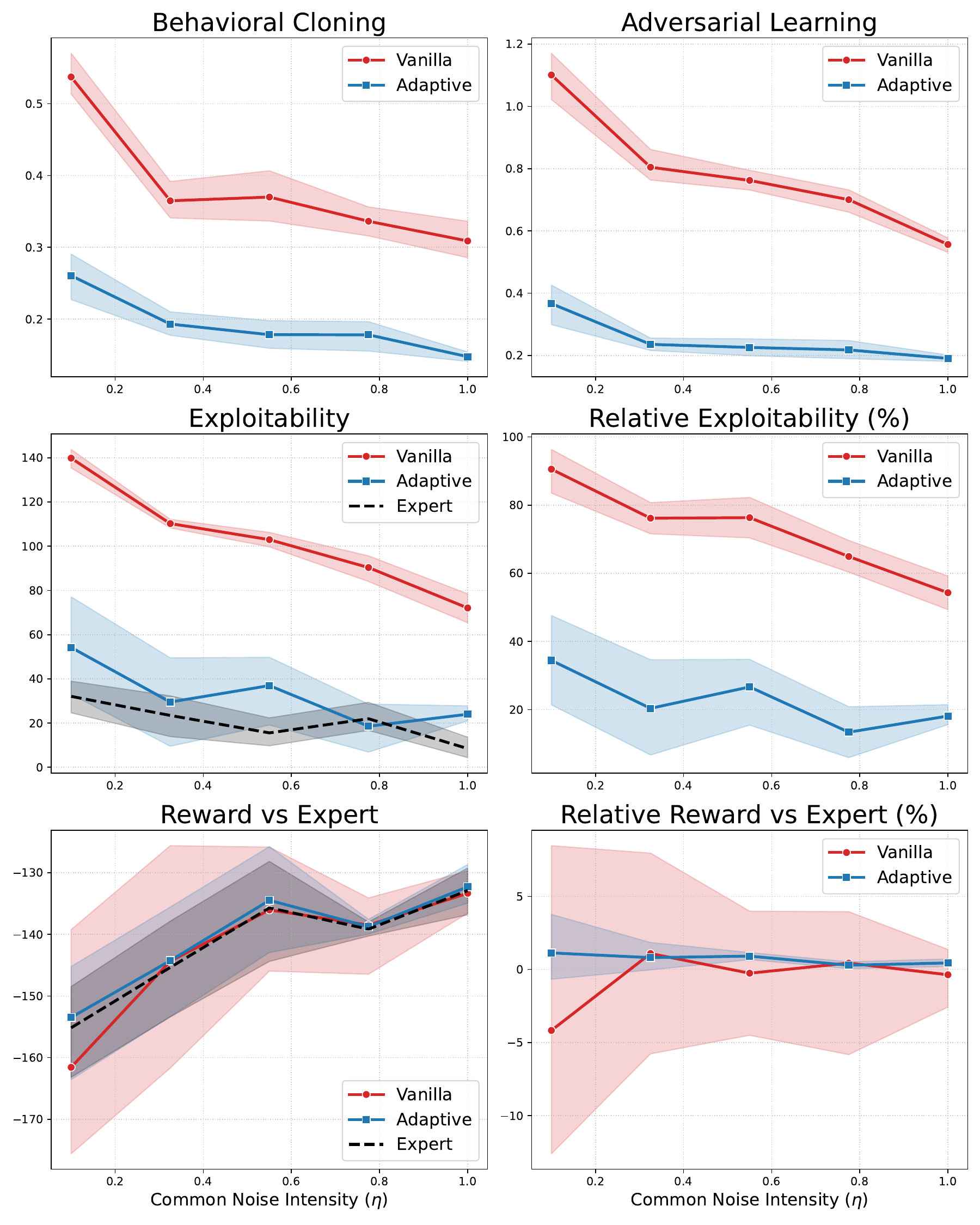}
        \caption{Fixed $\alpha = 0.1$.}
        \label{fig:nightclubs_sub_alpha=0.1}
    \end{subfigure}

    \caption{Performance metrics for 5 different runs in the Night Clubs example, with $\alpha$ fixed and $\eta$ varying.}
    \label{fig:nightclubs_comparison_side_by_side}
\end{figure}

\begin{figure}[htbp]
    \centering
    \includegraphics[width=0.9\linewidth]{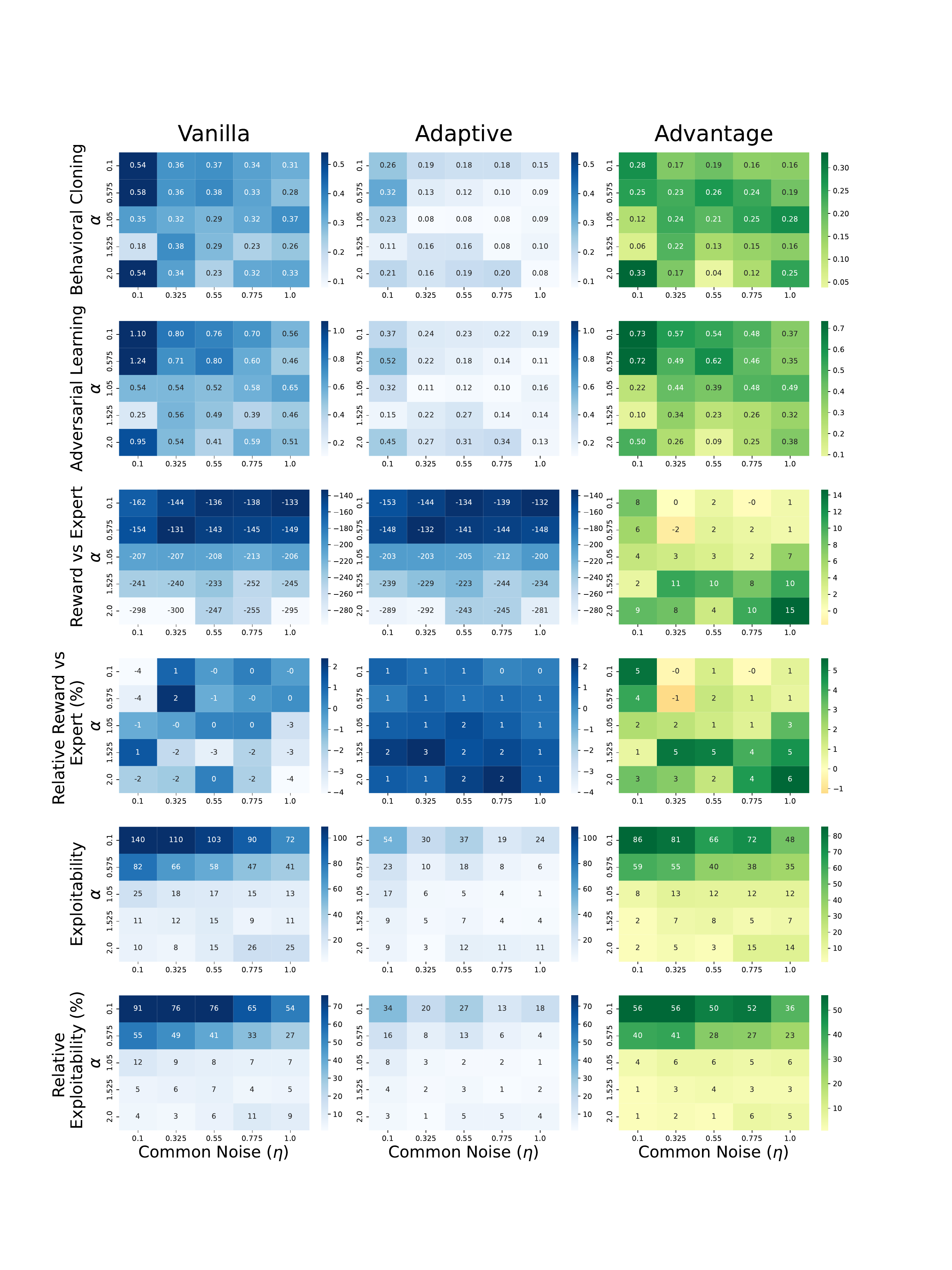}
    \caption{Night Clubs: Metrics Averages}
    \label{fig:nightclubs_grid_mean}
\end{figure}

\begin{figure}[htbp]
    \centering
    \includegraphics[width=0.9\linewidth]{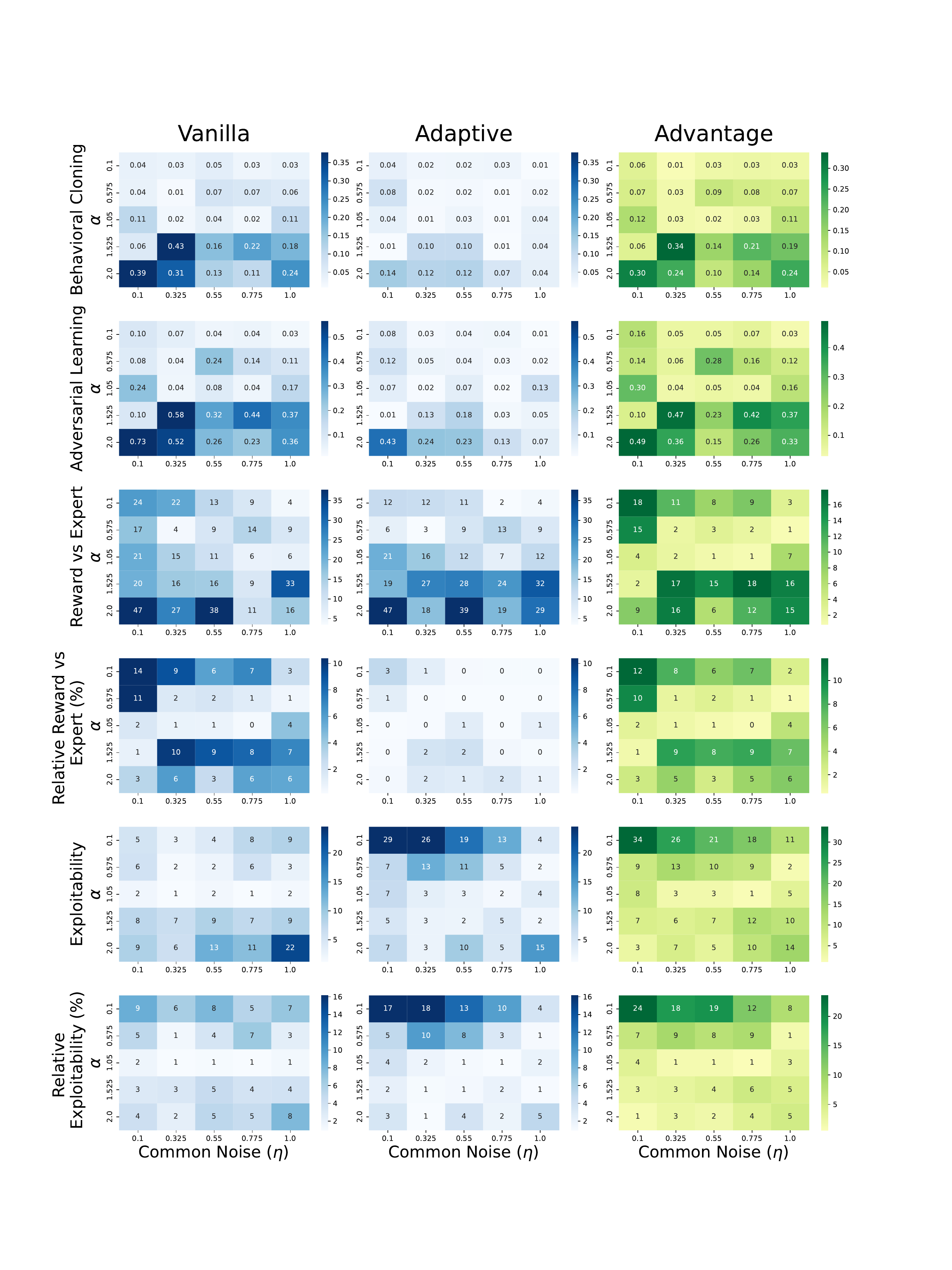}
    \caption{Night Clubs: Metrics STD.}
    \label{fig:nightclubs_grid_std}
\end{figure}

\begin{figure}[htbp]
    \centering
    \includegraphics[width=0.9\linewidth]{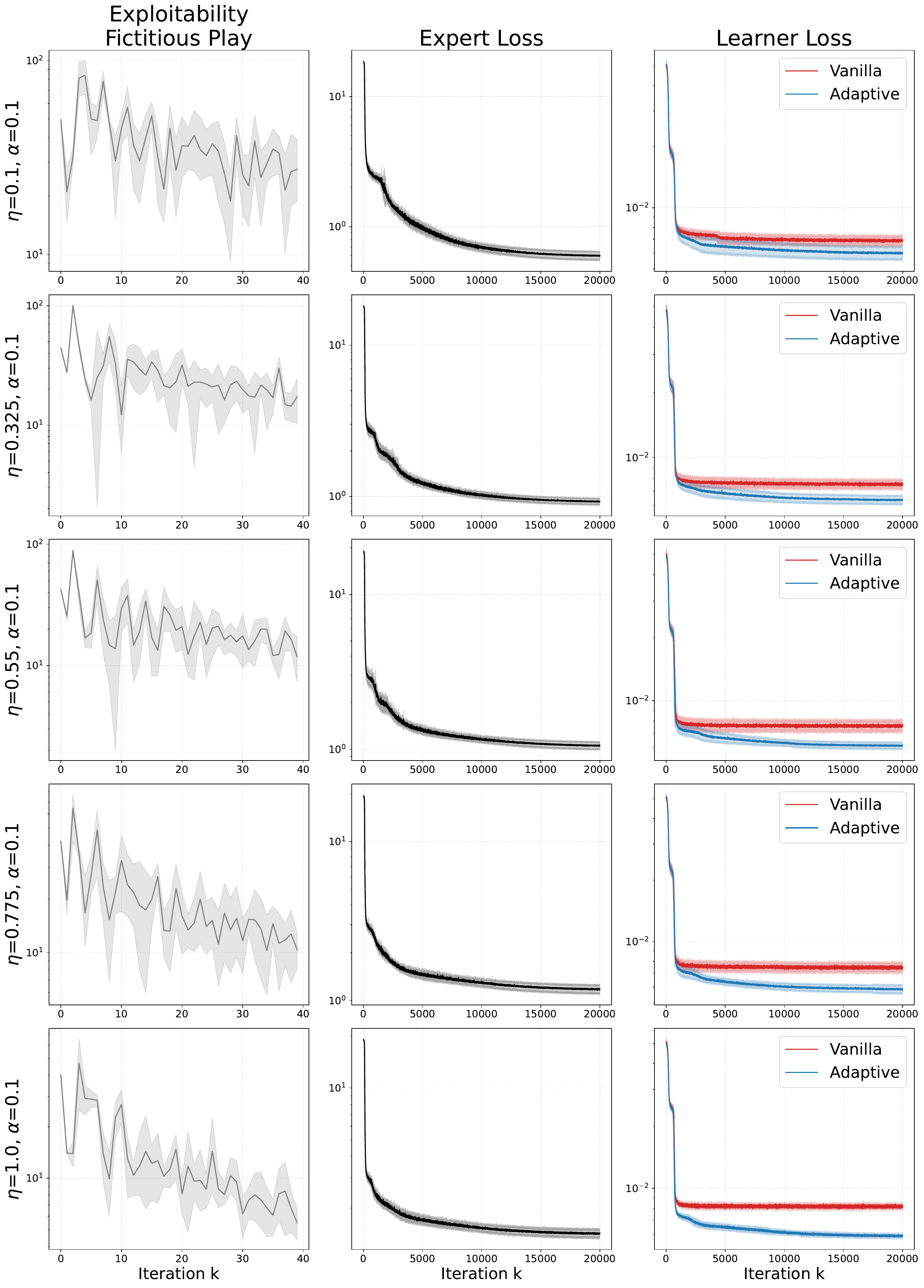}
    \caption{Night Clubs: Fictitious Play Exploitability Convergence, Expert Losses, Vanilla and Adaptive Losses ($\alpha = 0.1)$.}
    \label{fig:nightclubs_losses}
\end{figure}

\end{document}